%% file: main.tex
\documentclass[sigconf,authorversion,nonacm]{acmart}
\AtBeginDocument{%
  }
\setcopyright{none}
\renewcommand\footnotetextcopyrightpermission[1]{}

\usepackage{bm,bbm,bbold, enumitem}

\usepackage{multirow}
\usepackage{subcaption}
\usepackage{graphicx}
\usepackage{algorithm}
\usepackage{algpseudocode}
\usepackage{hyperref}
\usepackage{booktabs}
\usepackage{longtable}
\usepackage{diagbox}
\usepackage{mathtools}

\hypersetup{
	colorlinks = true,
	urlcolor = {blue},
	linkcolor = {niceRed},
	citecolor = {blue}
}
\usepackage[nameinlink]{cleveref}

\newtheorem{problem}{Problem}
\newtheorem{proposition}{Proposition}
\newtheorem{assumption}{Assumption}
\crefname{assumption}{Assumption}{Assumptions}

\def\vec{\bm}

\def\A{\mathbb A}
\def\B{\mathbb B}
\def\D{\mathcal J}
\newcommand{\uncov}[1]{\mathcal R(C_k, #1)}

\begin{document}
\title{GALACTIC: Global and Local Agnostic Counterfactuals for Time-series Clustering}

\author{Christos Fragkathoulas}
\email{ch.fragkathoulas@athenarc.gr}
\affiliation{
  \institution{University of Ioannina}
   \city{Ioannina}
  \country{Greece}
}
\affiliation{
  \institution{Archimedes, Athena Research Center}
  \city{Athens}
  \country{Greece}
}

\author{Eleni Psaroudaki}
\email{h.psaroudaki@athenarc.gr}
\affiliation{
  \institution{Archimedes, Athena Research Center}
  \city{Athens}
  \country{Greece}
}

\author{Themis Palpanas}
\email{themis@mi.parisdescartes.fr}
\affiliation{
\institution{Universit\'e Paris Cit\'e, LIPADE, F-75006}
\city{Paris}
  \country{France}
}
\affiliation{
  \institution{Archimedes, Athena Research Center}
  \city{Athens}
  \country{Greece}
}

\author{Evaggelia Pitoura}
\email{pitoura@uoi.gr}
\affiliation{
  \institution{University of Ioannina}
  \city{Ioannina}
  \country{Greece}
}
\affiliation{
  \institution{Archimedes, Athena Research Center}
  \city{Athens}
  \country{Greece}
}

\renewcommand{\shortauthors}{Fragkathoulas Christos et al.}

\begin{abstract}
Time-series clustering is a fundamental tool for pattern discovery, yet existing explainability methods, primarily based on feature attribution or metadata, fail to identify the transitions that move an instance across cluster boundaries. While Counterfactual Explanations (CEs) identify the minimal temporal perturbations required to alter the prediction of a model, they have been mostly confined to supervised settings. This paper introduces GALACTIC, the first unified framework to bridge local and global counterfactual explainability for unsupervised time-series clustering.
At instance level (local), GALACTIC generates perturbations via a cluster-aware optimization objective that respects the target and underlying cluster assignments. At cluster level (global), to mitigate cognitive load and enhance interpretability, we formulate a representative CE selection problem. We propose a Minimum Description Length (MDL) objective to extract a non-redundant summary of global explanations that characterize the transitions between clusters. We prove that our MDL objective is supermodular, which allows the corresponding MDL reduction to be framed as a monotone submodular set function. This enables an efficient greedy selection algorithm with provable $(1-1/e)$ approximation guarantees.
Extensive experimental evaluation on the UCR Archive demonstrates that GALACTIC produces significantly sparser local CEs and more concise global summaries than state-of-the-art baselines adapted for our problem, offering the first unified approach for interpreting clustered time-series through counterfactuals.
\end{abstract}

\keywords{Counterfactual Explanations, Clustering, Explainability, Time-Series}

\maketitle

\input{sections/intro}
\input{sections/related}
\input{sections/problem}
\input{sections/approach}
\input{sections/experiments}
\input{sections/conclusions}

\section{Acknowledgments}
% This work has been partially supported by project MIS 5154714 of the National Recovery and Resilience Plan Greece 2.0 funded by the European Union under the NextGenerationEU Program.

This work has been partially supported by project MIS 5154714 of the National Recovery and Resilience Plan Greece 2.0 funded by the European Union under the NextGenerationEU Program, Horizon projects TwinODIS ($101160009$) and DataGEMS ($101188416$), $Y \Pi AI \Theta A$ \& NextGenerationEU project HARSH ($Y\Pi3TA-0560901$) that is carried out within the framework of the National Recovery and Resilience Plan “Greece 2.0” with funding from the European Union - NextGenerationEU.

% This work has been partially supported by Horizon projects TwinODIS ($101160009$) and DataGEMS ($101188416$),
% $Y \Pi AI \Theta A$ \& NextGenerationEU project HARSH ($Y\Pi
% 3TA-0560901$) that is carried out within the framework of the National
% Recovery and Resilience Plan “Greece 2.0” with funding from the European
% Union – NextGenerationEU, and by project MIS 5154714 of the National Recovery and Resilience Plan Greece 2.0 funded by the European Union under the NextGenerationEU Program.
\bibliographystyle{ACM-Reference-Format}
\bibliography{main}
\clearpage
\appendix
\input{sections/appendix}
\end{document}

%% file: sections/intro.tex
\section{Introduction}
Clustering is a core tool for analyzing time-series data across domains where large collections of temporal signals must be grouped to reveal shared structure without supervision.
Applications span from finance~\cite{fu2001pattern}, neuroscience~\cite{goutte1999clustering}, robotics~\cite{oates1999clustering}, biology~\cite{tran2002fuzzy}, speech processing~\cite{niennattrakul2007clustering}, multimedia~\cite{dembele2003fuzzy}, and critical infrastructure~\cite{bariya2021k}. 
In such settings, clustering acts as a critical dimensionality reduction step, transforming massive volumes of temporal signals into a manageable set of behavioral regimes. However, the utility of these clusters is often bottlenecked by their opacity. Practitioners rarely need just a label; they require an understanding of \emph{why} a series belongs to a specific cluster and \emph{how} its behavior must evolve to transition to one. Without such actionable insights, clusters remain descriptive artifacts rather than decision-supporting tools.

Unfortunately, time-series clustering pipelines are frequently opaque in practice.
Whether due to proprietary logic, complex non-linear distance metrics (e.g., DTW), or multi-stage feature transformations, the effective assignment logic is often inaccessible~\cite{bonifati2022time2feat, schlegel2025towards}. Current explainability research broadly follows two paths: \emph{inherently interpretable models} and \emph{post-hoc surrogates}. The former is often architecture-specific, embedding explanations into the clustering model itself~\cite[e.g.,][]{bertsimas2021interpretable,boniol2025k}. The latter typically provides only static summaries or surrogate models~\cite[e.g.,][]{bernard2012guided,ozyegen2022interpretable} that fail to offer a unified path from local instance-level reasoning to compact, global cluster-level insights.

Counterfactual explanations (CEs) offer a compelling solution by identifying minimal, actionable changes to an input that induce a label flip~\cite{wachter2017counterfactual}. While in time-series classification, several CE frameworks have been proposed \cite[e.g.,][]{ates2021counterfactual,lang2023generating,bahri2022temporal,hollig2022tsevo}, existing methods often struggle with temporal coherence, frequently overwrite characteristic structure, and fail to localize changes to truly discriminative regions~\cite{lang2023generating,bahri2022temporal,hollig2022tsevo}. 
More importantly, the existing methods that are designed for supervised labels do not address the unsupervised nature of discovering transitions between latent time-series clusters or the combinatorial challenge of distilling thousands of local counterfactuals into a concise global summary.

We address these gaps with \textsc{Galactic}, \emph{a unified framework for local and global counterfactual explainability in time-series clustering, agnostic to the underlying clustering procedure}.
At the \emph{local} level, \textsc{Galactic} employs an importance-guided gradient search that restricts edits to discriminative temporal regions identified via subgroup-level permutation analysis. This projection-based optimization ensures that local explanations are not only low-cost but also structurally preserved, respecting the characteristic shape of the underlying time-series.

At the \emph{global} level, we move beyond isolated instances to identify \emph{perturbations}: adjustment patterns that explain transitions for entire populations. While traditional global recourse methods (for supervised models on tabular data) yield incomparable Pareto fronts that require manual trade-off selection due to the multiobjective nature of the problem~\cite{kavouras2025glance, ley2023globe,fragkathoulas2025facegroup}, \textsc{Galactic} reformulates global selection as an Information-Theoretic Model Selection problem. By employing the \emph{Minimum Description Length (MDL)} principle~\cite{rissanen1978modeling}, we transform the competing goals of coverage and complexity into a single, principled objective. MDL automatically identifies the ``optimal'' summary by selecting a non-redundant set of perturbations that maximizes explanatory power without arbitrary weight tuning.

Furthermore, we prove that our MDL objective behaves as a monotone submodular set function. This allows us to provide efficient greedy selection strategies with $(1 - 1/e)$ approximation guarantees, alongside a scalable hierarchical refinement mechanism that adapts to the internal density and complexity of cluster subgroups.
To summarize, our \emph{contributions} are:
\begin{enumerate}[wide]
    \item \textbf{Problem formulation.} We formalize counterfactual explainability for time-series clustering at both \emph{instance} (local) and \emph{cluster/subgroup} (global) level in a model-agnostic setting.
    \item \textbf{Local counterfactuals with structural constraints.}
    We introduce a constrained gradient optimization that utilizes subgroup and cluster specific temporal importance to produce sparse, shape-preserving counterfactuals.
    \item \textbf{Global counterfactual summaries via MDL.}
    We propose a parameter-free model selection objective based on the MDL principle for identifying the set of global perturbations, resolving the traditional multi-objective trade-off. We characterize the submodularity of the MDL objective, deriving a greedy selection strategy with guaranteed approximation bounds.
    \item \textbf{Adaptive Hierarchical Refinement.} We develop an incremental refinement mechanism that utilizes the underlying density of cluster subgroups to provide tunable global explanations. This allows for a variable number of counterfactuals that adapt to the internal structural complexity of each partition and also allows for high-dimensional scalability.
    \item \textbf{Thorough Experimental Evaluation.} We demonstrate that \textsc{Galactic} produces sparser local CFs and more concise, higher-coverage global summaries than adapted state-of-the-art baselines.
\end{enumerate}

The paper structure is: \Cref{sec:related} summarizes the related work; \Cref{sec:problem} introduces the local and global problem for counterfactual explainability in time-series clustering; \Cref{sec:galactic} presents \textsc{Galactic} framework; \Cref{sec:experiments} presents the experimental evaluation; and finally, \Cref{sec:conclusions} concludes with future work.

%% file: sections/related.tex
\section{Related Work}
\label{sec:related}
The pursuit of explainability in time-series clustering has followed two distinct paths: intrinsic model design and post-hoc explanations. Intrinsic methods aim to create ``self-explanatory'' clusters by integrating interpretability directly into the architecture. For instance, Bertsimas et al. \cite{bertsimas2021interpretable} utilize hierarchical clustering paired with decision trees to optimize variable importance, while T-DPSOM \cite{manduchi2021t} leverages variational autoencoders and self-organizing maps to maintain a topology-preserving latent space. Other approaches, such as k-Graph \cite{boniol2025k} and Time2Feat \cite{bonifati2022time2feat}, identify ``graphoids'' or a rich set of spectral and temporal features to characterize clusters.

Despite these advances, most applications rely on post-hoc explanation methods, which interpret black-box cluster assignments after they are formed. Early efforts in this space focused on metadata ranking \cite{bernard2012guided, sarkar2016visual} or training surrogate classifiers to apply local interpretability tools like SHAP or Grad-CAM \cite{ozyegen2022interpretable}. However, as noted by Ntekouli et al. \cite{ntekouli2024explaining}, these methods often suffer from a ``localization gap''. While they can identify which variables are globally important, they frequently fail to provide evidence of specific temporal segments or timesteps that drive cluster membership, often collapsing into broader feature-level summaries rather than precise temporal insights. 

Counterfactual Explanations (CEs) offer a more prescriptive alternative by identifying minimal changes needed to alter a prediction. While the foundational work by Wachter et al. \cite{wachter2017counterfactual} introduced gradient-based optimization for this purpose, in its time-series adaptation, it lacked temporal awareness, leading to ``noisy'' or non-contiguous edits. 
The field has since evolved toward localized and sparse edits. TSEvo \cite{hollig2022tsevo} employs a multi-objective evolutionary algorithm to optimize for proximity, sparsity, and plausibility using segment-based crossovers. 
Glacier \cite{wang2024glacier} introduces locally constrained counterfactuals where edits are guided by constraint vectors to favor specific intervals. 
Similarly, CEI \cite{yamaguchi2024learning} constrains edits to a fixed number of contiguous temporal intervals, using a combinatorial identity to enforce interval sparsity. 
While these methods, alongside others like Sub-SpaCE~\cite{refoyo2024sub} and Native Guide~\cite{delaney2021instance}, achieve high local fidelity in supervised classification, they are not designed for the unsupervised nature of clustering. Current counterfactual clustering explanations are often tied to specific algorithmic architectures \cite[e.g.,][]{vardakas2025counterfactual} and lack the specialized operators required for the temporal dependencies of time-series data \cite{spagnol2024counterfactual}.

\textsc{Galactic} bridges the gap between time-series counterfactual search and the unsupervised setting of clustering by operating on cluster assignments queried through a surrogate. Furthermore, while these existing frameworks are focused on the instance (local) level, \textsc{Galactic} also introduces a cluster (global) level approach. By framing the selection representative temporal ``perturbations'' as a Minimum Description Length (MDL) optimization problem, it provides a unified approach that reveals the temporal structure of clusters across multiple granularities (instance, subgroup, cluster).
At the global level, we generate a set of local CFEs as candidate perturbations cluster/subgroup-level temporal adjustment patterns, and formulate their selection as a Minimum Description Length optimization problem.
Overall, this yields concise explanations at multiple granularities (cluster, subgroup, and instance), revealing which temporal regions matter and what minimal adjustments shift meaningful portions of a cluster toward others.

%% file: sections/problem.tex
\section{Problem Definition}
\label{sec:problem}
\subsection{Preliminaries}
Let $\mathcal{T} = \mathbb{R}^T$ denote the space of univariate time series of fixed length $T$. An instance $\vec x \in \mathcal{T}$ is represented as a sequence of temporal observations $\vec x = [x_1, \dots, x_T]$. We model $x$ as a concatenation of $M$ contiguous, non-overlapping segments $\mathcal{S}(x)=\{I_1,\ldots,I_M\}$ defined by breakpoints $0=\tau_0<\tau_1<\cdots<\tau_M=T$ such that $I_m=\{\tau_{m-1}+1,\ldots,\tau_m\}$.

Given a dataset $X = \{\vec x^{(i)}\}_{i=1}^N \subset \mathcal{T}$, we consider a clustering algorithm $f: \mathcal{T} \to \{1, \dots, K\}$ that maps each instance to one of $K$ discrete clusters. The algorithm induces a partition $\mathcal{C} = \{C_1, \dots, C_K\}$ of the dataset $X$, where each cluster $C_k$ is defined as the set of instances assigned to index $k$: 
\[C_k = \{x^{(i)} \in X \mid f(x^{(i)}) = k\}\]
By definition, these clusters are disjoint ($C_i \cap C_j = \emptyset$ for $i \neq j$) and exhaustive ($\bigcup_{k=1}^K C_k = X$). 

\subsection{Probabilistic Surrogate of Cluster Assignments}
While some clustering algorithms provide an explicit decision rule, many state-of-the-art methods, particularly spectral and graph-based approaches, are \textit{transductive}, meaning they lack an out-of-sample mapping function $f$ without requiring model retraining (or learning an explicit extension) \cite{bengio2003oos,alzate2010multiway,luxburg2007tutorial}. Even for inductive methods, the underlying logic may be a ``black-box'' providing only a discrete partition $\mathcal{C}$ on $X$. To maintain a model-agnostic framework, we frame the explanation task as the problem of explaining the observed assignments by learning a probabilistic surrogate classifier $g: \mathcal{T} \to [0,1]^{K}$. Here, $g_k(\vec x)$ denotes the predicted probability that instance $\vec x$ belongs to cluster $C_k$. We define the predicted assignment as $\arg\max_{k \in [K]} g_k(\vec x)$, which we denote simply as $\arg\max g(\vec x)$. 

We frame this as a supervised learning problem on the induced partition $\mathcal{C}$. Specifically, we optimize the parameters of $g$ to maximize the log-likelihood of the observed assignments $f(\vec x^{(i)})$ over $X$ via a categorical cross-entropy objective:

$$\max_{g} \frac{1}{N} \sum_{i=1}^N \mathbf{\sum_{k=1}^K \mathbbm{1}\{f(\vec x^{(i)}) = k\} \log (g_k(\vec x^{(i)})}),$$
where $g(\vec x^{(i)}) \in \Delta^{K-1}$ is the predicted probability distribution over clusters, subject to the surrogate achieving high empirical accuracy:
$$\frac{1}{N} \sum_{i=1}^N \mathbbm{1} \{\arg\max_{k\in [K]} g(\vec x^{(i)}) = f(\vec x^{(i)})\} \approx 1.$$

Crucially, because $f$ is often defined only over the specific distribution of $X$, the surrogate $g$ is required only to capture the local decision boundaries within the existing data manifold. We make no claim of generalization to out-of-distribution samples; instead, the surrogate serves as a differentiable or queryable proxy that enables the optimization of counterfactual explanations.

\subsection{Local Counterfactual Explanations}
For an instance $x \in C_k$, a \textbf{counterfactual explanation} $\vec x^{cf}$ is a perturbed instance $\vec x^{cf} = \vec x + \vec \delta$, where $\vec \delta \in \mathbb{R}^T$ represents a \textbf{minimal perturbation} (see e.g., \Cref{fig:perturbation_example}) such that the cluster assigment of $\vec x^{cf}$ differs from the original, i.e., $\arg\max g(\vec x^{cf}) = k'$ where $k' \neq k$. We refer to $k'$ as the target cluster.
In our framework, the target cluster $k'$ is identified as the most probable alternative assignment according to the posterior distribution of the surrogate: $k' = \arg\max_{c \neq k} g(x)$.
Formally, $x^{cf}$ is the solution to an optimization problem that balances cluster validity and perturbation cost. The \textbf{perturbation cost} $\mathrm{cost}(\vec x,\vec x^{cf})$ can be defined as any distance metric, such as $\ell_p$.  By optimizing this objective, we identify the minimal temporal adjustment needed to cross the decision boundary of the surrogate model. 

\begin{figure}
    \centering
    \includegraphics[width=0.72\linewidth]{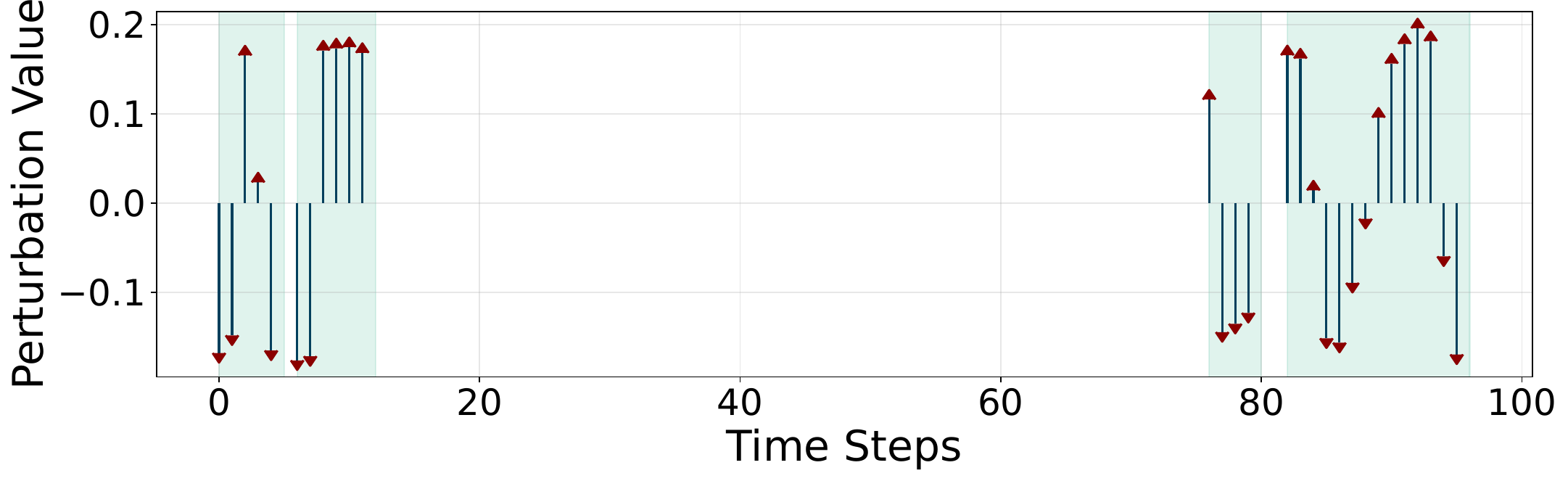}
    \caption{Example of a Perturbation $\vec \delta$}
    \label{fig:perturbation_example}
\end{figure}

\begin{problem}[Local Level Countrerfactuals]
\label{prob:local}
Given an instance $\vec x \in X$ and its surrogate assigment distribution $g(\vec x)$ with $\arg\max \ g(x) = k$, we seek the optimal counterfactual $\vec x^{cf}$ by minimizing the cost subject to a cluster-flip constraint: 
\begin{align} 
x^{cf} = \arg\min_{x'} \text{cost}(x, x') \quad \ \text{s.t.} \quad & \arg\max_{c\in [K]} g(x') = k', \quad k' \neq k  \notag
\end{align}
\end{problem}

\paragraph{Desired properties for time-series clustering counterfactual explanations} A high-quality local counterfactual for time series must satisfy \emph{two fundamental criteria} to ensure both interpretability and utility. First, it must maintain \emph{\textbf{proximity}} \cite{wachter2017counterfactual}, remaining as close as possible to the original instance under a chosen distance function to ensure the perturbation is minimal. Second, it must exhibit \emph{\textbf{sparsity}} \cite{wachter2017counterfactual} by altering only the few critical timesteps necessary to induce a change in the assignment of the surrogate.

Finally, in the \emph{specific context of clustering}, the explanation must additionally demonstrate \emph{\textbf{structural relevance}}~\cite{boniol2025k}.
Unlike supervised counterfactuals that merely seek any valid flip, clustering explanations must align with the underlying temporal patterns that define cluster membership. This requires focusing perturbations on the most discriminative segments of the time series. These high-importance regions are the characteristic temporal patterns that determine why an instance belongs to its cluster. By prioritizing changes in these segments, the counterfactual preserves the global temporal structure while highlighting the specific local features responsible for the cluster assignment.

\subsection{Global Counterfactual Explanations}
While local explanations explain why a specific time-series was assigned to a cluster, global counterfactual explanations provide a concise summary of how the instances of an entire cluster $C_k$ can cross the cluster boundaries. 
The global explanation problem is often framed as a multi-objective optimization task \cite{kavouras2025glance} and can be defined as \emph{the selection of a subset $\mathbb{S} \subseteq \mathbb{\Delta}_k$ of perturbations that provide a valid recourse (cluster flip) for the largest possible number of time-series (effectiveness) while maintaining minimal cumulative cost (average flipping cost)}. $\mathbb{\Delta}_k$ represents the possibly infinite set of all valid perturbations that can plausibly serve as good global counterfactual explanations for cluster $C_k$.

The \textbf{effectiveness} (or \textbf{coverage}) of a perturbation set $\mathbb S$ represents the fraction of time-series in $C_k$ for which at least one perturbation $\vec \delta \in \mathbb S$ results in a cluster flip:    
$$\mathrm{eff}(C_k, \mathbb S) = \frac{|\mathrm{cov}(C_k, \mathbb S)|}{|C_k|} , $$ 
where the \textbf{covered population} $\mathrm{cov}(C_k, \mathbb{S})$ consists of all $\vec{x} \in C_k$ such that $\exists \vec{\delta} \in \mathbb{S}$ where $\arg\max g(\vec{x}+\vec{\delta}) \neq \arg\max g(\vec{x})$. To quantify the effort required for this transition, we define the \textbf{flipping cost} as the minimum perturbation cost among all successful candidates in $\mathbb{S}$: 
$$\mathrm{cost}_{flip}(\vec{x}, \mathbb{S})= \min \{ \mathrm{cost}(\vec{x}, \vec{x}+\vec{\delta}) \mid \vec{\delta} \in \mathbb{S} \text{ s.t. a flip occurs} \}.$$
The \textbf{average flipping cost} serves as a global proxy for recourse effort, calculated as the mean cost over the covered population:
$$\mathrm{afc}(C_k, \mathbb{S}) = \frac{\sum_{\vec{x} \in \mathrm{cov}(C_k, \mathbb{S})} \mathrm{cost}_{flip}(\vec{x}, \mathbb{S})}{|\mathrm{cov}(C_k, \mathbb{S})|} $$

Standard multi-objective optimization of these criteria typically yields a Pareto front of non-dominated and incomparable solutions. For instance, increasing coverage typically necessitates a higher cost or a larger set of explanations \cite{fragkathoulas2025facegroup,kavouras2025glance}. 
To resolve these competing trade-offs without resorting to arbitrary weight-tuning, we employ the \emph{Minimum Description Length (MDL) principle} \cite{rissanen1978modeling}. MDL treats the selection problem as a communication task, preferring the subset of perturbations that yields the shortest joint description of the model $M$ and the observed data $D$. Formally, we seek:
\[
M^\star \;=\; \arg\min_M L(D,M),
\qquad
L(D,M) \;=\; L(M) + L(D \mid M),
\]
where $L(M)$ represents the complexity of the global explanation set, and $L(D \mid M)$ quantifies the ``unexplained'' data instances for which no selected perturbation provides a valid cluster flip.

In the MDL framework, each subset of perturbations $\mathbb{S} \subseteq \mathbb{\Delta}_k$ acts as a hypothesis to explain the transitions of instances in $C_k$. We define the \textbf{Model Code Length} $L(M)$ to penalize complexity across three dimensions: cardinality, structural sparsity, and magnitude (proximity), ensuring that the global explanation set is both effective and concise. The description length is formulated as:
\begin{align*}
L(M) &= L(\mathbb S) \\
&= \sum_{\vec \delta \in \mathbb S} \bigl( \mathrm{bits}(\Vert\vec \delta\Vert_0) + \log_2 (\Vert \vec \delta\Vert_1) \bigr) + (|\mathrm{cov}(C_k, \mathbb S)| + |\mathbb S|) \cdot p\_sz
\end{align*}
The first term, $\text{bits}(\|\vec{\delta}\|_0)$, encodes the indices of the perturbed timesteps, where a universal code for integers penalizes non-sparse edits by requiring longer bit-lengths for complex structures. The second term, $\log_2 (\|\vec{\delta}\|_1)$, captures the perturbation magnitude; following information-theoretic precision arguments, the bits required to represent a real-valued magnitude scale logarithmically, thereby favoring proximal edits. Finally, the assignment mapping term $(|\mathrm{cov}(C_k, \mathbb{S})| + |\mathbb{S}|) \cdot p\_sz$ accounts for the pointers required to associate each covered instance in $C_k$ with its ``best-fit'' perturbation in $\mathbb{S}$. This structured approach transforms the multi-objective Pareto trade-off into a single communication task, where the most informative model is the one that minimizes the total description length of the cluster transitions.

The \textbf{Data Length} $L(D \mid M)$ quantifies the ``failure to explain'' by penalizing instances that remain uncovered by the selected set $\mathbb{S}$. In an information-theoretic sense, instances that do not change clusters under any $\vec{\delta} \in \mathbb{S}$ must be encoded as outliers. To avoid the prohibitive computational cost of calculating individual counterfactuals for every outlier, we approximate their encoding cost using a ``worst-case'' reference perturbation $\vec{\delta}^\ast$, defined as the most complex candidate in $\mathbb{\Delta}_k$. The data code length is formulated as:
\begin{align*}
L(D &\mid M) = L(C_k \mid \mathbb S) = \notag \\ &=  \big|C_k \setminus \mathrm{cov}({C_k, \mathbb S})\big|  \cdot \Bigl(
\mathrm{bits}\bigl(\Vert\vec \delta^\ast\Vert_0\bigr)
+ 
\log_2 (\Vert\vec \delta^\ast\Vert_1)
 +2\cdot p\_sz\Bigr)
\end{align*}
This term acts as a dynamic penalty for low coverage: as the number of uncovered instances increases, the description length $L(D \mid M)$ grows linearly, forcing the MDL objective to favor the inclusion of more effective perturbations. By combining $L(M)$ and $L(D \mid M)$, the framework identifies the optimal $\mathbb S$, where the gain in coverage no longer justifies the added 
complexity, resolving the global selection problem without the need for manual hyperparameter tuning.

\begin{problem}[Global Level Counterfactuals] % mdl in problem
\label{prob:globalMDL}
Given cluster $C_k$, a maximum cardinality of the perturbation set $\mu$, find the optimal subset $\mathbb S^\ast \subseteq \mathbb \Delta_k$ that represents the solution of the following optimization problem:
\[
\mathbb S^\ast = \arg\min_{\mathbb S \subseteq \mathbb \Delta_k} \quad 
%\mathcal{J}(\mathbb S) =
L(\mathbb S) + L(C_k \mid \mathbb S) \quad\text{s.t.}\quad
|\mathbb S| \leq \mu\]
\end{problem}

%% file: sections/approach.tex
\section{GALACTIC Framework}
\label{sec:galactic}

\subsection{Local Approach}
Given a target cluster $k'$, we seek a counterfactual $\vec{x}^{cf}$ that minimizes a cost function while traversing the decision boundary of the surrogate. To ensure structural relevance, we guide the search using a multi-stage segmentation and importance-scoring framework.

\paragraph{Segmentation and Subgrouping} We first capture the distinctive temporal properties of cluster $C_k$ by computing instance-wise segmentations $\mathcal{S}(\vec{x})$ via change-point detection \cite{sivill2022limesegment}. To account for imperfect clustering and internal variability, we encode these segmentations into gap vectors $\gamma(\vec{x})$ (representing segment durations) and apply $K_{\text{seg}}$-medoids clustering. This yields a set of representative patterns $\mathcal{P}_k = \{ \mathcal{S}_1, \ldots, \mathcal{S}_R \}$ that summarizes the dominant temporal structures of the cluster. Each pattern $\mathcal{S}_r$ defines a subgroup $G_r$ within cluster $C_k$, providing a structured partition of the cluster.
\label{sec:segmentation_and_subgrouping}

\paragraph{Importance-Guided Perturbation} 
For every cluster $C_k$, we derive \textbf{a timestep importance mask} $\vec w \in \{0,1\}^T$ by evaluating the impact of segment permutations $\mathcal P$ on the accuracy of the surrogate model $g$. We define the importance of segment $I_m$ as $\operatorname{imp}(I_m) = | a_c - \frac{1}{B} \sum a_{c,m}^{(b)} |$, where $a_c$  is the baseline accuracy for the cluster and $a_{c,m}^{(b)} $ is the accuracy after shuffling the values of segment $I_m$ across the cluster population (in the $b$-th repetition.
We propagate segment scores to timesteps and binarize using a quantile threshold (typically the 75th percentile). This results in a sparse mask $\vec{w}$ where $w_t = 1$ only for timesteps residing within highly discriminative segments.
Similarly, we derive subgroup-specific importance vectors by restricting this permutation analysis to instances within the specific subgroup $G_r$ and its corresponding representative pattern $\mathcal{S}_r$.

Rather than a penalty in the loss function, this mask $\vec{w}$ is applied as a hard constraint directly to the gradient during optimization: $\nabla \leftarrow \nabla_{\vec{x}'} \mathcal{L} \odot \vec{w}$. This projection ensures that the optimizer is restricted from altering ``low-importance'' regions, effectively preserving the non-discriminative parts of the time-series and forcing the search to find a solution within the structural boundaries of the cluster. We consider three \textbf{strategies} for $w$ for each $\vec x$: (1)~Source-group importance, focuses on regions that define the current assignment (\Cref{fig:source}); (2) Target-cluster importance, identifies regions required to achieve membership in $C_{k'}$ (\Cref{fig:target}); and (3) Combined importance, which aligns breakpoints from both source group and target cluster to highlight mutually critical intervals (\Cref{fig:combined}). 
Detailed formalizations 
are provided in the Appendix \Cref{sec:strategies}.

\begin{figure}
  \centering
  \begin{subfigure}{0.48\columnwidth}
    \centering
    \includegraphics[width=0.95\linewidth]{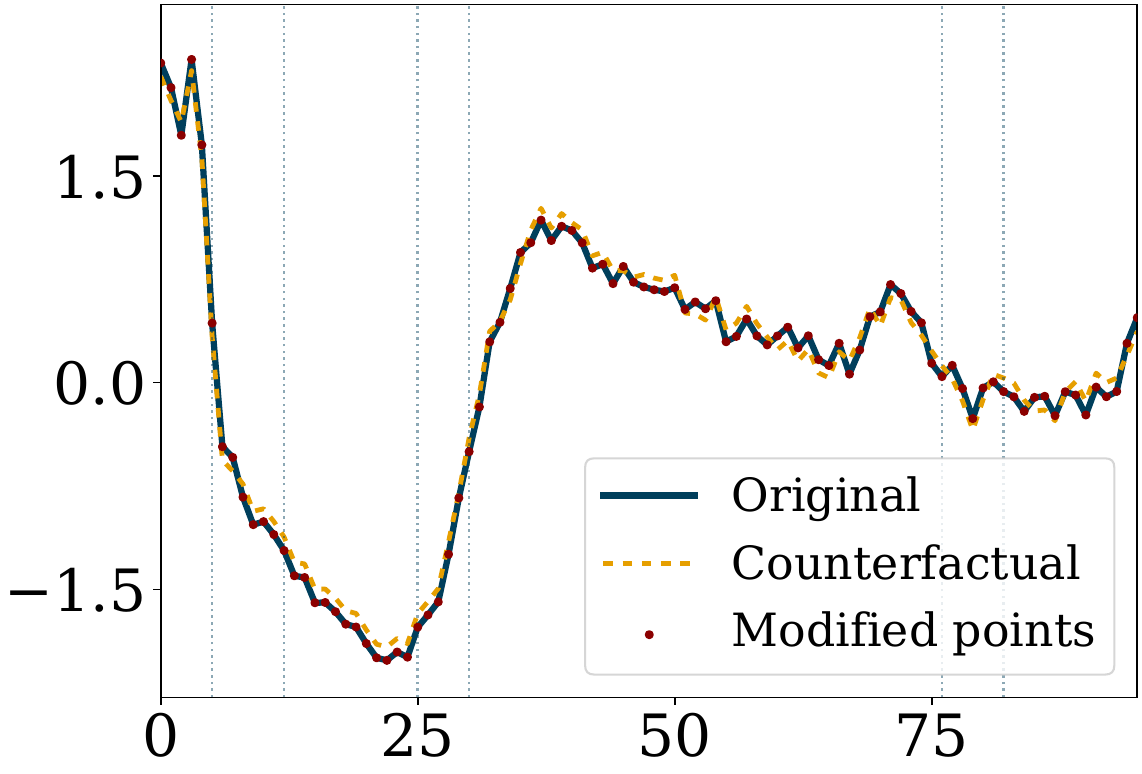}
    \caption{Baseline}
    \label{fig:baseline}
  \end{subfigure}\hfill
  \begin{subfigure}{0.48\columnwidth}
    \centering
    \includegraphics[width=0.95\linewidth]{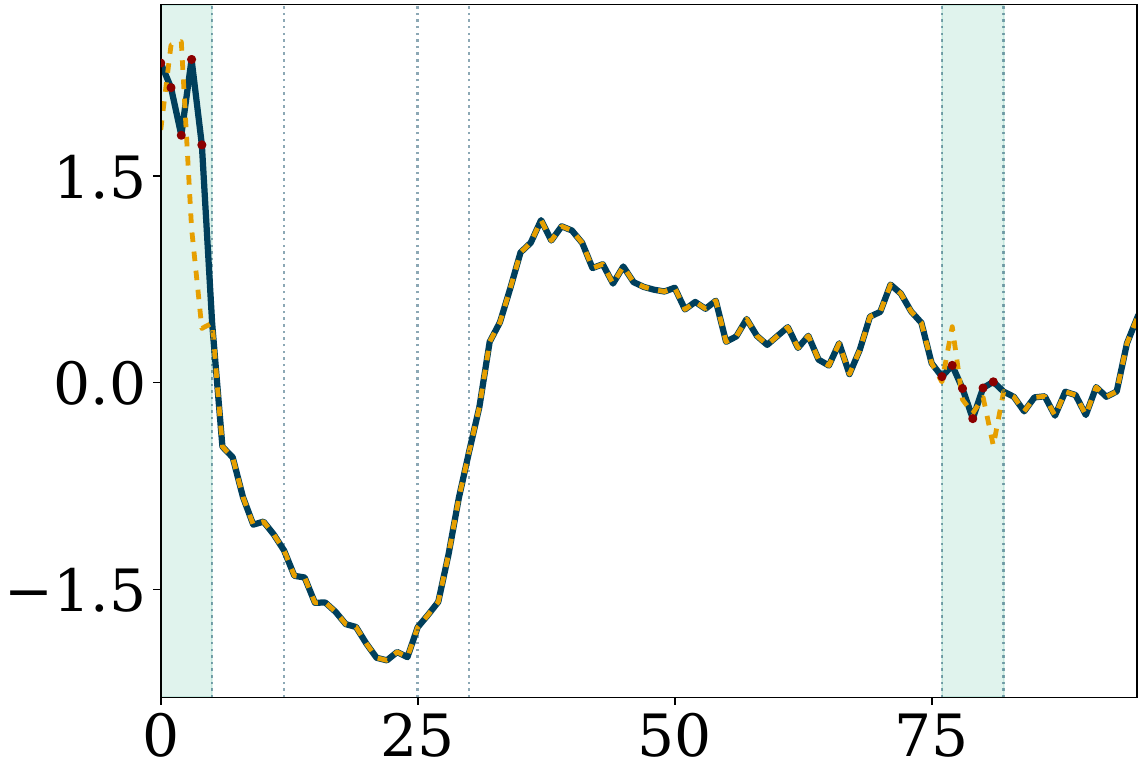}
    \caption{Source}
    \label{fig:source}
  \end{subfigure}
  \medskip
  \begin{subfigure}{0.48\columnwidth}
    \centering
    \includegraphics[width=0.95\linewidth]{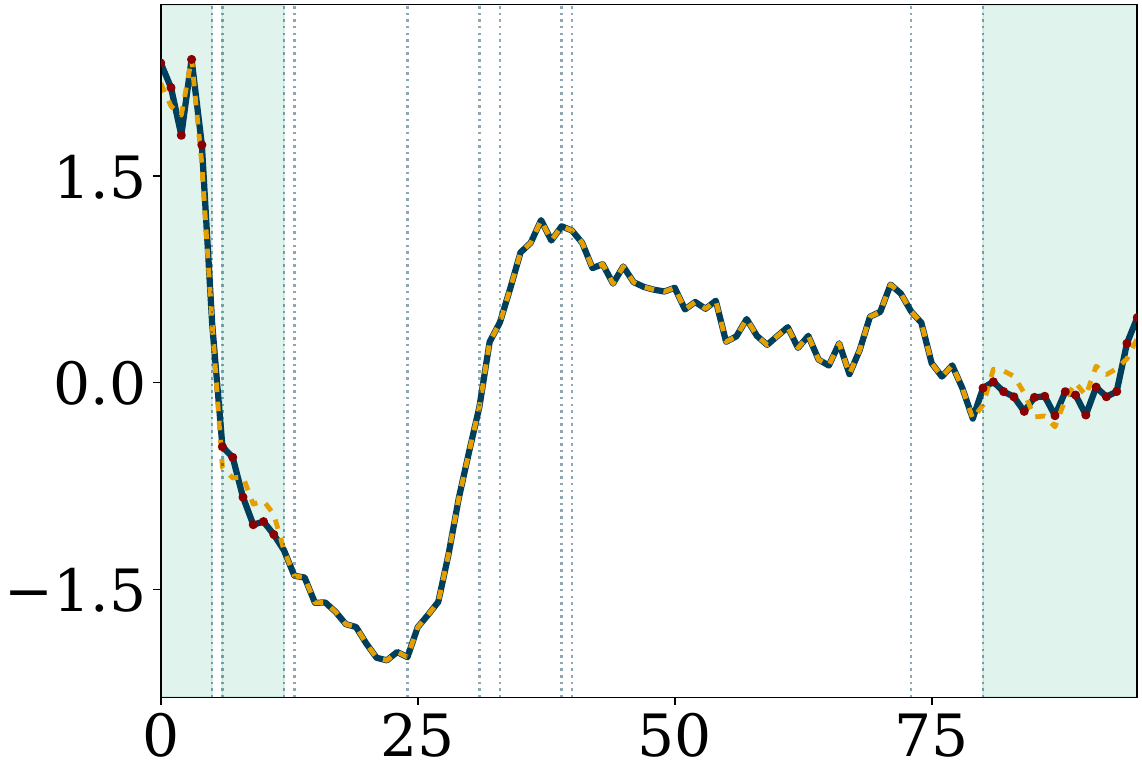}
    \caption{Target}
    \label{fig:target}
  \end{subfigure}\hfill
  \begin{subfigure}{0.48\columnwidth}
    \centering
    \includegraphics[width=0.95\linewidth]{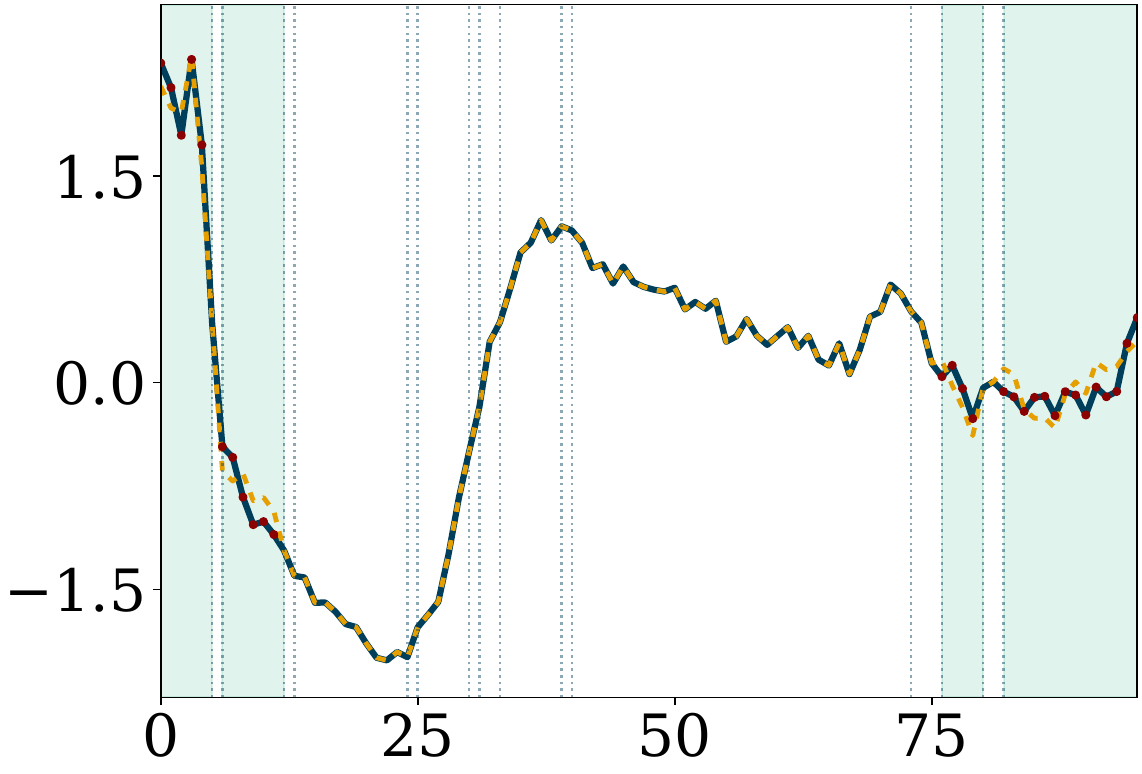}
    \caption{Combined}
    \label{fig:combined}
  \end{subfigure}
  \caption{Comparison of local explanations for a time-series generated under different importance weighting strategies.}
  \label{fig:segment_weightings}
\end{figure}

\paragraph{Gradient-Based Optimization} 
We formalize the local counterfactual search as a constrained optimization problem, as detailed in \Cref{algo:local_refined}. We initialize the search by adding small Gaussian noise $\mathcal{N}(0, \epsilon^2)$ to the original series $\vec{x}$, restricted to the important regions defined by $\vec{w}$. The objective function $\mathcal{L}$ balances a proximity term $\mathcal{L}_{prox}$, ensuring the counterfactual remains close to the original instance, and a validity loss $\mathcal{L}_{flip}$ that drives the instance across the cluster boundary. Depending on the availability of a specific target cluster $tk$, $\mathcal{L}_{flip}$ either minimizes the negative log-likelihood of the target cluster $g_t(\vec{x}')$ or maximizes the distance from the distribution of the source cluster $g_s(\vec{x}')$.

At each iteration, we compute the gradient $\nabla_{\vec{x}'} \mathcal{L}$ and apply the structural importance mask via an element-wise product $\nabla_{\vec{x}'} \mathcal{L} \odot \vec{w}$. This projection ensures that the optimization updates are strictly confined to discriminative segments, preventing perturbations in non-essential regions. We utilize the Adam optimizer to update $\vec{x}'$ until the predicted label flips and the loss converges within a tolerance $\tau$ over $p$ consecutive iterations. If multiple valid counterfactuals are found, the algorithm returns the candidate $\vec{x}^{cf}$ that minimizes the total cost, thereby satisfying the requirements for sparsity, proximity, and structural relevance.
\begin{algorithm}[t]
\caption{\textsc{Galactic-L}}
\label{algo:local_refined}
\small
\begin{algorithmic}[1]
\Require original series $\vec{x} \in \mathbb{R}^T$, surrogate $g$, 
% importance mask $\vec{w}$, 
step size $\eta$, iterations $N_{iter}$,  weighting strategy $\mathcal{W}$, tolerance $\tau$, patience $p$, jitter $\epsilon$
\Ensure counterfactual $\vec{x}^{cf}$
\Procedure{Galactic\_L}{$\vec{x}, g, \eta, N_{iter}, \mathcal{W}, \tau, p, \epsilon$}
\State $s \leftarrow \arg\max g(\vec x)$  \Comment{Source Cluster}
\State $t \leftarrow \textproc{ResolveTarget}(\vec{x}, g)$ \Comment{Target cluster or \textit{None}}
\State $\vec w \leftarrow \textproc{GetMask}(\vec{x}, s, t, \mathcal{W})$ \Comment{Derive Importance Mask}
\State $\vec{x}' \leftarrow \vec{x} + \mathcal{N}(0, \epsilon^2)\odot \vec{w}$ \Comment{Stochastic initialization}
\State $\vec{x}^{cf} \leftarrow \text{None}, \mathcal{L}_{min} \leftarrow \infty,  \mathcal{L}_{0} \leftarrow 0, \text{count} \leftarrow 0$

\For{$i=1$ to $N_{iter}$}
    \State $\mathcal{L}_{prox} \leftarrow \|\vec{x}' - \vec{x}\|_2$ \Comment{Proximity loss}
    \If{$t \neq \text{None}$} \Comment{Target-specific push}
        \State $\mathcal{L}_{flip} \leftarrow -\log(g_t(\vec{x}') + \zeta)$ \Comment{Validity loss}
    \Else \Comment{Agnostic push-away}
        \State $\mathcal{L}_{flip} \leftarrow \log(g_s(\vec{x'}) + \zeta)$ 
    \EndIf
    
    \State $\mathcal{L}_i \leftarrow \lambda_1 \mathcal{L}_{prox} + \lambda_2 \mathcal{L}_{flip}$
    
    % Evaluation of stopping criteria
    \State $\text{is\_flipped} \leftarrow (t \neq \text{None} \land \arg\max g(\vec{x}') = t) \lor (t = \text{None} \land \arg\max g(\vec{x}') \neq s)$
    
    \If{\text{is\_flipped}}
        \If{$\mathcal{L}_i < \mathcal{L}_{min}$}
            \State $\vec{x}^{cf} \leftarrow \vec{x}', \mathcal{L}_{min} \leftarrow \mathcal{L}_i$
        \EndIf
        \State $\text{count} \leftarrow \text{count} + 1 \text{ \quad if }(|\mathcal{L}_i - \mathcal{L}_{i-1}| < \tau)  \text{ else } 0$
    \EndIf
    \If{$\text{count} \geq p$} \textbf{break} \ \EndIf
    \State $\nabla \leftarrow \nabla_{\vec{x}'} \mathcal{L} \odot \vec{w}$ \Comment{Apply structural mask to gradient}
    \State $\vec{x}' \leftarrow \text{AdamStep}(\vec{x}', \nabla, \eta)$ \Comment{Update $\vec{x}'$}
\EndFor
\State \Return $\vec{x}^{cf}$
\EndProcedure
\end{algorithmic}
\end{algorithm}

\subsection{Global Approach}
The global counterfactual explanation problem requires selecting a small set of perturbations that induces cluster transitions for the largest possible fraction of the population. Since the space of all valid temporal perturbations $\mathbb{\Delta}$ is theoretically infinite and not explicitly defined, we must construct a pool of physically plausible and structurally relevant perturbations.

\paragraph{Candidate Generation}
We leverage \textsc{Galactic-L} (\Cref{algo:local_refined}) to generate a finite candidate pool $\Delta_k = \{\vec{\delta}_1, \dots, \vec{\delta}_M\}$ by computing the perturbations $\vec{\delta}_i = \vec{x}_i^{cf} - \vec{x}_i$ obtained from a representative subset of instances in cluster $C_k$. This ensures that every candidate in our pool is a valid, gradient-verified transition that respects the cluster-specific structural constraints $\vec{w}$. By using these local ``best-responses'' as our search space, the global selection problem transforms into finding a subset $\mathbb{S} \subseteq \Delta_k$ that explains the transitions of the entire cluster through the shortest joint description length.

To ensure the candidate pool $\Delta_k$ is both computationally tractable and distributionally representative, we must carefully select the instances used for generation. Simple centroids are insufficient for time-series data as they often fail to capture the multi-modal nature of temporal clusters and may result in patterns that do not exist in the raw data. Instead, we select a distributional coreset of representative instances using the MMD-Critic algorithm~\cite{kim2016examples}.

MMD-Critic utilizes the Maximum Mean Discrepancy (MMD) to quantify the distance between the cluster distribution and the subset distribution in a Reproducing Kernel Hilbert Space (RKHS)~\cite{gretton2012kernel}. This allows us to identify a set of prototypes, which represent the dominant temporal patterns of the cluster, and criticisms, which represent low-density regions and outliers that prototypes fail to capture. By generating perturbations from both prototypes and criticisms, we ensure the candidate pool $\Delta_k$ contains transitions for both the ``average'' and the ``edge cases'' of the cluster manifold, providing a robust foundation for the subsequent MDL selection.

\paragraph{Algorithmic Variants} While the MDL objective can evaluate any subset size, interpretability requirements dictate a limit of at most $\mu_k$ global perturbations per cluster. We present four variants of the selection process. A detailed comparative evaluation of the quality-runtime Pareto frontier for these is provided in the \Cref{sec:galactic_g_varied_m}.

\emph{\textbf{1. Optimal.}}
The Optimal approach (App. \Cref{alg:optimal}) performs an exhaustive search over the power set of candidates of cardinality at most $\mu$. It evaluates $\sum_{i=1}^{\mu} \binom{|\Delta_k|}{i}$ combinations to find the global minimum of the joint description length $L(C_k, \mathbb S)$. While computationally prohibitive for large sets, due to its complexity ($\mathcal O (|\Delta_k|^{\mu_k})$), it serves as the ground-truth baseline for small-scale patterns.

\emph{\textbf{2. Greedy.}}
The Greedy approach (App. \Cref{alg:greedy}) provides an efficient approximation of the optimal solution by iteratively incorporating the perturbation that yields the maximal marginal gain in compression. Under mild assumptions (see \Cref{sec:appendix_proofs}), the MDL objective behaves as a supermodular function, rendering the reduction in total description length a monotone submodular set function. This property ensures that the greedy procedure achieves the $(1 - 1/e)$ approximation guarantee for submodular maximization~\cite{hochba1997approximation} with a computational complexity of $\mathcal{O}(\mu_k |\Delta_k|)$. The search terminates when the budget $\mu_k$ is reached or an intrinsic stopping criterion is met: if the marginal bit-cost of encoding a new perturbation exceeds its coverage benefit, the algorithm stops to preserve model parsimony and avoid overfitting.

\emph{\textbf{3. Hierarchical.}} The Hierarchical approach (\Cref{alg:hierarchical}) decomposes the selection into two phases: subgroup level selection followed by cluster-level refinement. This architecture supports two distinct operational modes:
\textbf{(a) Hierarchical Optimal.} It employs exhaustive search at the subgroup level to identify the exact MDL-minimizing perturbations for each $G \subset C_k$.With $m_G$ candidates per group and a subgroup budget $\mu_r$, Phase 1 requires $\sum_{G} \sum_{s=1}^{\mu_r} \binom{m_G}{s}$ evaluations. The resulting winners form a reduced set $\widetilde{\Delta}_k$ of size $m'$, which is then refined via a second exhaustive search requiring $\sum_{s=1}^{\mu_k} \binom{m'}{s}$ evaluations. This significantly reduces the search space compared to a flat optimal search, though it sacrifices global optimality for local precision.
\textbf{(b) Hierarchical Greedy.} This approach is designed for maximum scalability and utilizes the submodular greedy strategy at both levels. At the subgroup level, it requires $\sum_{G} \mu_r  m_G$ evaluations to populate the reduced set $\widetilde{\Delta}_k$. The final cluster-level refinement adds $\mu_k  m'$ evaluations. This variant drastically reduces the number of MDL assessments, especially when $m' \ll |\Delta_k|$, making it the most suitable approach for high-dimensional time series and massive datasets.

\begin{algorithm}[t]
\caption{\textsc{Galactic-G}: Hierarchical}
\label{alg:hierarchical}
\small
\begin{algorithmic}[1]
\Require Cluster $C_k$, groups $\mathcal{G}_k$, candidate perturbations $\Delta_{C_k}$, budget $\mu_k$, strategy $\text{algo} \in \{\texttt{optimal, greedy}\}$ 
\Ensure Global perturbation set $\mathbb{S}_{C_k}$
\Procedure{Hierarchical}{$C_k, \mathcal{G}_k, \Delta_{C_k}, \mu_k, \text{algo}$}:
    \State $\widetilde{\Delta}_{C_k} \gets \emptyset$ \Comment{Pool of candidate perturbations from group selections}
    \Statex \textbf{// Phase 1: Subgroup level selection}
    \ForAll{$G_r \in \mathcal{G}_k$}
        \State $\Delta_{G_r}\gets \textproc{GetPerm}(\Delta_{C_k}, G_r, g)$ \Comment{Get applicable candidate}
        \State $\mu_r \gets \lceil \frac{|G_r|}{|C_k|} \mu_k \rceil \cdot |\mathcal{G}_k|$ \Comment{Proportional budget allocation}
        \State $\widetilde{\Delta}_{C_k} \gets \widetilde{\Delta}_{C_k} \cup \; \textproc{SelectPerm}({\Delta}_{G_r},G_r,\mu_r, \text{algo})$ \Comment{Group MDL}
    \EndFor
    \Statex \textbf{// Phase 2: Cluster level refinement}
    \State $\mathbb{S}_{C_k} \gets \textproc{SelectPerm}(\widetilde{\Delta}_{C_k},C_k,\mu_k, \text{algo})$ \Comment{Refined cluster MDL}
    \State \Return $\mathbb {S}_{C_k}$ 
\EndProcedure
\Statex
\Procedure{SelectPerm}{$\Delta, C, \mu, \text{algo}$}:
\If{algo = \texttt{optimal}} 
\State \Return \textproc{Optimal}($\Delta, C, \mu$) \Comment{Exhaustive search for small $|\Delta|$}
\ElsIf{algo = \texttt{greedy}}
\State \Return \textproc{Greedy }($\Delta, C, \mu$)  \Comment{Greedy MDL}
\EndIf
\EndProcedure   
\end{algorithmic}
\end{algorithm}

%% file: sections/experiments.tex
\section{Experimental Evaluation}
\label{sec:experiments}
To evaluate \textsc{Galactic}, we conduct experiments on diverse datasets from the UCR Time-Series Archive~\cite{dau2019ucr} addressing four research directions: (i) \textbf{Target Selection Analysis}: Validating the superiority of the \textit{second possible cluster} policy over \textit{random} selection; 
(ii) \textbf{Weighting Strategy Efficacy}: Demonstrating the  performance of \textit{Combined} segment importance; 
(iii) \textbf{Local Evaluation}: Benchmarking \textsc{Galactic-L} against state-of-the-art local counterfactual generators; 
(iv)  \textbf{Global Evaluation}: Benchmarking \textsc{Galactic-G} approaches against adapted baselines for clustered series.

\paragraph{Datasets and Models.}
We evaluate \textsc{Galactic} on 30 datasets from the UCR Time Series Archive \cite{dau2019ucr}, spanning multiple domains (e.g., Image, Motion, Sensor). Although the UCR archive is primarily used for classification benchmarks, it is also used for clustering \cite{paparrizos2024bridging,fotakis2024efficient}.
We treat the ground-truth labels of each UCR dataset as the cluster partition (i.e., each class defines a cluster $C_k$), while subgroups are obtained by clustering the segmentation gap vectors within each cluster as described in Section~\ref{sec:segmentation_and_subgrouping}.
Appendix \Cref{tab:dataset_summary} summarizes the selected datasets, including their size, length, number of clusters, domain type, and the classification accuracy achieved by the surrogate model.
All surrogate models follow an 80\% to 20\% train-test split.
Detailed parameter selection, architectural specifications, hardware configurations, and datasets are described in \Cref{sec:technical_details}.
The source code of \textsc{Galactic} is available upon request.

\paragraph{Evaluation Metrics.}  To assess the performance of the generated counterfactuals, we report five standard measurements across all experiments:
(1)~\textbf{Effectiveness (eff)}: the percentage of instances where the search successfully flips the prediction to the target cluster;
(2)~\textbf{Average Flipping Cost (afc)}: the $L_2$ distance between the original series $x$ and the counterfactual $x^{cf}$, representing the minimality of the edit;
(3)~\textbf{Average Changed Segments (acs)}: the average number of segments in the series that were modified, i.e., the \textit{localized} changes within contiguous regions rather than scattered ones across disjoint timesteps;
(4)~\textbf{Average Changed Timesteps (act)}: the average number of individual timesteps modified, reflecting fine-grained interpretability; and
(5)~\textbf{Runtime (RT)}: the total time required to generate explanations.

\subsection{Search Policy: Target Selection}
\label{sec:target_policy_main}
For \textsc{Galactic-L}, we perform a comparative evaluation of two target selection policies: {\textbf{second possible}} (selecting the cluster with the next-highest surrogate probability), and \textbf{random} (randomly selecting a target cluster).
To compare policies across datasets, we summarize their behavior using the aggregate measurement \textit{Gain/Loss} that reflects whether a policy improves or worsens a metric on average relative to another policy (a Gain indicates a higher success rate for the method reported in the y-axis, lower costs (flipping cost, changed segments/timesteps, and runtime)).
Details can be found in the \Cref{sec:target_policies}, as well as the evaluation of a third selection policy \textbf{all\_random} (no explicit target selection) in the \Cref{sec:target_selection_appendix}.
As shown in \Cref{fig:direction_scatter}, the \textit{second possible} policy consistently achieves the highest effectiveness while minimizing average flipping cost.
In contrast, \textit{random} exhibit higher average flipping cost distances, a higher number of changed segments, and lower effectiveness, validating that the \textsc{Galactic} objective effectively leverages the latent topology of the classifier to find efficient transition paths.
\begin{figure}
  \centering
  \begin{subfigure}{0.5\columnwidth}
    \includegraphics[height=0.80\linewidth]{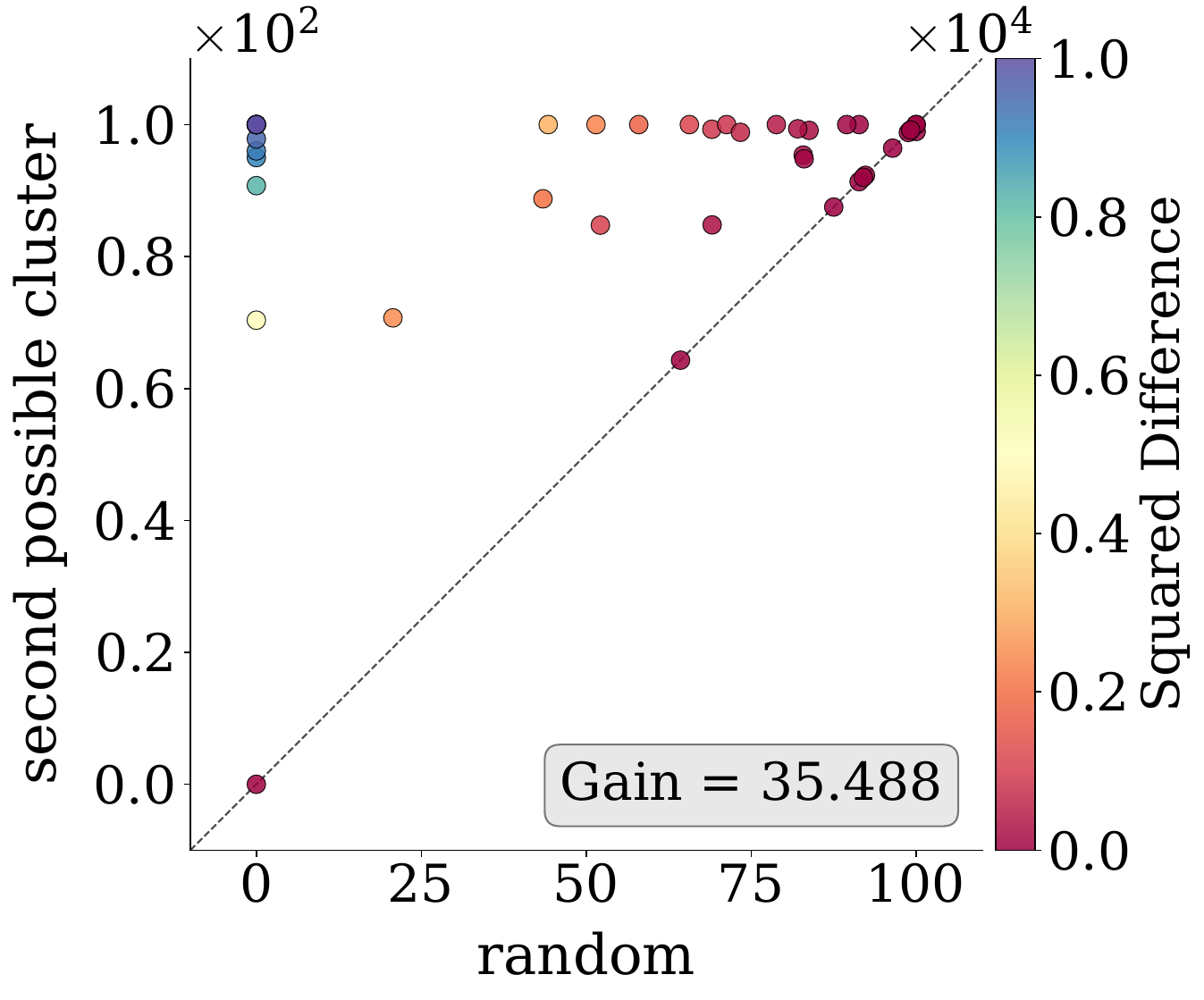}
    \caption{Effectiveness}
  \end{subfigure}\hfill
  \begin{subfigure}{0.5\columnwidth}
    \includegraphics[height=0.80\linewidth]{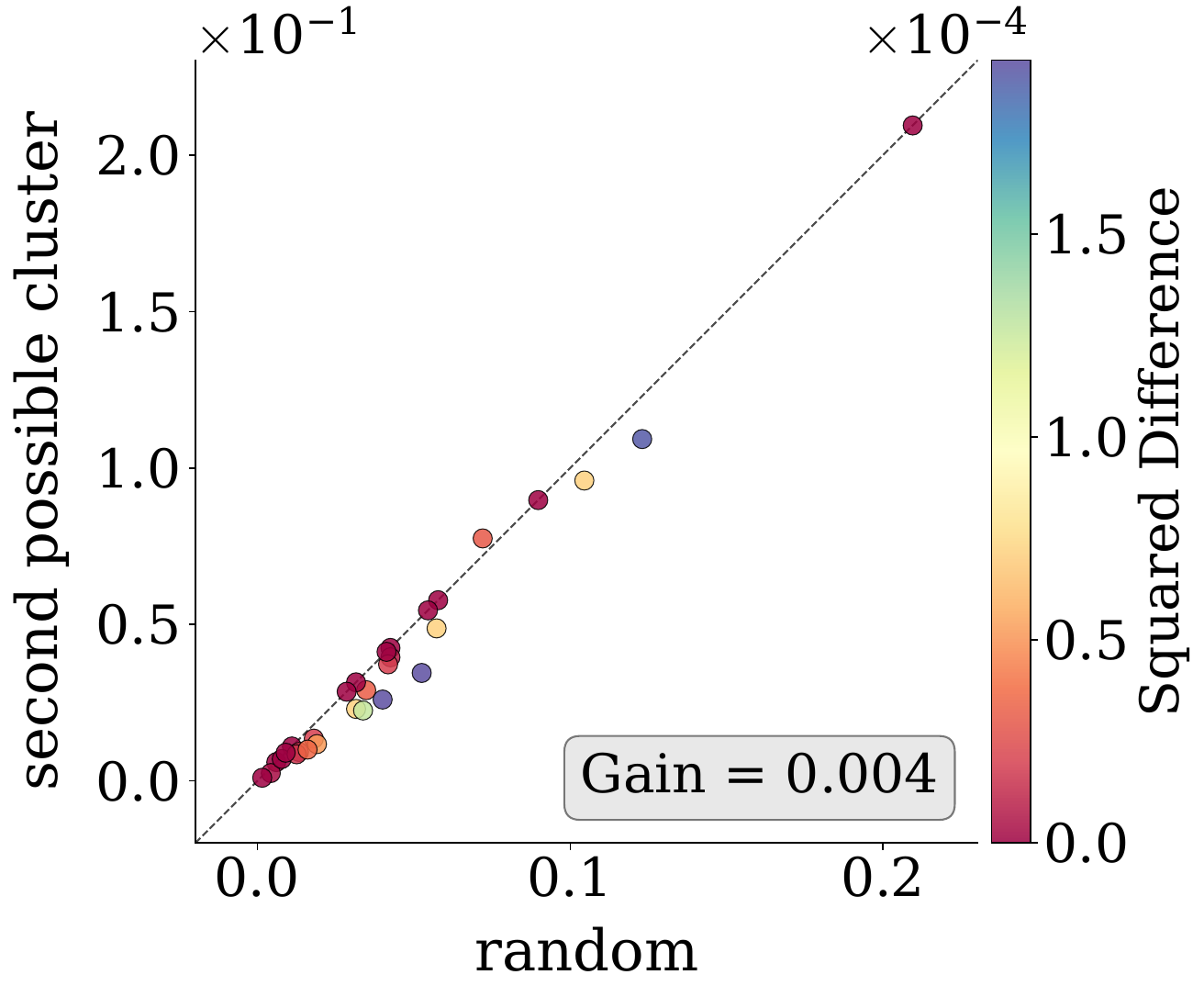}
    \caption{Avg. Flipping Cost}
  \end{subfigure}
  \medskip
  \begin{subfigure}{0.5\columnwidth}
    \includegraphics[height=0.80\linewidth]{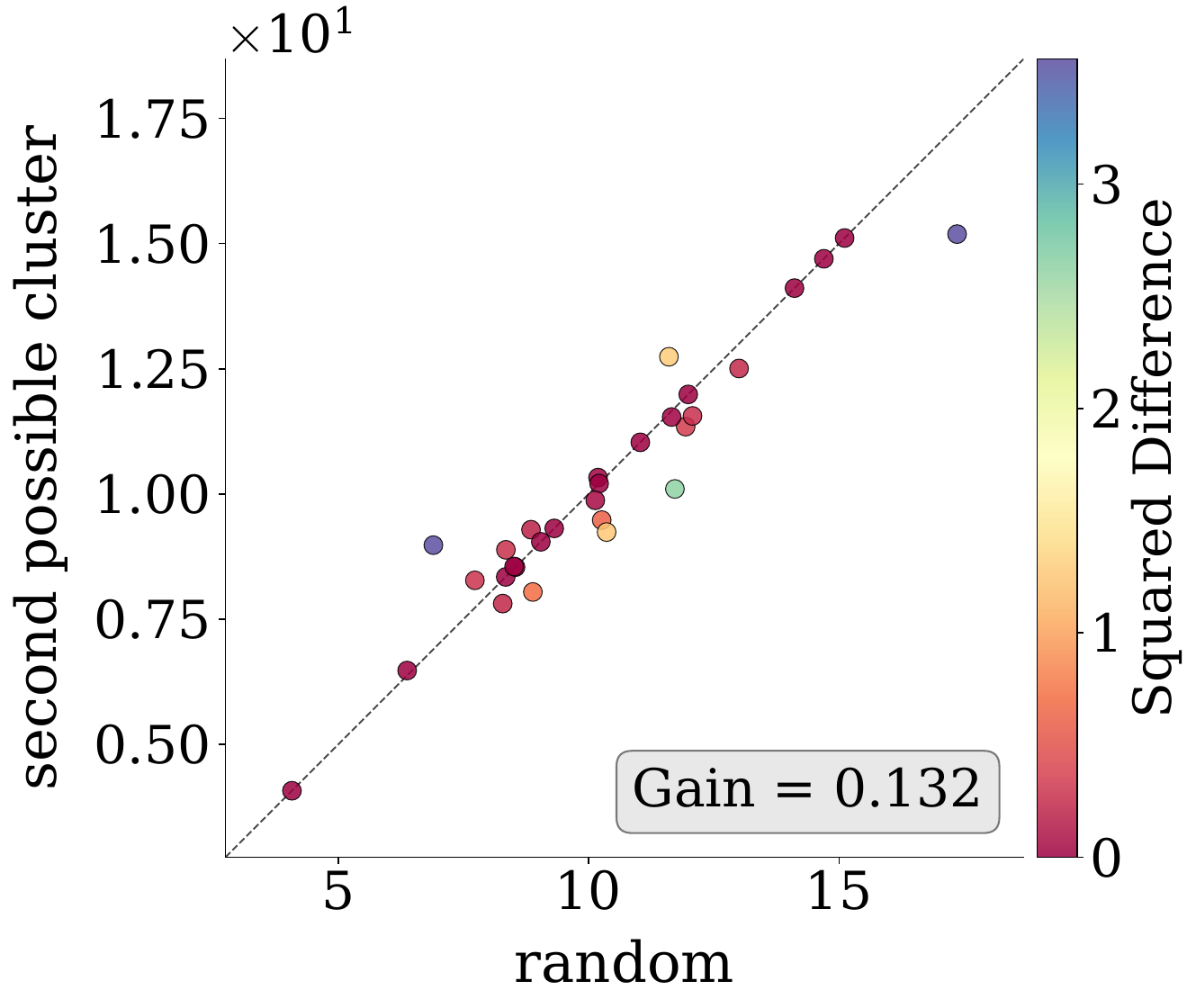}
    \caption{Avg. Changed Segments}
  \end{subfigure}\hfill
  \begin{subfigure}{0.5\columnwidth}
    \includegraphics[height=0.80\linewidth]{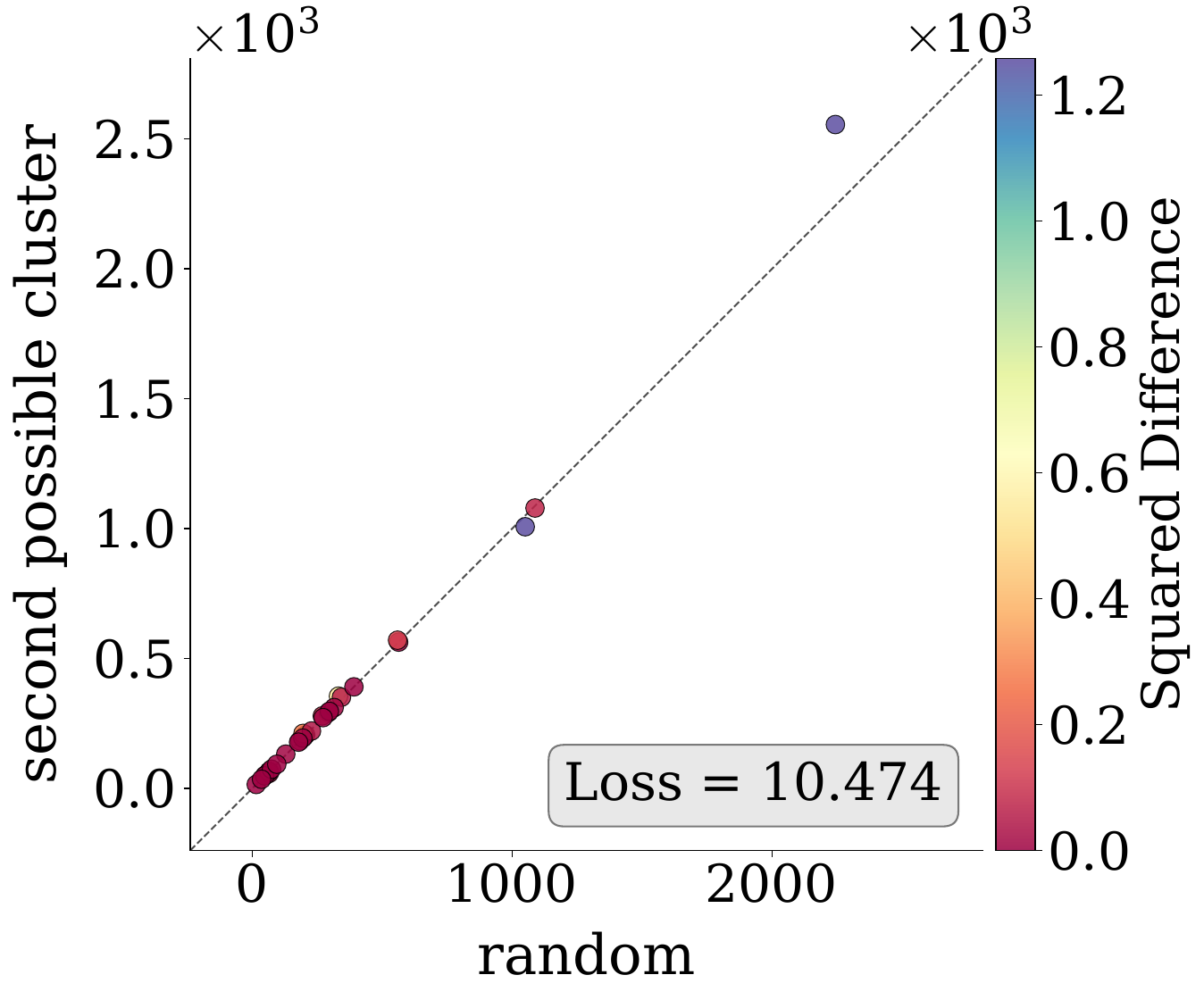}
    \caption{Avg. Changed Timesteps}
  \end{subfigure}
  \caption{\textsc{Galactic-L} Target Selection Analysis. Comparison of second possible and random policies across UCR datasets.}
  \label{fig:direction_scatter}
\end{figure}

\subsection{Importance Weighting: Sparsity and Validity}
\label{sec:importance_weighting_main}
We benchmark our three weighting strategies $\mathcal{W}$ to assess their impact on counterfactual quality against the  \textbf{Baseline} (no masking):  \textbf{Source}, \textbf{Target}, and \textbf{Combined}. 
As shown in \Cref{fig:weighting_results}, the Combined strategy yields the optimal trade-off between cluster-flip success and interpretability. While the Baseline often achieves high success rates, it suffers from poor sparsity, modifying irrelevant temporal regions. Conversely, the Combined mask restricts perturbations to segments critical to both source and target clusters, significantly reducing the number of changed segments (as evidenced by the lower average and narrower standard deviation in sparsity metrics) without compromising the cost minimality of the resulting counterfactual.

\begin{figure}
  \centering
  \begin{subfigure}{0.5\columnwidth}
    \centering
    \includegraphics[width=0.91\linewidth]{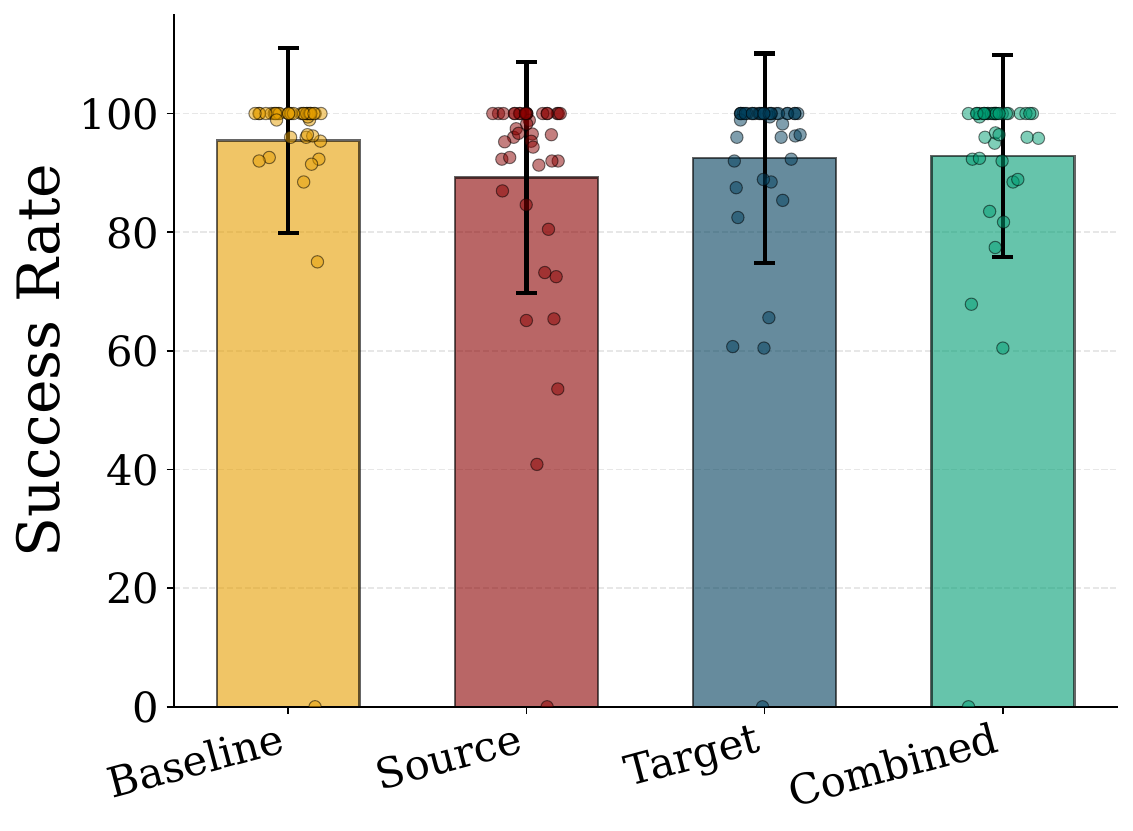}
    \caption{Effectiveness}
  \end{subfigure}\hfill
  \begin{subfigure}{0.5\columnwidth}
    \centering
    \includegraphics[width=0.91\linewidth]{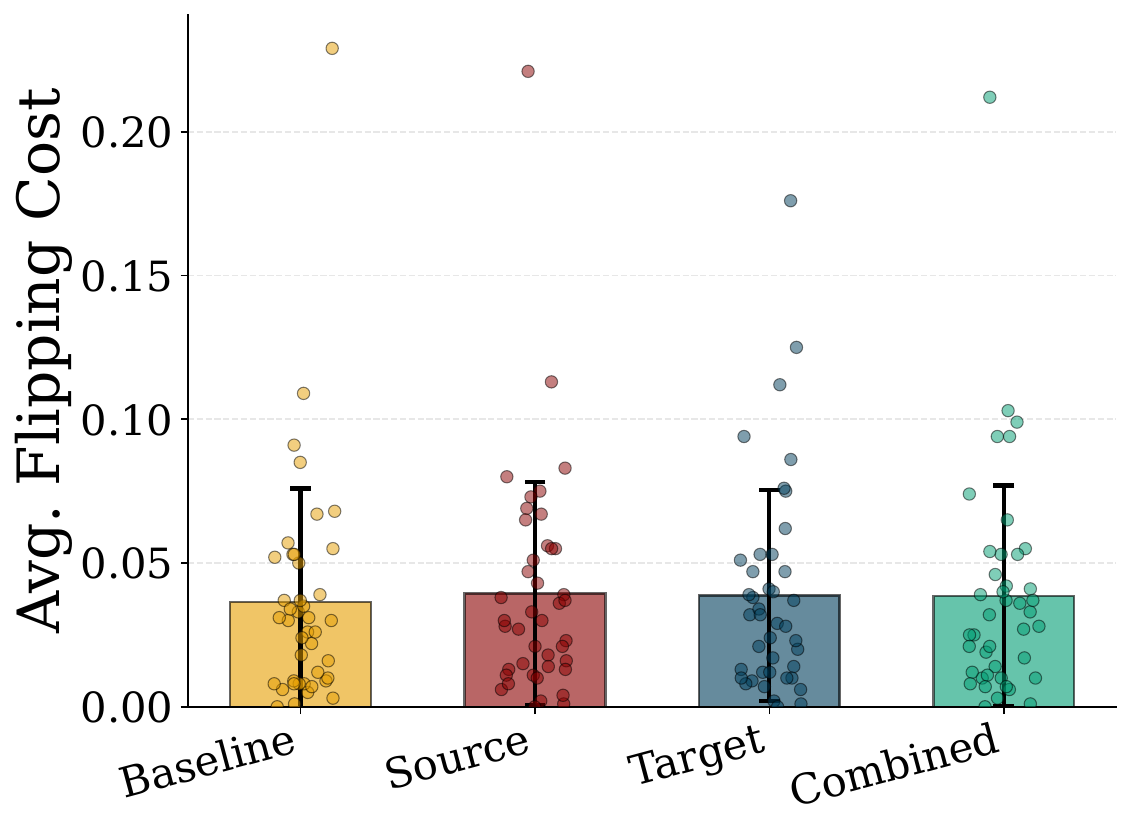}
    \caption{Avg. Flipping Cost}
  \end{subfigure}
  \begin{subfigure}{0.5\columnwidth}
    \centering
    \includegraphics[width=0.91\linewidth]{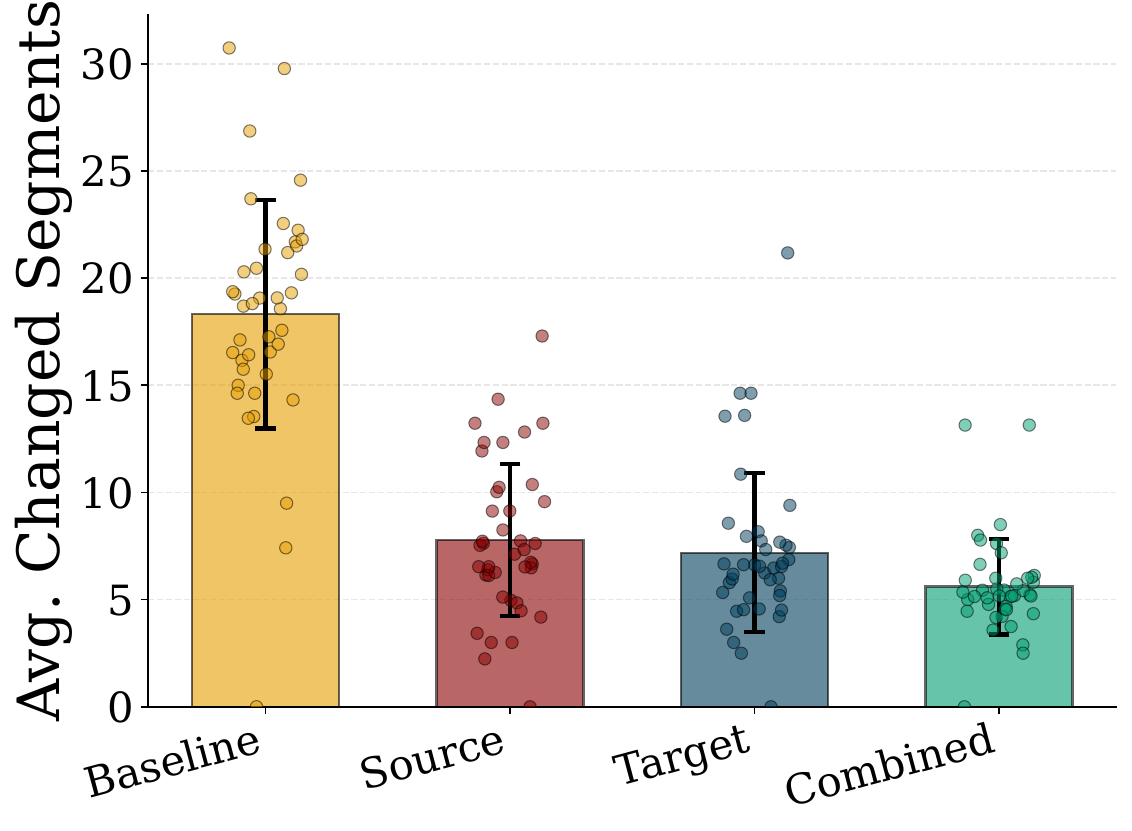}
    \caption{Avg. Changed Segments}
  \end{subfigure}\hfill
  \begin{subfigure}{0.5\columnwidth}
    \centering
    \includegraphics[width=0.91\linewidth]{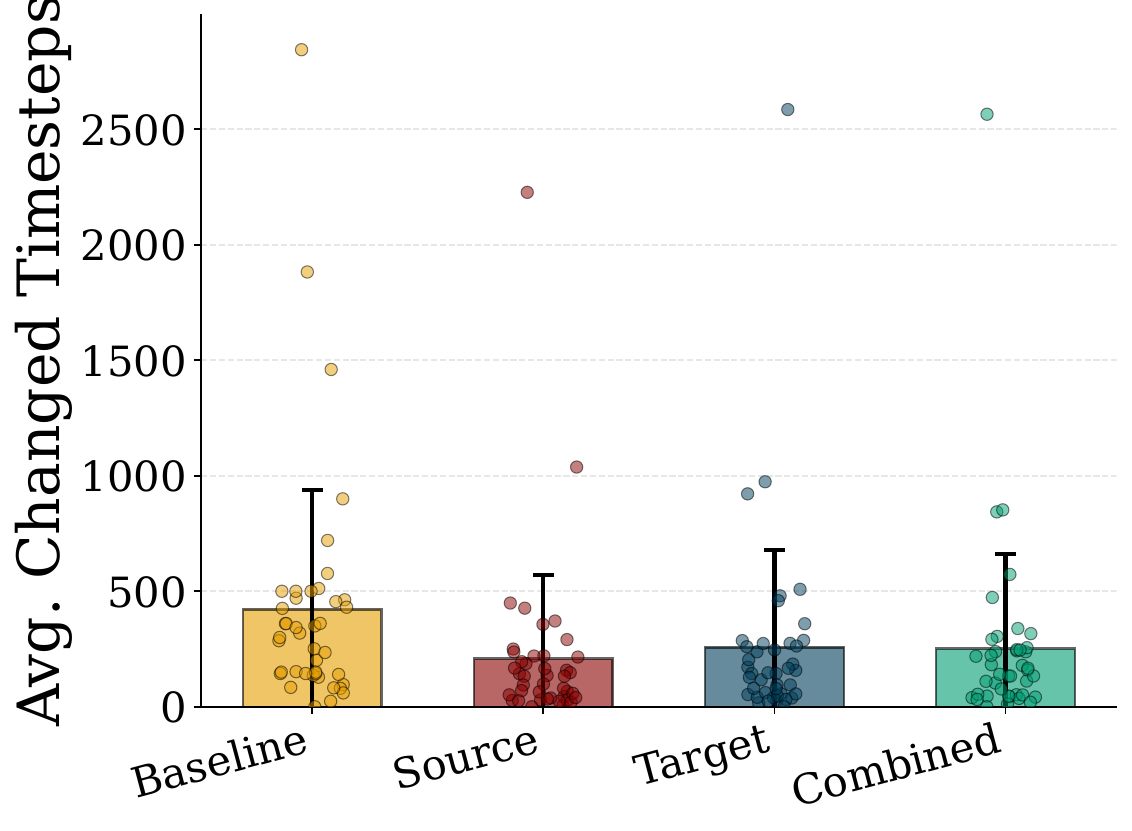}
    \caption{Avg. Changed Timesteps}
  \end{subfigure}
  \caption{Importance Weighting Strategies Comparison. Points denote individual datasets; bars indicate dataset-wide averages and standard deviations for validity and sparsity.}
  \label{fig:weighting_results}
\end{figure}

\begin{table*}
\centering
\caption{
Local Performance Benchmarking. Comparative analysis of \textsc{Galactic-L} against \textsc{kNN}, \textsc{TSEvo}, and \textsc{Glacier} variants across effectiveness, average flipping cost, average segments changed, average timesteps changed, and runtime. The null indicator ``-'' is assigned throughout the metrics where the corresponding effectiveness is zero. Results for additional datasets can be found in Appendix \Cref{tab:additional_local_results}.
}
\tiny
\setlength{\tabcolsep}{4pt}
\begin{tabular}{@{} l ccccc  ccccc  ccccc  ccccc  ccccc @{}}
\toprule
\multirow{2}{*}{Dataset} & \multicolumn{5}{c}{\textsc{KNN}} & \multicolumn{5}{c}{\textsc{TSEvo}} & \multicolumn{5}{c}{\textsc{Glacier-L}} & \multicolumn{5}{c}{\textsc{Glacier-G}} & \multicolumn{5}{c}{\textsc{Galactic-L}} \\
\cmidrule(lr){2-6} \cmidrule(lr){7-11} \cmidrule(lr){12-16} \cmidrule(lr){17-21} \cmidrule(lr){22-26}
 & eff& afc & asc & atc & RT &  eff& afc & asc & atc & RT & eff& afc & asc & atc & RT & eff& afc & asc & atc & RT &  eff& afc & asc & atc & RT \\
\midrule
ACSF1 & 100 & 0.08 & 24.82 & 1460 & 0.15 & 100 & 0.09 & 24.82 & 1460 & 405.54 & 28.33 & 0.01 & 21.88 & 1460 & 9.69 & 26.67 & 0.01 & 21.94 & 1459.94 & 9.95 & 98.33 & 0.03 & 7.75 & 892.08 & 22.12 \\
ArrowHead & 100 & 0.1 & 35.79 & 251 & 0.17 & 100 & 0.09 & 35.79 & 251 & 385.22 & 36.51 & 0 & 34.22 & 251 & 11.06 & 52.38 & 0.01 & 33.73 & 251 & 18.27 & 98.41 & 0.02 & 9.47 & 131.39 & 11.21 \\
Car & 100 & 0.12 & 27.61 & 577 & 0.07 & 100 & 0.17 & 27.61 & 577 & 240.77 & 69.44 & 0.01 & 27.44 & 577 & 12.04 & 88.89 & 0.01 & 27.25 & 576.97 & 15.22 & 100 & 0.03 & 7.36 & 318.06 & 5.13 \\
CBF & 100 & 0.16 & 26.49 & 128 & 1.34 & 100 & 0.19 & 26.49 & 128 & 1751.15 & 0.36 & 0.02 & 28 & 128 & 0.64 & 0.36 & 0.02 & 28 & 128 & 0.63 & 76.7 & 0.09 & 8.55 & 53.03 & 158.08 \\
Coffee & 100 & 0.24 & 14.76 & 286 & 0.05 & 100 & 0.21 & 14.76 & 286 & 105.57 & 29.41 & 0.06 & 15.6 & 286 & 3.14 & 58.82 & 0.06 & 15.3 & 286 & 5.99 & 94.12 & 0.18 & 4 & 119.75 & 4.82 \\
ECG200 & 0 & - & - & - & 0.15 & 100 & 0.12 & 19.38 & 96 & 340.19 & 33.33 & 0.01 & 20 & 96 & 10.95 & 45 & 0.01 & 20.07 & 96 & 23.83 & 95 & 0.04 & 5.25 & 47.11 & 32.13 \\
Fish & 100 & 0.1 & 29.45 & 463 & 0.28 & 100 & 0.13 & 29.45 & 463 & 1071.66 & 30.48 & 0.01 & 29.22 & 462.94 & 25.33 & 40 & 0.01 & 29.5 & 463 & 29.91 & 99.05 & 0.04 & 7.85 & 258.37 & 23.2 \\
SyntheticControl & 100 & 0.21 & 14.71 & 60 & 0.27 & 100 & 0.23 & 14.71 & 60 & 570.44 & 1.11 & 0.02 & 17 & 60 & 1.32 & 0 & - & - & - & 14.91 & 91.67 & 0.1 & 5.01 & 24.73 & 199.21 \\
UMD & 0 & - & - & - & 0.18 & 100 & 0.18 & 25.59 & 149.8 & 317.75 & 18.52 & 0.02 & 27.6 & 150 & 5.02 & 25.93 & 0.03 & 27.43 & 150 & 12.36 & 98.15 & 0.11 & 7.04 & 80.92 & 30 \\
Worms & 100 & 0.18 & 37.24 & 900 & 0.43 & 100 & 0.15 & 37.24 & 900 & 499.46 & 36.76 & 0 & 36.56 & 900 & 14.59 & 41.18 & 0 & 36.39 & 899.93 & 14.9 & 98.53 & 0.01 & 9.79 & 464.4 & 8.67 \\
\bottomrule
\end{tabular}
\label{tab:baseline_results}
\end{table*}

\begin{table*}
\centering
\caption{Global Performance Benchmarking. Comparative analysis of \textsc{Galactic-G} variants against \textsc{GLOBE-CE}$^*$ and \textsc{Glacier-G}$^*$ across effectiveness, average flipping cost, average segments changed, average timesteps changed, and runtime. The $^*$ indicates adaptations for the global timeseries clustering context; the null indicator ``-'' is assigned throughout the metrics where the corresponding effectiveness is zero. Results for additional datasets can be found in Appendix \Cref{tab:additional_global_results}.}
\label{tab:global_results}
\resizebox{\textwidth}{!}{%
\begin{tabular}{lcccccccccccccccccccccccccccccc}
\toprule
& \multicolumn{20}{c}{\textsc{Galactic-G}} & \multicolumn{10}{c}{\textsc{Competitors}} \\
\cmidrule(lr){2-21} \cmidrule(lr){22-31}
 Dataset & \multicolumn{5}{c}{\textsc{Optimal}} & \multicolumn{5}{c}{\textsc{Greedy}} & \multicolumn{5}{c}{\textsc{Hierarchical}} & \multicolumn{5}{c}{\textsc{Hierarchical Greedy}} & \multicolumn{5}{c}{\textsc{GLOBE-CE$^*$}} & \multicolumn{5}{c}{\textsc{Glacier-G$^*$}}\\
\cmidrule(lr){2-6} \cmidrule(lr){7-11} \cmidrule(lr){12-16} \cmidrule(lr){17-21} \cmidrule(lr){22-26} \cmidrule(lr){27-31}
 
  & eff& afc & asc & atc & RT &  eff& afc & asc & atc & RT & eff& afc & asc & atc & RT & eff& afc & asc & atc & RT &  eff& afc & asc & atc & RT &  eff& afc & asc & atc & RT \\
 
\midrule
ACSF1 & 62.8 & 0.031 & 3.7 & 813 & 4.17 & 62.8 & 0.021 & 3.7 & 812 & 0.66 & 37.6 & 0.010 & 3.4 & 591 & 1.49 & 37.6 & 0.009 & 3.2 & 581 & 0.89 & 21.2 & 0.006 & 4.0 & 730 & 60.93 & 31.2 & 0.006 & 5.4 & 1022 & 0.01 \\
ArrowHead & 75.9 & 0.021 & 5.0 & 102 & 49.55 & 75.9 & 0.019 & 4.7 & 102 & 5.34 & 72.0 & 0.021 & 4.9 & 97 & 3.34 & 72.0 & 0.019 & 4.6 & 102 & 2.80 & 54.6 & 0.088 & 9.4 & 251 & 35.57 & 20.2 & 0.006 & 7.9 & 167 & 10.30 \\
Car & 70.8 & 0.024 & 4.8 & 350 & 106.68 & 70.8 & 0.023 & 4.8 & 361 & 3.18 & 55.2 & 0.015 & 4.4 & 282 & 3.24 & 58.6 & 0.015 & 4.2 & 280 & 2.41 & 74.2 & 0.056 & 6.7 & 433 & 32.16 & 33.4 & 0.008 & 8.8 & 577 & 0.00 \\
CBF & 39.4 & 0.105 & 3.3 & 47 & 36.93 & 39.4 & 0.105 & 3.3 & 47 & 10.21 & 33.7 & 0.098 & 3.3 & 51 & 1.57 & 33.7 & 0.094 & 3.3 & 51 & 1.65 & 26.0 & 0.186 & 7.1 & 128 & 50.57 & 0.0 & - & - & - & 0.00 \\
Coffee & 100.0 & 0.186 & 4.3 & 107 & 3.32 & 100.0 & 0.177 & 4.4 & 114 & 2.15 & 100.0 & 0.186 & 4.3 & 107 & 1.59 & 100.0 & 0.177 & 4.4 & 114 & 1.59 & 43.1 & 0.436 & 4.9 & 143 & 26.52 & 52.4 & 0.074 & 9.5 & 286 & 7.25 \\
ECG200 & 58.8 & 0.056 & 4.0 & 42 & 287.92 & 58.8 & 0.056 & 4.0 & 42 & 17.93 & 53.5 & 0.048 & 4.0 & 40 & 12.11 & 57.6 & 0.053 & 4.0 & 41 & 8.48 & 59.8 & 0.272 & 8.4 & 96 & 22.13 & 6.2 & 0.018 & 13.2 & 96 & 25.75 \\
Fish & 87.4 & 0.040 & 5.5 & 286 & 5929.39 & 87.4 & 0.034 & 5.2 & 245 & 18.64 & 67.6 & 0.035 & 5.8 & 268 & 8.80 & 68.8 & 0.033 & 5.3 & 258 & 4.77 & 82.5 & 0.109 & 7.2 & 397 & 49.87 & 31.1 & 0.011 & 9.5 & 463 & 27.03 \\
SyntheticControl & 72.3 & 0.115 & 3.3 & 19 & 136.92 & 72.3 & 0.106 & 3.3 & 19 & 23.68 & 69.5 & 0.110 & 3.5 & 19 & 20.21 & 69.7 & 0.107 & 3.6 & 20 & 18.03 & 68.0 & 0.280 & 6.5 & 50 & 29.96 & 0.0 & - & - & - & 14.91 \\
UMD & 84.2 & 0.104 & 6.4 & 75 & 1491.24 & 84.2 & 0.093 & 6.0 & 70 & 18.89 & 81.9 & 0.102 & 6.6 & 80 & 14.02 & 81.9 & 0.091 & 6.1 & 72 & 9.72 & 66.1 & 0.059 & 9.1 & 100 & 37.35 & 14.4 & 0.020 & 11.5 & 100 & 19.34 \\
Worms & 61.0 & 0.012 & 7.5 & 427 & 213.45 & 61.4 & 0.012 & 8.0 & 437 & 5.17 & 51.9 & 0.011 & 8.1 & 447 & 10.94 & 54.0 & 0.011 & 8.1 & 437 & 8.51 & 35.1 & 0.019 & 16.6 & 900 & 57.71 & 20.4 & 0.006 & 21.0 & 900 & 0.00 \\
\bottomrule
\end{tabular}
}
\end{table*}

\subsection{Local Evaluation}
\label{sec:baselines}
We benchmark the \textsc{Galactic-L} against three state-of-the-art baselines that represent different paradigms of counterfactual search: 
\textsc{$k$-Nearest Neighbors ($k$-NN)} (retrieval), \textsc{TSEvo}~\cite{hollig2022tsevo} (evolutionary optimization), and \textsc{Glacier}~\cite{wang2024glacier} (gradient-based constrained search), evaluating both its \textsc{Glacier-Local} and \textsc{Glacier-Global} variants. Implementation details and method-specific configurations are reported in ~\Cref{app:local_baselines}.
\Cref{tab:baseline_results} presents the summarized results of all competing methods across 10 UCR datasets; results for an additional 20 datasets are provided in Appendix \Cref{tab:additional_local_results}.

The retrieval-based \textsc{$k$-NN} baseline serves as an informative lower bound: although it achieves high success rates when nearby cross-cluster instances exist, it consistently incurs high perturbation cost and poor sparsity, as retrieved neighbors often differ from the query across large portions of the series, with effectiveness degrading in sparse regions of the data. 
\textsc{TSEvo} attains high validity due to its exploratory black-box evolutionary search, but at the expense of substantially higher flipping costs, extensive multi-segment edits, and orders-of-magnitude higher runtimes, yielding less localized and harder to interpret explanations. \textsc{Glacier} improves upon black-box methods in both cost and efficiency through gradient-based, importance-aware constraints; however, its reliance on soft penalties allows perturbations to leak into non-critical regions, often resulting in more modified segments and timesteps than necessary, particularly for the global variant, where a single mask is applied across heterogeneous cluster members.

In contrast, \textsc{Galactic-L} achieves the most favorable trade-off across all dimensions, exceeding the effectiveness of gradient-based baselines while producing substantially sparser, low-cost, and cluster-specific counterfactuals.
By restricting optimization to transition-critical temporal regions per cluster, \textsc{Galactic-L} preserves the global structure of the series and offers explanations that are both highly interpretable and computationally efficient, being orders 
of magnitude faster than evolutionary methods and consistently faster than gradient-based approaches.

\subsection{Global Evaluation}
\label{sec:global_exp}
This experiment evaluates the quality and computational efficiency of the global selection phase of our four \textsc{Galactic-G} variants (Optimal, Greedy, and their respective Hierarchical extensions) against two state-of-the-art global counterfactual baselines adapted for the clustering temporal domain. In the absence of native global generators for time-series clustering, we evaluate against
\textsc{Glacier-G}$^*$, an adaptation of the \textsc{Glacier} framework~\cite{wang2024glacier}, where local perturbations are aggregated from all correctly clustered instances via $k$-means clustering to extract representative centroids, and \textsc{GLOBE-CE}$^*$, an adaptation of \textsc{GLOBE-CE}~\cite{ley2023globe}, that flattens sequences into high-dimensional vectors to optimize global directions through random sampling, and uses bisection-based scalar optimization to maximize effectiveness. 
Table~\ref{tab:global_results} presents summarized results for all competing methods across 10 UCR datasets; results for an additional 20 datasets are provided in Appendix \Cref{tab:additional_global_results}. 

\textsc{GLOBE-CE$^*$} relies on a tabularized view of the series that ignores temporal dependencies, resulting in global deltas that lack structural sparsity and generalize poorly. 
Its optimization strategy further amplifies this limitation: it generates large sets of random translation vectors and applies bidirectional scaling to increase coverage, which introduces substantial computational overhead compared to \textsc{Galactic-G} variants and \textsc{Glacier-G$^*$}.
Since this process is not guided by the temporal structure or the cluster manifold, it often requires large-magnitude shifts to capture outlying instances, ultimately leading to inflated average flipping costs.

The \textsc{Glacier-G$^*$} approach struggles to provide adequate effectiveness at the cluster level. Aggregating individually generated deltas does not sufficiently capture the underlying cluster structure, resulting in global directions that remain instance-specific and fail to represent coherent, cluster-level transformations.

\textsc{Galactic-G} consistently outperforms adapted baselines, achieving a superior effectiveness-to-cost trade-off with significantly lower computational latency. The MDL-based selection identifies the most representative and ``compressible'' perturbations. This strategy respects the internal segmentation of the cluster, resulting in global explanations that maximize population coverage, while minimizing both the number of modified segments and total flipping cost. Consequently, \textsc{Galactic-G} provides summaries that are both more interpretable and computationally efficient than traditional aggregation or random-search methods.

%% file: sections/conclusions.tex
\section{Conclusions}
\label{sec:conclusions}
In this work, we present \textsc{Galactic}, a unified framework bridging the gap between local and global explainability in time-series clustering. By rigorously formulating the explanation selection as an MDL-based submodular maximization problem, we provide a theoretically grounded approach to summarize cluster transitions. Extensive empirical evaluation demonstrates that \textsc{Galactic} consistently outperforms state-of-the-art baselines in sparsity and coherence while maintaining superior computational efficiency. 

Future work will extend this formalism to multivariate settings and integrate causal discovery to disentangle latent temporal confounders. Furthermore, we intend to explore how the MDL-based adaptation can be generalized to the tabular data domain for broader explainability applications.

%% file: sections/appendix.tex
\section{Notation}
A summary of the notation used throughout the paper is provided in \Cref{tab:notation}.
\begin{table}[h]
    \centering
    \caption{Notation Table for the \textsc{Galactic} Framework}
    \label{tab:notation}
\small
    \setlength{\tabcolsep}{4pt}
    \begin{tabular}{ll}
         \toprule
         \textbf{Symbol} & \textbf{Meaning}\\
         \midrule
         $\mathcal{T}, T$ & Space of series $\mathbb{R}^T$; Series length \\
         $\vec x = [x_1, \dots, x_T]$ & A time-series instance of length $T$ \\
         $X = \{x^{(i)}\}_{i=1}^N$ & Dataset containing $N$ series \\
         $f: \mathcal{T} \to \{1, \dots, K\}$ & Underlying clustering algorithm \\
         $\mathcal{C} = \{C_1, \dots, C_K\}$ & Cluster partition \\
         $g: \mathcal{T} \to [0,1]^K$ & Probabilistic surrogate classifier \\
         $\mathcal{S}(x)$ & Set of $M$ contiguous, disjoint temporal segments \\
         $x^{cf}, \delta$ & Local counterfactual $x^{cf} = x + \delta$; Perturbation $\delta$ \\
         % $k$ & Source cluster index\\
         % $ k'$ & Target cluster index ($k' \neq k$) \\
         $\text{cost}(\cdot)$ & Perturbation cost metric (e.g., $\ell_p$ distance) \\
         $\Delta_k$ & Candidate perturbations for cluster $C_k$ \\
         \midrule
         $p\_sz$ & The bit-cost of a pointer \\
         $M$ & The explanation model \\
         $D$ & The data population to be explained \\
          $L(M):= L(\mathbb S)$ & Model description length \\
         $L(D \mid M):= L(C_k \mid \mathbb S)$ & Data description length \\
         $L(D, M) := L(C_k, \mathbb S)$ & MDL objective: $L(M) + L(D \mid M)$ \\
         $\mathrm{cov}(C_k, \mathbb S)$ & Set of instances in $C_k$ successfully flipped by $\mathbb S$ \\
         $\mathrm{eff}(C_k, \mathbb S)$ & Effectiveness of the summary $\mathbb S$ \\
         $\mathrm{cost}_{\text{flip}}(\vec x, \mathbb S)$ & Minimum cost to flip instance $x$ using the set $\mathbb S$ \\
         $\mathrm{afc}(C_k, \mathbb S)$ & Average Flipping Cost\\
         $\Delta(\mathbb S)$ & MDL reduction: $L(D, \varnothing) - L(D, \mathbb S)$ \\
\bottomrule
    \end{tabular}
\end{table}

\section{Theoretical Foundations of Global Selection}
\label{sec:appendix_proofs}
The global selection phase of \textsc{Galactic} aims to identify a non-redundant set of representative perturbations that effectively summarize cluster transitions. As delineated in Section \ref{prob:globalMDL}, this is framed as a Minimum Description Length (MDL) optimization problem. Given that the exhaustive search over the power set of all candidate perturbations is NP-hard, we employ a greedy approximation algorithm. To guarantee the theoretical bounds of this greedy approach, we provide here the formal proof that the MDL objective function exhibits \textit{supermodularity}, and that the MDL reduction is \textit{submodular}.
For these proofs, we make the following assumption.
\begin{assumption}[Admissible Additions]
\label{ass:admissible}
When analyzing the addition of a perturbation $\vec{\delta} \in  \Delta_k$, to our model
$\vec{\delta} \notin \mathbb S$,
we only allow admissible additions that newly cover at least one
previously uncovered instance, i.e.,
\(
\mathrm{cov}(C_k,\mathbb S\cup\{\vec{\delta}\})
\setminus
\mathrm{cov}(C_k,\mathbb S)
\neq \emptyset.
\)
\end{assumption}

\input{sections/proofs}

\section{Algorithms}
\label{app:algos}

\begin{algorithm}
\caption{\textsc{Galactic-G}: Optimal}
\label{alg:optimal}
\small
\begin{algorithmic}[1]
\Require Cluster $C_k$, groups $\mathcal{G}_k$, budget $\mu_k$ 
\Ensure Global perturbation set $\mathbb{S}_{C_k}$
\Procedure{Optimal}{$\Delta, C, \mu$}
\State \Return $\arg\min_{\mathbb{S} \subseteq \Delta, |\mathbb{S}| \le \mu} L(C, \mathbb{S})$ \Comment{Exhaustive search for small $|\Delta|$}
\EndProcedure
\end{algorithmic}
\end{algorithm}

\begin{algorithm}
\caption{\textsc{Galactic-G}: Greedy}
\label{alg:greedy}
\small
\begin{algorithmic}[1]
\Require Cluster $C_k$, groups $\mathcal{G}_k$, budget $\mu_k$ 
\Ensure Global perturbation set $\mathbb{S}_{C_k}$
\Procedure{Greedy}{$\Delta, C, \mu$}
\State $\mathbb{S} \gets \emptyset$
\While{$|\mathbb{S}| < \mu$}
\State $\vec{\delta}^* \gets \arg\min_{\vec{\delta} \in \Delta \setminus \mathbb{S}} \left( L(\mathbb{S} \cup \{\vec{\delta}\}) + L(C \mid \mathbb{S} \cup \{\vec{\delta}\}) \right)$
\If{$L(\mathbb{S} \cup \{\vec{\delta}^*\}) + L(C \mid \mathbb{S} \cup \{\vec{\delta}^*\}) \ge L(\mathbb{S}) + L(C \mid \mathbb{S})$}
\State \textbf{break} \Comment{Stop if the joint description length ceases to decrease}
\EndIf
\State $\mathbb{S} \gets \mathbb{S} \cup \{\vec{\delta}^*\}$
\EndWhile
\State \Return $\mathbb{S}$
\EndProcedure
\end{algorithmic}
\end{algorithm}

\section{Technical Implementation Details}
\label{sec:technical_details}
This section provides a technical report of the model specifications and optimization trajectories required for reproducible local search, alongside the structural mechanics of temporal segmentation for subgroup discovery. Furthermore, we formalize the target selection policies and the mathematical formulation of multi-level importance weighting used to enforce temporal consistency and discriminative precision during counterfactual generation.

\subsection{Model Specifications}
As surrogate, we use a deep 1D convolutional residual network featuring three blocks with kernel sizes $\{8, 5, 3\}$. Each block employs batch normalization, ReLU activation, and shortcut connections to preserve gradient flow. Global Average Pooling (GAP) aggregates features before the final dense projection.

\subsection{Local Search and Reproducibility}
The Adam optimizer utilizes a step size of $0.01$ and a stagnation threshold at $0.005$ to filter negligible perturbations. All experiments were conducted on an AMD Ryzen 9 5950X CPU and an NVIDIA RTX A6000 GPU. For the local baseline comparison, we explain a random sample of $30\%$ of correctly classified instances per cluster to ensure a statistically representative evaluation of local fidelity.

\paragraph{Implementation Details.} Temporal sequences are partitioned using \textsc{NNSegment} with a window size of $\lfloor T/10 \rfloor$ and a step size of $\lfloor T/6.67 \rfloor$. We employ K-medoids clustering for group formation, with the optimal cluster count determined via the Silhouette coefficient. Local search utilizes the Adam optimizer ($max\_iter=500$, $patience=2$). For our MDL objective, we assume a standard 64-bit pointer size and compute log-likelihoods as $\log_2(\cdot + 1)$ to preserve monotonicity.
Based on the empirical analysis of local counterfactual robustness (\Cref{sec:target_policy_main}, \Cref{sec:target_selection_appendix}), we adopt the \textit{second possible} cluster—i.e., the most probable alternative under the surrogate—as the default target direction for \textsc{Galactic-L} and \textsc{Galactic-G}. We further employ the \textit{combined} importance weighting strategy, integrating both source- and target-cluster discriminative regions. This configuration consistently yielded higher effectiveness and more stable sparsity–cost trade-offs across comparisons (\Cref{sec:importance_weighting_main}) and is therefore used throughout all subsequent local and global experiments.

\subsection{Target Selection Policies and Evaluation Statistics}
\label{sec:target_policies}
We consider three policies for selecting (or not selecting) a target cluster during counterfactual optimization:
(i) \textit{second possible}, which selects the cluster with the second-highest posterior probability under the surrogate model;
(ii) \textit{random}, which randomly selects a target cluster different from the source and explicitly optimizes toward that cluster; and
(iii) \textit{all\_random}, which does not predefine a target cluster and instead drives the optimization by pushing the instance away from its current cluster, allowing the search to implicitly discover a new assignment.

Given two policies $x$ and $y$, and a metric $m$ evaluated over a set of datasets $\mathcal{D}$, we define the mean signed difference:
\[
\Delta_m = \frac{1}{|\mathcal{D}|} \sum_{d \in \mathcal{D}} \left( m_d^{(y)} - m_d^{(x)} \right).
\]
We refer to this quantity as a \emph{Gain} when it improves the metric in the desirable direction (higher is better for success rate; lower is better for cost, sparsity, and runtime), and as a \emph{Loss} otherwise.

\subsection{Temporal Segmentation and Subgroup Discovery}
\label{sec:strategies}
\textsc{Galactic} builds on the premise that cluster identity in time series is defined by distinctive temporal topologies. However, unsupervised clusterings often exhibit internal variability, including instances not strictly conforming to a single centroidal pattern (i.e., criticisms \cite{gretton2012kernel}). To preserve interpretability while accounting for heterogeneity, we adopt a multi-modal strategy capturing dominant sub-patterns within each cluster $C_k$.

\paragraph{Instance-wise Partitioning.}
For each series $x \in C_k$, we compute a segmentation
$\mathcal{S}(x) = \{I_1, I_2, \ldots, I_M\}$,
where each segment
$I_m = [\tau_{m-1}+1, \ldots, \tau_m]$
is a contiguous interval defined by breakpoints
$0 = \tau_0 < \tau_1 < \cdots < \tau_M = T$.
Segmentations are obtained via \textsc{NNSegment} \cite{sivill2022limesegment}, which identifies meaningful change-points by detecting local deviations in the signal.

\paragraph{Subgroup Resolution.}
To uncover structurally distinct temporal regimes within a cluster, we partition the set of gap vectors
$\{\gamma(x) : x \in C_k\}$
using $K_{\text{seg}}$-medoids clustering.
The number of subgroups is selected by maximizing the Silhouette score.
To ensure that the resulting structure is meaningful, we retain only solutions in which all subgroups contain at least two instances; if no such solution exists, we fall back to the highest-scoring partition even if singleton groups are present.
The resulting medoids
$\mathcal{P}_k = \{\mathcal{S}_1, \ldots, \mathcal{S}_R\}$
define the \emph{dominant segmentation regimes} of cluster $C_k$, compactly summarizing its internal temporal variability.

\subsection{Multi-level Importance Weighting}
Given the dominant segmentation regimes, we quantify the contribution of specific temporal segments to cluster identity, ensuring that counterfactual perturbations are directed toward structurally influential regions.
For an instance $x$ belonging to a subgroup with segmentation $\mathcal{S}_r$, the importance of each segment $I_m \in \mathcal{S}_r$ is computed via block-wise permutation:
\begin{equation}
\operatorname{imp}(I_m) =
\left|
a_c - \frac{1}{B} \sum_{b=1}^{B} a_{c,m}^{(b)}
\right|,
\end{equation}
where $a_c$ denotes the surrogate accuracy for the corresponding cluster and
$a_{c,m}^{(b)}$ is the accuracy obtained after permuting the values of segment $I_m$ in the $b$-th perturbation.
Segment-level scores are then propagated to a timestep-level mask $w(x) \in [0,1]^T$.

Figure~\ref{fig:segment_weightings} illustrates the resulting importance masks under different weighting strategies.
Depending on the optimization objective, we consider three complementary schemes:

\paragraph{1. Source-Cluster Importance ($w^{(s)}$).}
This strategy prioritizes preserving the defining temporal structure of the source cluster $C_s$.
Segment importances are binarized using a quantile threshold,
$w^{(s)}_t = \mathbb{1}\{\operatorname{imp}^{(s)}(I_m) \geq Q_{0.75}\}$,
effectively preventing the optimizer from perturbing regions that encode the original cluster identity.
An example of this behavior is shown in Fig.~\ref{fig:segment_weightings}(b), where the counterfactual search explicitly protects perturbing regions that define the original identity.

\paragraph{2. Target-Cluster Importance ($w^{(t)}$).}
This strategy guides perturbations toward regions that characterize the target cluster $C_t$.
To accommodate the multi-modal structure of $C_t$, we align all segmentation breakpoints across all subgroup sets of $G_t$ into a unified collection of intervals $[a,b)$.
The importance of each unified interval is computed as a weighted average across subgroups:
\begin{equation}
\operatorname{imp}_{[a,b)} =
\frac{1}{|G_t|}
\sum_{g \in G_t}
\sum_{I \in \mathcal{S}_g}
\operatorname{imp}_g(I)
\cdot
\frac{|[a,b)\cap I|}{|I|}.
\end{equation}
This strategy, visualized in Fig.~\ref{fig:segment_weightings}(c), emphasizes temporal regions that must be modified to achieve membership in the target cluster.

\paragraph{3. Combined Importance ($w^{(s,t)}$).}
The combined strategy simultaneously enforces the structural constraints of the source cluster and the discriminative requirements of the target cluster.
We form the unified group set
$G_{s,t} = \{g_s\} \cup G_t$,
where $g_s$ is the source-group segmentation of $x$.
After aligning all breakpoints from $G_{s,t}$ into unified intervals $[a,b)$, the joint importance is computed as:
\begin{equation}
\operatorname{imp}_{[a,b)} =
\frac{1}{|G_{s,t}|}
\sum_{g \in G_{s,t}}
\sum_{I \in \mathcal{S}_g}
\operatorname{imp}_g(I)
\cdot
\frac{|[a,b)\cap I|}{|I|}.
\end{equation}
This strategy is illustrated in Fig.~\ref{fig:segment_weightings}(d), where the mask highlights intervals that are simultaneously critical for preserving the source identity and achieving the target assignment.

\section{Local Baselines and Configurations}
\label{app:local_baselines}
In this section, we provide implementation details and configuration choices for all local counterfactual baselines evaluated in \Cref{sec:baselines}.

\paragraph{Baselines.} 
\textsc{$k$-Nearest Neighbors ($k$-NN)}: This retrieval-based baseline represents the ``least effort'' strategy for generating a counterfactual explanation. For a series $x$, we identify the $k$ closest series in the training set belonging to different clusters. A successful counterfactual is found if any of these neighbors are correctly classified into the target cluster $C_t$ by the surrogate model. If successful, no further optimization is performed.
\textsc{TSEvo}~\cite{hollig2022tsevo}: A multi-objective evolutionary approach that treats the surrogate model as a black-box. It does not require gradient access and generates counterfactuals through genetic operations. Specifically, it utilizes \textit{mutation} (applying transformations such as noise, interpolation, and shifting) and \textit{crossover} (recombining segments from different series) to navigate the search space. 
\textsc{Glacier}~\cite{wang2024glacier}: The closest baseline to our framework, which follows a gradient-driven search guided by local constraints. Glacier identifies segments and assigns importance scores to discourage changes in certain regions via a binary constraint vector $c$ in its loss function. We evaluate two variants: \textsc{Glacier-Local} (example-specific constraints based on individual explanations) and \textsc{Glacier-Global} (applying a common importance mask derived from the whole dataset).

\paragraph{Experimental Configuration and Reproducibility.} 
For a fair comparison, we standardized the search conditions across all methods. 
For TSEvo, we capped the number of generations at $10$ (default: $500$), as the default was computationally infeasible for large-scale evaluation on the UCR Archive. Preliminary runs showed that effectiveness stabilized early, with $10$ generations already achieving high flipping success while ensuring tractable and reproducible runtimes.
For Glacier, we utilized their default hyperparameters but opted for the variant without autoencoders (\textit{NoAE}). This ensures that the generated counterfactuals are evaluated purely on the $L_2$ flipping cost measure without being biased by the generative preference of the model for ``realistic'' data distributions, which can intentionally increase edit costs. 
All methods are evaluated using the same surrogate model and afc ($L_2$).

\section{Extended Experimental Results}
This section provides a supplementary analytical deep-dive into the empirical performance of the \textsc{Galactic} framework, offering an additional exploration of the hyperparameter space and algorithmic configurations that govern counterfactual generation. We extend the core evaluations presented in the main manuscript by dissecting the nuances of target selection policies and the sensitivity of the global selection mechanism \textsc{Galactic-G} to the perturbation budget $\mu$. Furthermore, we provide granular, per-dataset baseline comparisons for both local and global explanations, ensuring a robust validation of the superiority of our framework across diverse time-series topologies and clustering manifolds.

\begin{figure}
  \centering
  \begin{subfigure}{0.5\columnwidth}
    \includegraphics[height=0.8\linewidth]{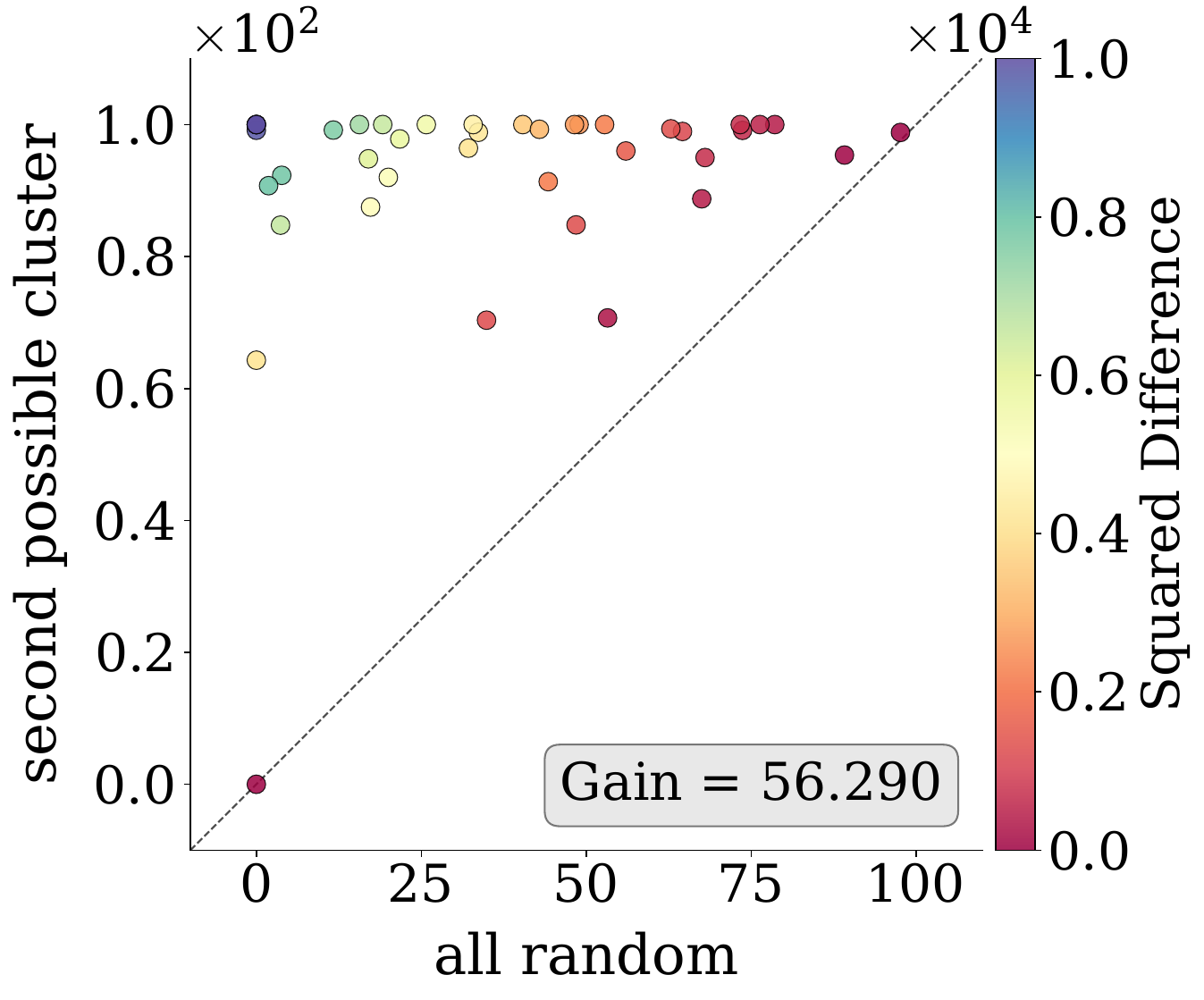}
    \caption{Effectiveness}
  \end{subfigure}\hfill
  \begin{subfigure}{0.5\columnwidth}
    \includegraphics[height=0.8\linewidth]{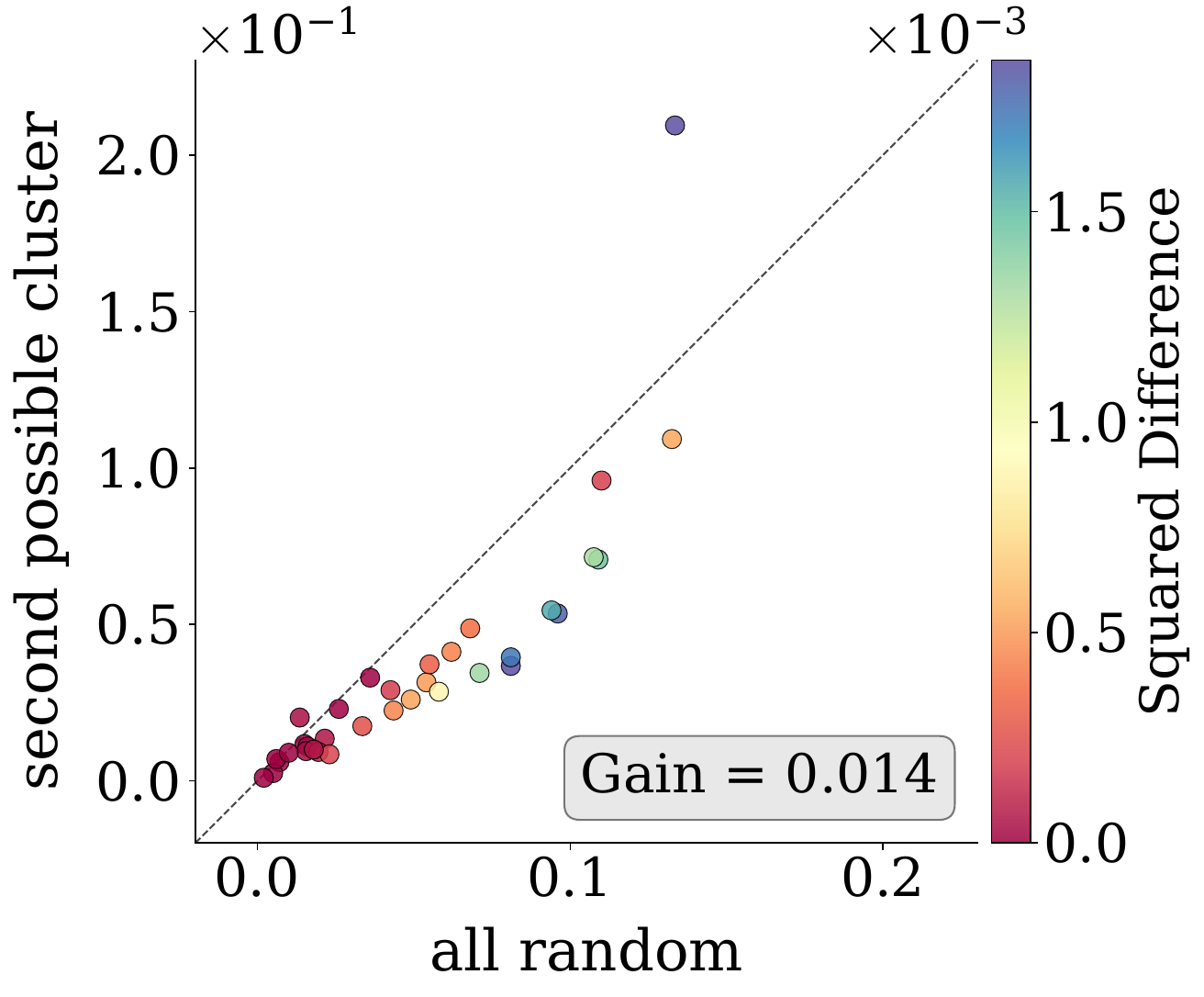}
    \caption{Avg. Flipping Cost}
  \end{subfigure}
  \medskip
  \begin{subfigure}{0.5\columnwidth}
    \includegraphics[height=0.8\linewidth]{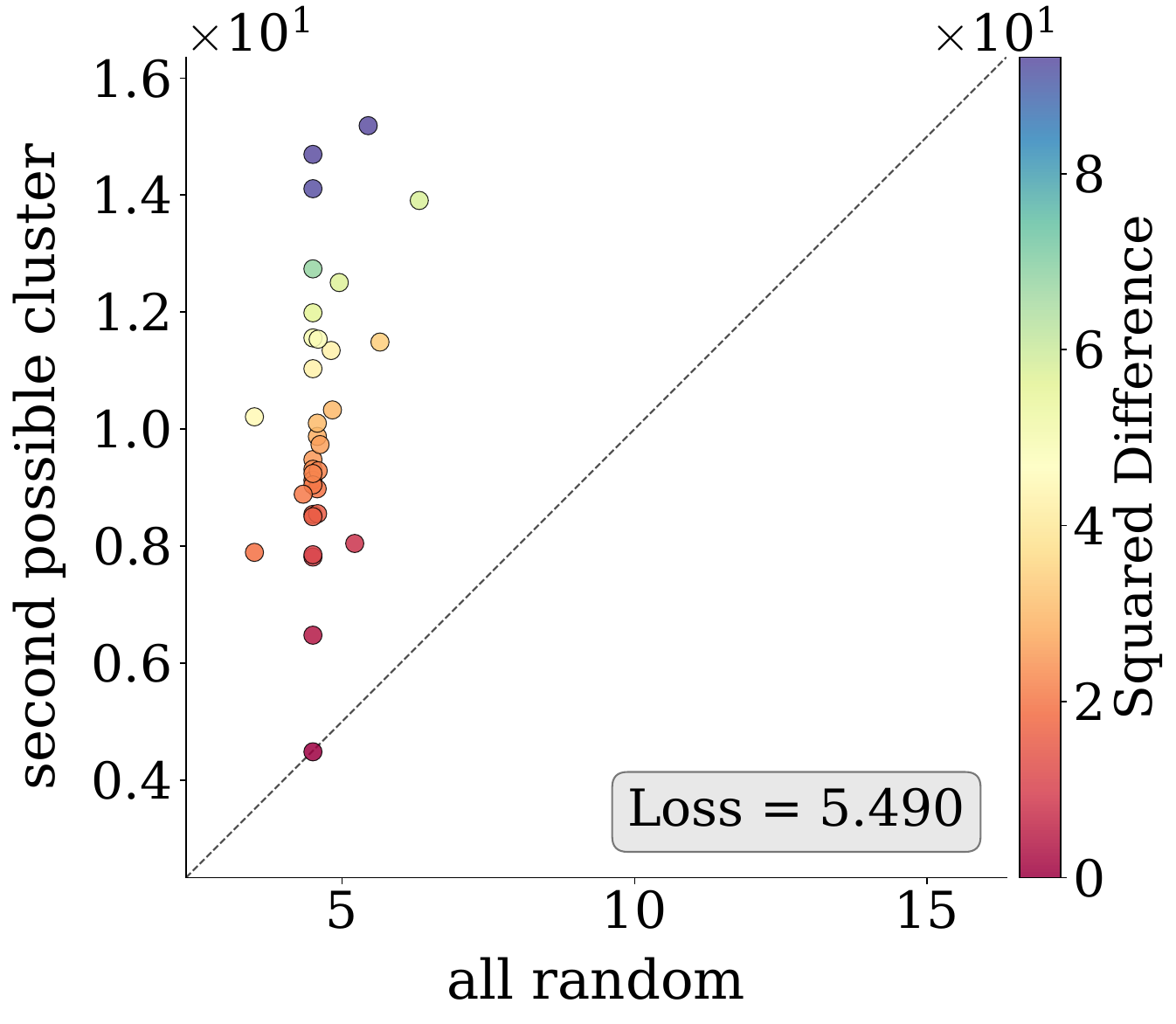}
    \caption{Avg. Changed Segments}
  \end{subfigure}\hfill
  \begin{subfigure}{0.5\columnwidth}
    \includegraphics[height=0.8\linewidth]{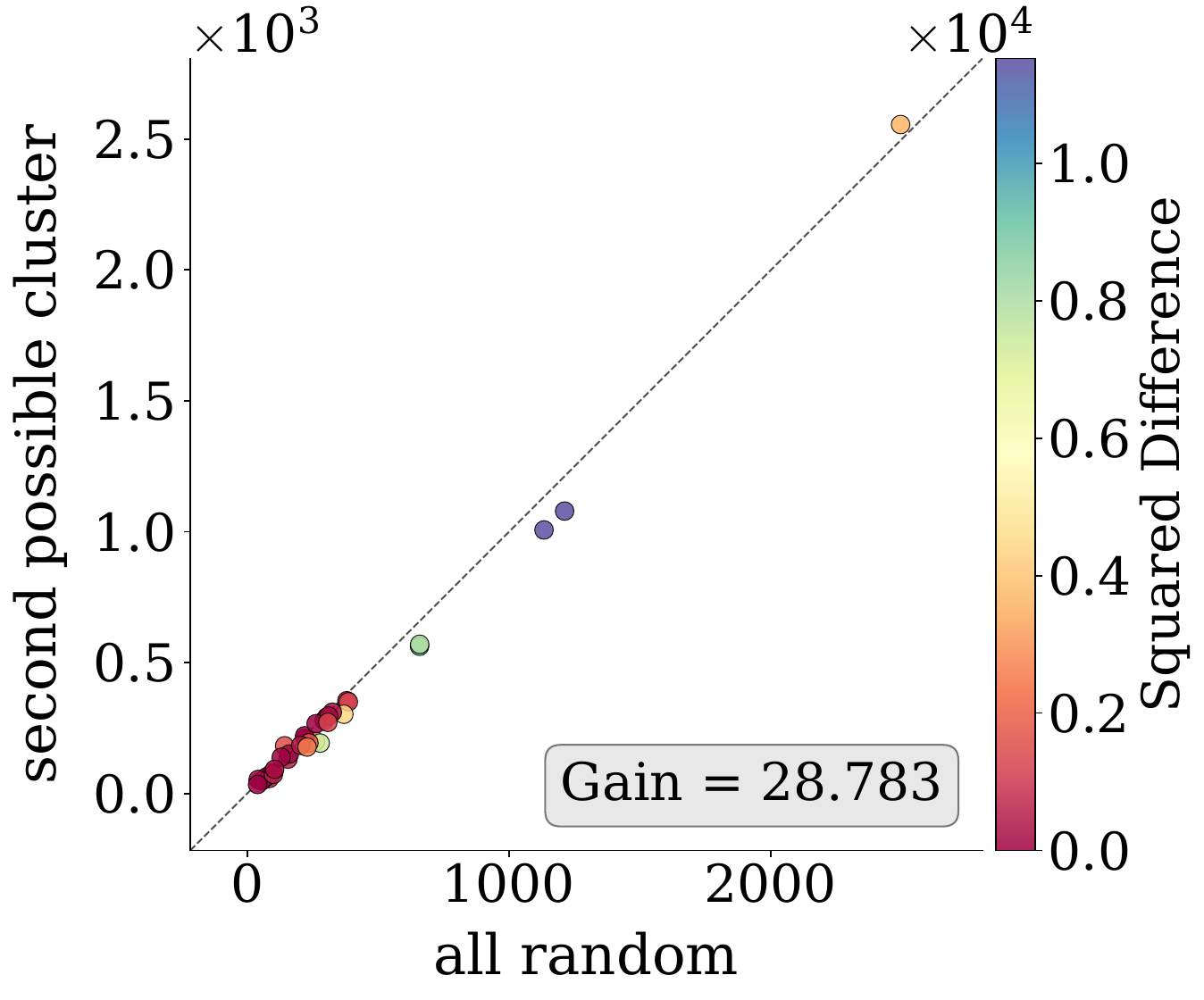}
    \caption{Avg. Changed Timesteps}
  \end{subfigure}
  \caption{\textsc{Galactic-L} Target Selection Analysis. Comparison of the second possible and the all random policies across the UCR datasets.}
  \label{fig:additional_direction_scatter_second_allrandom}
\end{figure}

\begin{figure}
  \centering
  \begin{subfigure}{0.5\columnwidth}
    \includegraphics[height=0.8\linewidth]{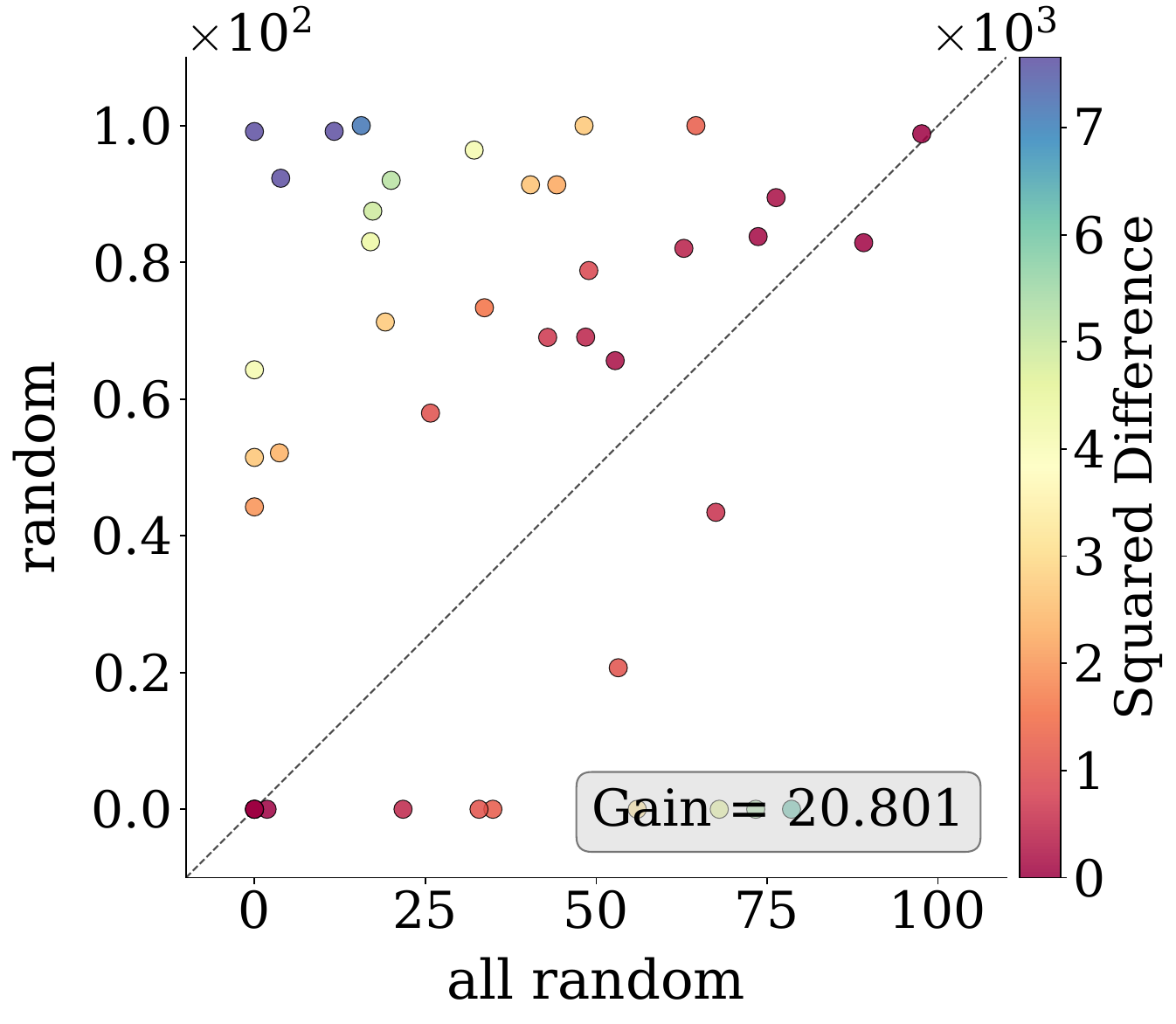}
    \caption{Effectiveness}
  \end{subfigure}\hfill
  \begin{subfigure}{0.5\columnwidth}
    \includegraphics[height=0.8\linewidth]{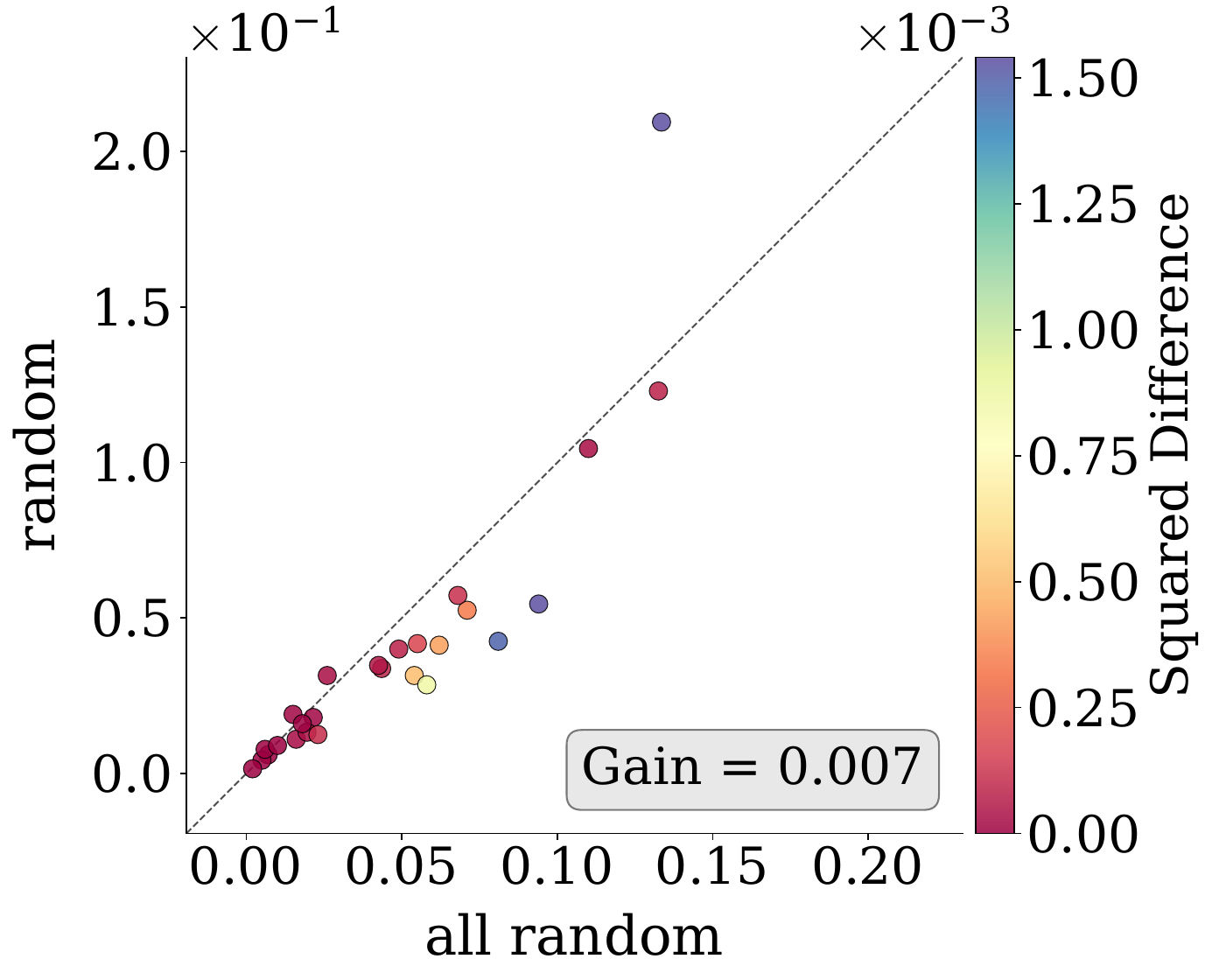}
    \caption{Avg. Flipping Cost}
  \end{subfigure}
  \medskip
  \begin{subfigure}{0.5\columnwidth}
    \includegraphics[height=0.8\linewidth]{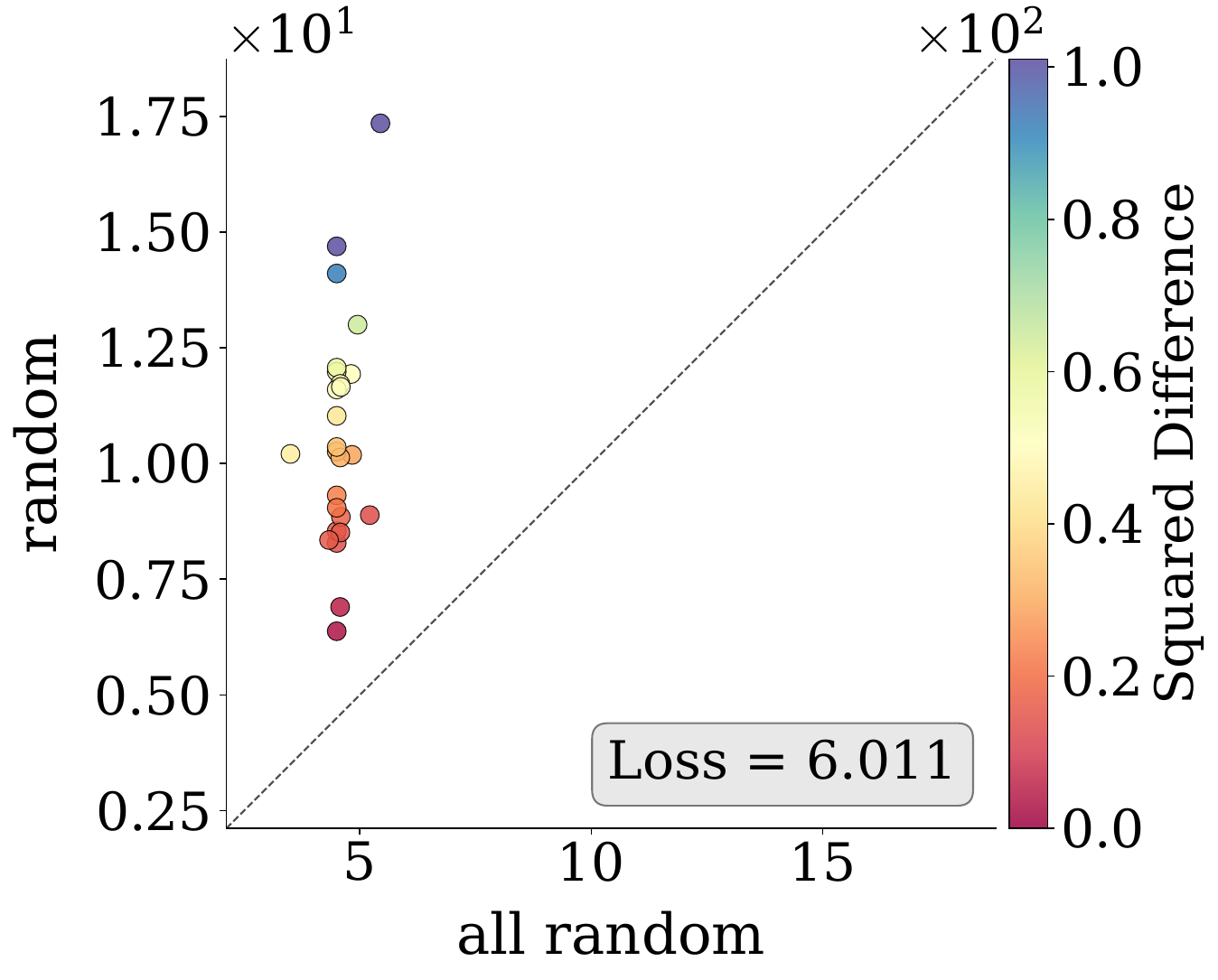}
    \caption{Avg. Changed Segments}
  \end{subfigure}\hfill
  \begin{subfigure}{0.5\columnwidth}
    \includegraphics[height=0.8\linewidth]{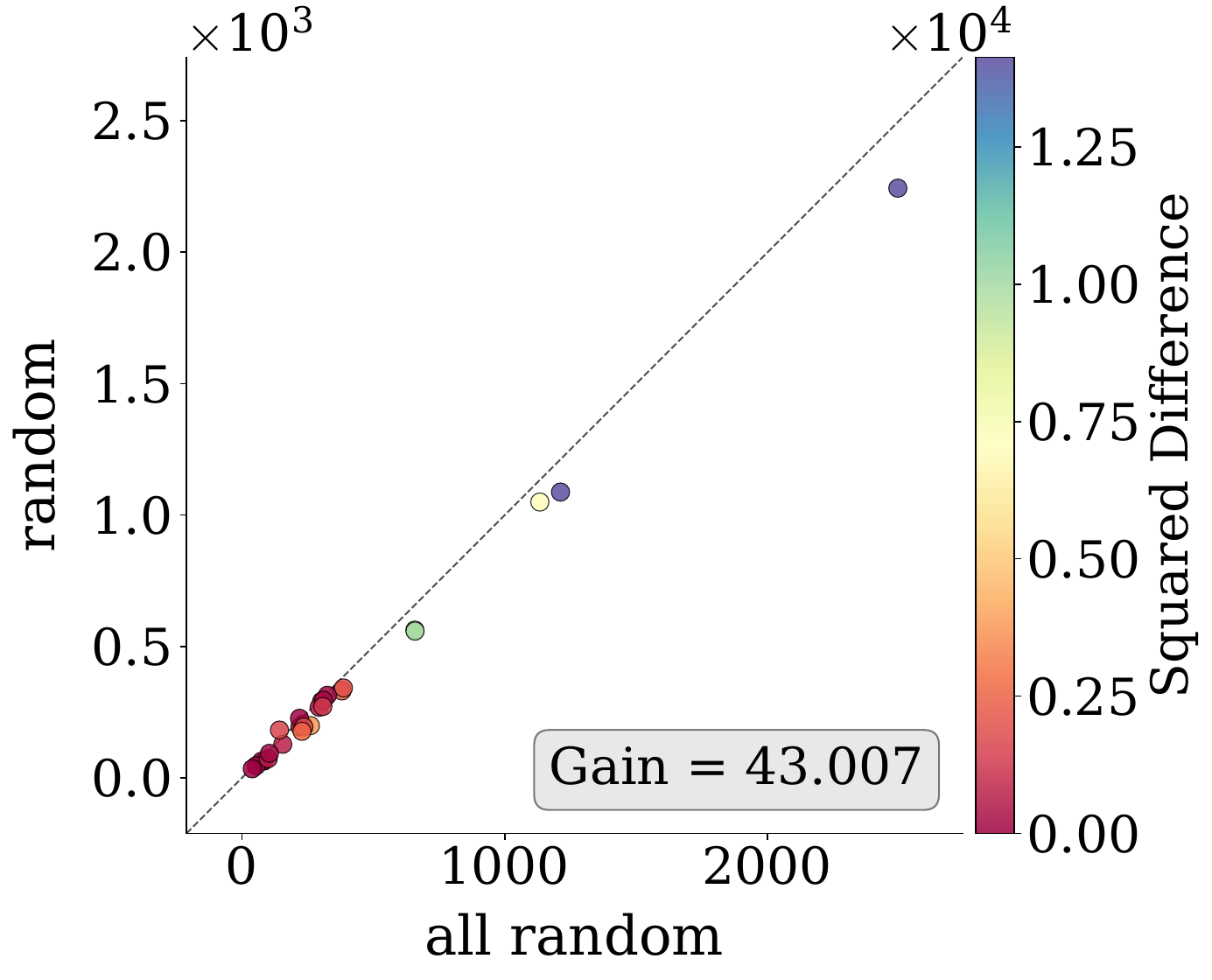}
    \caption{Avg. Changed Timesteps}
  \end{subfigure}
  \caption{\textsc{Galactic-L} Target Selection Analysis. Comparison of the random and the all random policies across the UCR datasets.}
  \label{fig:additional_direction_scatter_random_all_random}
\end{figure}

\subsection{Search Policy: Target Selection}
\label{sec:target_selection_appendix}
We further analyze target specification by comparing \textit{second possible} and \textit{random} policies against \textit{all\_random}, where no explicit target is defined, and optimization is driven by pushing away from the source cluster. \Cref{fig:additional_direction_scatter_second_allrandom} compares \textit{second possible} versus \textit{all\_random}, while \Cref{fig:additional_direction_scatter_random_all_random} compares \textit{random} versus \textit{all\_random} across UCR datasets.

Relative to \textit{all\_random}, the \textit{second possible} policy yields a substantial gain in effectiveness, consistently achieving higher success rates. This gain is accompanied by a loss in segment-level sparsity, reflected in a larger number of modified segments, but a gain at the timestep level, with fewer total modified timesteps. This pattern indicates that directing the search toward a specific alternative cluster may require interacting with more semantic regions, while still keeping the actual temporal changes compact.
A similar but weaker trend is observed for the \textit{random} policy. Explicitly selecting a target cluster at random improves effectiveness compared to \textit{all\_random}, but with smaller gains than \textit{second possible} in effectiveness. 

Overall, these results show that explicitly defining a target cluster, particularly when guided by the probability structure of the surrogate, prioritizes successful counterfactual discovery and concentrates perturbations within short temporal intervals, accepting moderate increases in segment-level edits in exchange for substantially higher effectiveness.

\subsection{\textsc{Galactic-G} Sensitivity Analysis: Effectiveness vs. Efficiency}
\label{sec:galactic_g_varied_m}
Within \textsc{Galactic-G}, we evaluate the trade-off between explanatory quality and efficiency by varying the perturbation budget $\mu$. As shown in \Cref{fig:galactic_g_varied_m}, which illustrates this analysis for the \textbf{ECG200} dataset, increasing $\mu$ yields a non-linear, monotonic improvement in effectiveness. Initially, the \textbf{Minimum Description Length (MDL)} objective prioritizes high-impact, parsimonious perturbations that maximize cluster effectiveness while minimizing aggregate flipping costs, with the minimum description length. For this case, results plateau once $\mu$ exceeds the intrinsic structural limit of three groups, suggesting a singular, well-optimized perturbation per group sufficiently characterizes the cluster manifold, rendering further counterfactuals redundant.

Furthermore, our analysis validates that while the \textbf{hierarchical optimal} achieves local optimality within discrete subgroups, it remains globally sub-optimal when evaluated at the aggregate cluster level. This global sub-optimality arises because the hierarchical decomposition lacks the holistic search visibility required to evaluate cross-group interactions and unified description costs across the entire cluster partition. Regarding efficiency, the \textbf{greedy selection} mechanism offers a computationally superior alternative to the optimal search, delivering comparable effectiveness with significantly reduced runtime. While the \textbf{hierarchical greedy} variant is the most efficient for small budgets, the computational overhead associated with propagating search operations across numerous subgroups causes its complexity to converge with the standard greedy approach as $\mu$ increases. These findings confirm that the \textsc{Galactic} effectively balances the combinatorial challenge of global selection with actionable, high-fidelity explanations.

\begin{figure}
  \label{fig:galactic_g_varied_m}
  \centering
  \begin{subfigure}{0.48\columnwidth}
    \centering
    \includegraphics[width=0.9\linewidth]{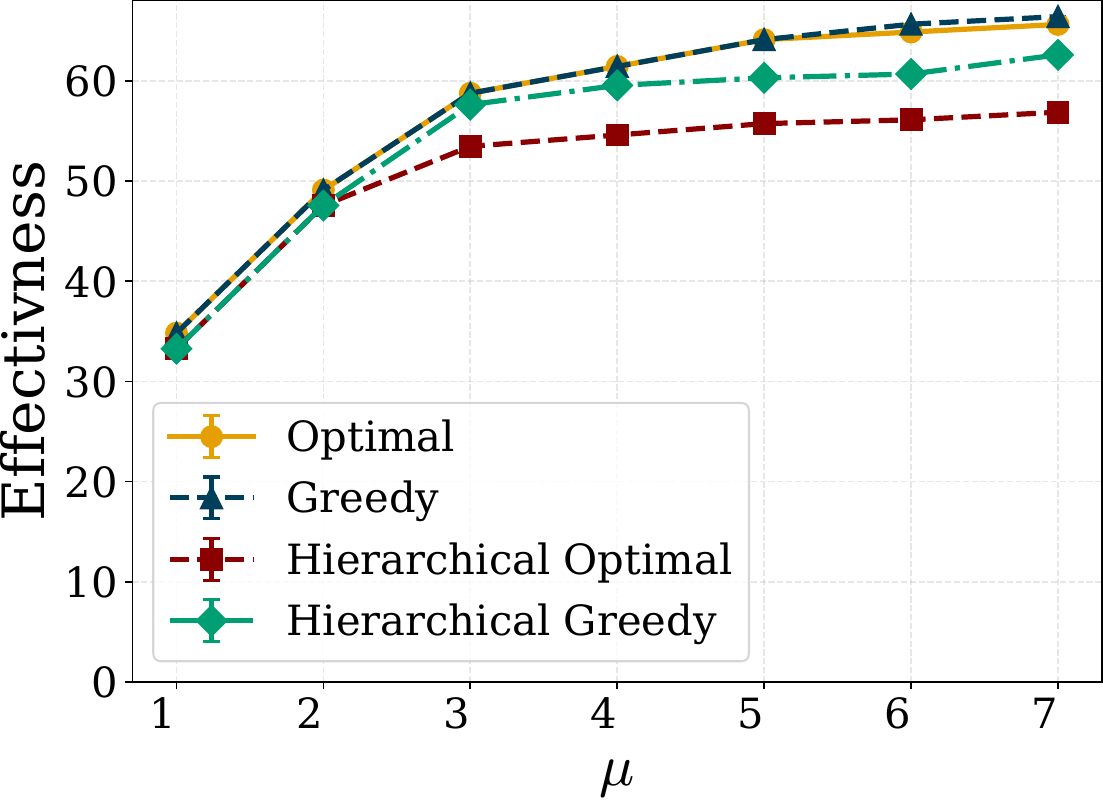}
    \caption{Effectiveness}
  \end{subfigure}\hfill
  \begin{subfigure}{0.48\columnwidth}
    \centering
    \includegraphics[width=0.9\linewidth]{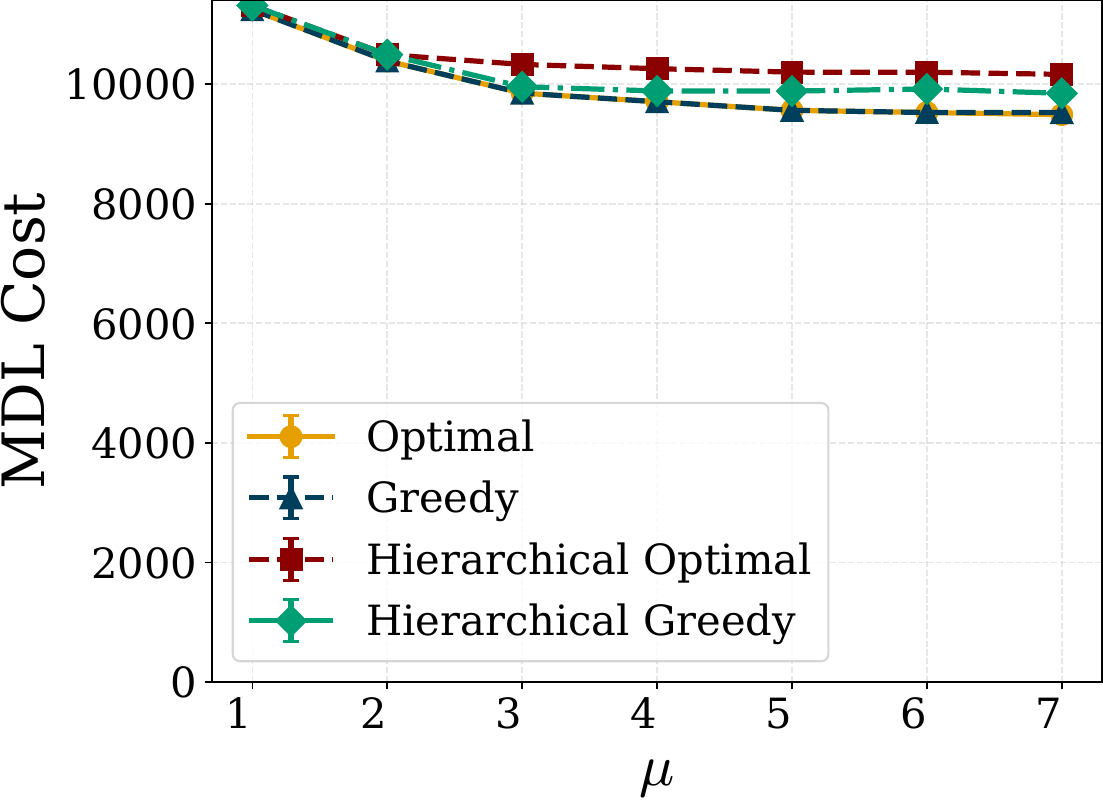}
    \caption{MDL Cost}
  \end{subfigure}  
  \begin{subfigure}{0.48\columnwidth}
    \centering
    \includegraphics[width=0.9\linewidth]{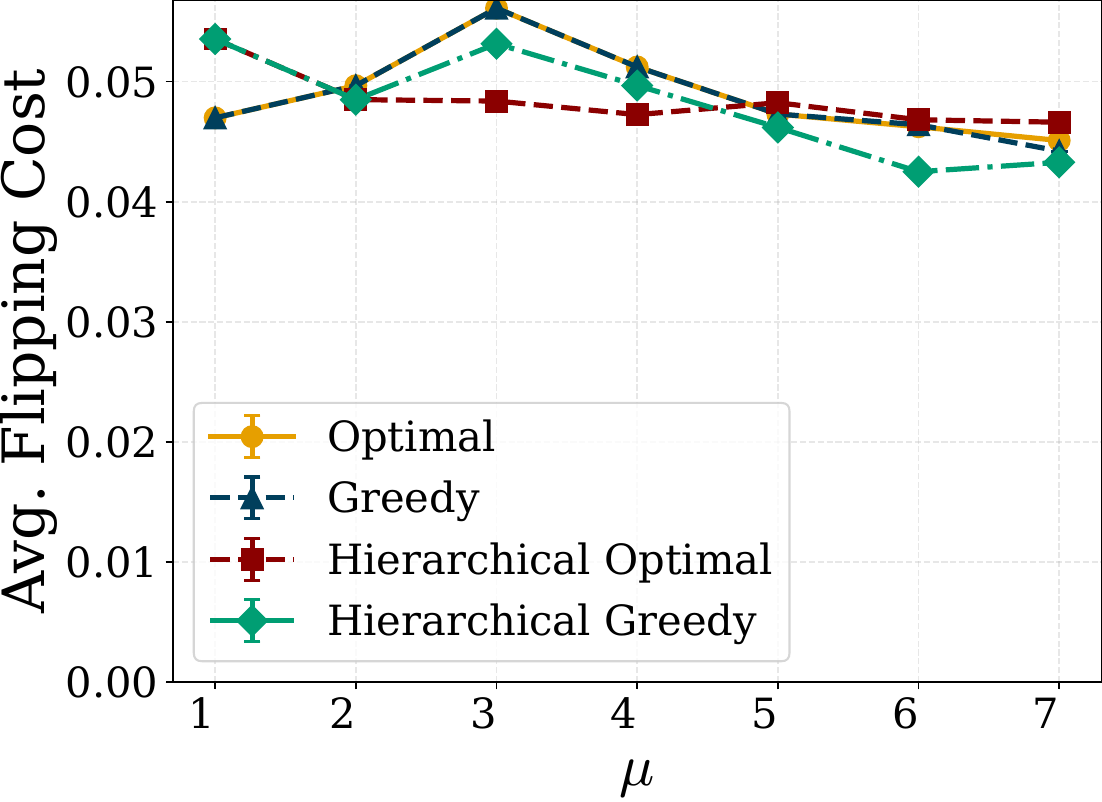}
    \caption{Average Flipping Cost}
  \end{subfigure}\hfill
  \begin{subfigure}{0.48\columnwidth}
    \centering
    \includegraphics[width=0.9\linewidth]{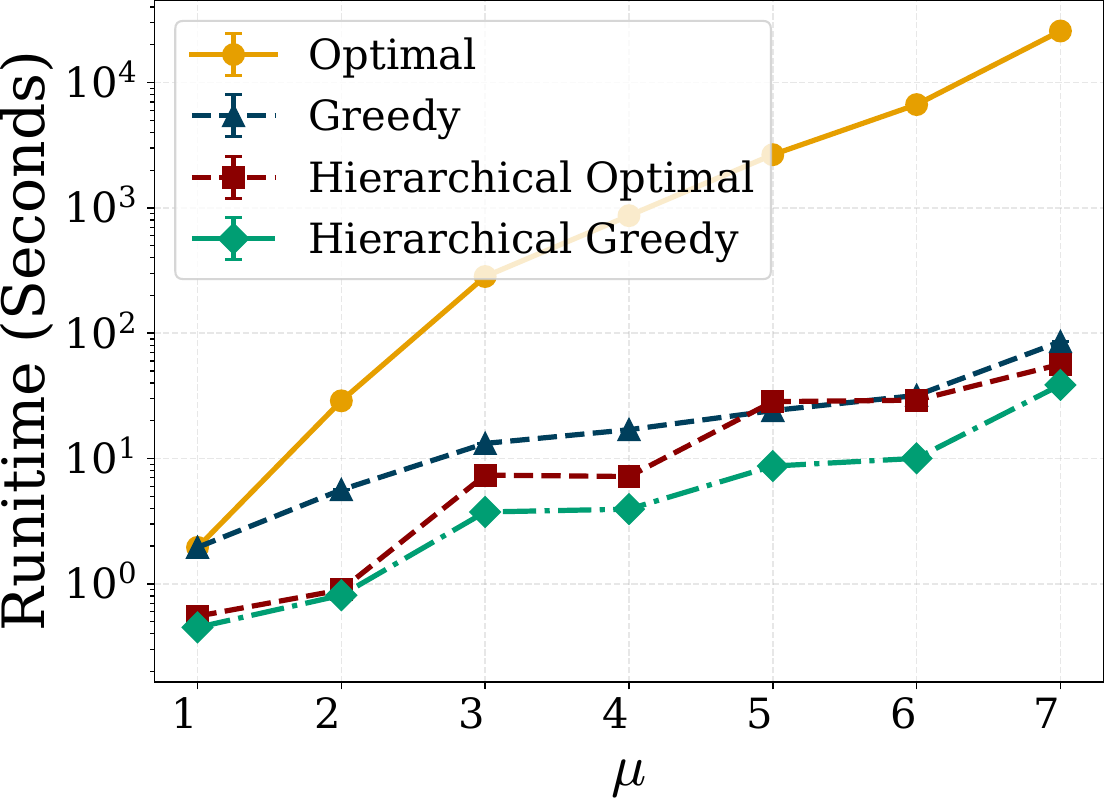}
    \caption{Runtime}
  \end{subfigure}
  \caption{\textsc{Galactic-G} selections with varied $\mu$}
\end{figure}

\begin{table*}
\centering
\caption{Metadata of the selected UCR time-series datasets and Surrogate classification accuracies.}
\label{tab:dataset_summary}
\resizebox{\textwidth}{!}{%
\begin{tabular}{lrrrlr | lrrrlr}
\toprule
\textbf{Dataset} & \textbf{Instances} & \textbf{Length} & \textbf{Clusters} & \textbf{Type} & \textbf{ResNet Acc.} & 
\textbf{Dataset} & \textbf{Instances} & \textbf{Length} & \textbf{Clusters} & \textbf{Type} & \textbf{ResNet Acc.} \\
\midrule
ArrowHead & 211 & 251 & 3 & IMAGE & 73.81\% & ItalyPowerDemand & 1096 & 24 & 2 & SENSOR & 95.45\% \\
Lightning7 & 143 & 319 & 7 & SENSOR & 75.86\% & CBF & 930 & 128 & 3 & SIMUL. & 100.00\% \\
Car & 120 & 577 & 4 & SENSOR & 91.67\% & MiddlePhalanxTW & 553 & 80 & 6 & IMAGE & 58.56\% \\
Coffee & 56 & 286 & 2 & SPECTRO & 100.00\% & Computers & 500 & 720 & 2 & DEVICE & 92.00\%  \\
CricketZ & 780 & 300 & 12 & HAR & 84.62\% & PowerCons & 360 & 144 & 2 & DEVICE & 82.61\% \\
Earthquakes & 461 & 512 & 2 & SENSOR & 75.27\% & Rock & 70 & 2844 & 4 & SPECTRO & 78.57\% \\
ECG5000 & 5000 & 140 & 5 & ECG & 94.50\% & ShapeletSim & 200 & 500 & 2 & SIMUL. & 100.00\% \\
ECGFiveDays & 884 & 136 & 2 & ECG & 100.00\% & ToeSegmentation2 & 166 & 343 & 2 & SENSOR & 97.06\% \\
FaceFour & 112 & 350 & 4 & IMAGE & 100.00\% & TwoLeadECG & 1162 & 82 & 2 & ECG & 100.00\% \\
FordB & 4446 & 500 & 2 & SENSOR & 90.00\% & Wine & 111 & 234 & 2 & SPECTRO & 47.83\% \\
Ham & 214 & 431 & 2 & SPECTRO & 86.05\% & Yoga & 3300 & 426 & 2 & IMAGE & 96.67\% \\
InlineSkate & 650 & 1882 & 7 & MOTION & 52.85\% & Wafer & 7164 & 152 & 2 & SENSOR & 99.93\% \\
ACSF1 & 200 & 1460 & 10 & DEVICE & 80.00\% & ECG200 & 200 & 96 & 2 & ECG & 90.00\% \\
Fish & 350 & 463 & 7 & IMAGE & 95.71\% & SyntheticControl & 600 & 60 & 6 & SIMUL. & 100.00\% \\
UMD & 180 & 150 & 3 & SIMUL. & 100.00\% & Worms & 258 & 900 & 5 & MOTION & 62.22\% \\
\bottomrule
\end{tabular}}
\end{table*}

\begin{table*}[t]
\centering
\caption{
Additional Local Performance Benchmarking. Comparative analysis of \textsc{Galactic-L} against \textsc{kNN}, \textsc{TSEvo}, and \textsc{Glacier} variants across effectiveness, average flipping cost, average segments changed, average timesteps changed, and runtime. The null indicator ``-'' is assigned throughout the metrics where the corresponding effectiveness is zero.
}
\label{tab:additional_local_results}
\resizebox{\textwidth}{!}{%
\tiny
\setlength{\tabcolsep}{4pt}
\begin{tabular}{@{} l ccccc  ccccc  ccccc  ccccc  ccccc @{}}
\toprule
\multirow{2}{*}{Dataset} & \multicolumn{5}{c}{\textsc{KNN}} & \multicolumn{5}{c}{\textsc{TSEvo}} & \multicolumn{5}{c}{\textsc{Glacier-L}} & \multicolumn{5}{c}{\textsc{Glacier-G}} & \multicolumn{5}{c}{\textsc{Galactic-L}} \\
\cmidrule(lr){2-6} \cmidrule(lr){7-11} \cmidrule(lr){12-16} \cmidrule(lr){17-21} \cmidrule(lr){22-26}
 & eff& afc & asc & atc & RT &  eff& afc & asc & atc & RT & eff& afc & asc & atc & RT & eff& afc & asc & atc & RT &  eff& afc & asc & atc & RT \\
\midrule
 Computers & 100 & 0.1 & 29.83 & 720 & 0.41 & 100 & 0.07 & 29.83 & 720 & 906.87 & 35.33 & 0 & 29.68 & 640.91 & 31.52 & 36 & 0 & 29.72 & 648.85 & 34.47 & 87.33 & 0.01 & 8.53 & 399.92 & 29.42\\
 CricketZ & 100 & 0.09 & 30.22 & 300 & 1.01 & 100 & 0.1 & 30.22 & 300 & 2366.73 & 11.97 & 0.01 & 32.11 & 300 & 22.94 & 14.53 & 0.01 & 32.97 & 300 & 26.57 & 99.57 & 0.01 & 8 & 190.56 & 75.75\\
 Earthquakes & 100 & 0.24 & 39.67 & 512 & 0.37 & 100 & 0.19 & 39.67 & 512 & 1289.46 & 31.65 & 0.01 & 41.41 & 509.32 & 27.06 & 34.53 & 0 & 41.56 & 509.77 & 29.7 & 94.24 & 0.03 & 10.97 & 294.5 & 65.15\\
 ECG5000 & 100 & 0.07 & 20.15 & 140 & 7.12 & 100 & 0.14 & 20.15 & 140 & 9583.36 & 3.27 & 0.01 & 26.51 & 133.78 & 28.94 & 5.13 & 0.01 & 24.45 & 128.92 & 44.97 & 61.8 & 0.01 & 7.43 & 50.2 & 282.68\\
 ECGFiveDays & 100 & 0.11 & 22.79 & 136 & 1.47 & 100 & 0.09 & 22.79 & 136 & 1737.3 & 22.93 & 0.01 & 22.44 & 132.8 & 36.41 & 52.63 & 0.02 & 22.51 & 129.69 & 78.49 & 100 & 0.03 & 5.94 & 36.84 & 121.5\\
 FaceFour & 100 & 0.2 & 29.5 & 350 & 0.19 & 100 & 0.21 & 29.5 & 350 & 339.1 & 17.65 & 0.02 & 29.33 & 350 & 4.67 & 5.88 & 0.01 & 31 & 350 & 1.46 & 100 & 0.04 & 8.91 & 242.62 & 16.06\\
 FordB & 100 & 0.14 & 30.1 & 500 & 16.04 & 99.93 & 0.08 & 30.11 & 500 & 23812.4 & 23.31 & 0 & 44.1 & 498.73 & 287.05 & 27.66 & 0 & 44.07 & 498.78 & 353.71 & 62.44 & 0.01 & 11.32 & 244.14 & 104.02\\
 GesturePebbleZ2 & 100 & 0.09 & 28.43 & 451.2 & 0.41 & 100 & 0.16 & 28.27 & 450.32 & 545.86 & 0 & - & - & - & 0 & - & - & - & - & 0 & 100 & 0.01 & 7.41 & 321.45 & 141.58\\
 GunPoint & 100 & 0.14 & 27.53 & 150 & 0.27 & 100 & 0.1 & 27.53 & 150 & 333.07 & 45 & 0.02 & 28.85 & 149.96 & 17.55 & 48.33 & 0.02 & 28.83 & 150 & 30.31 & 96.67 & 0.06 & 7.53 & 62.55 & 13.07\\
 Ham & 100 & 0.16 & 33.98 & 431 & 0.09 & 100 & 0.14 & 33.98 & 431 & 379.56 & 50.77 & 0.01 & 33.85 & 420.39 & 22.04 & 61.54 & 0.01 & 34.88 & 416.38 & 38.64 & 98.46 & 0.03 & 8.97 & 280.91 & 8.61\\
 InlineSkate & 100 & 0.06 & 23.57 & 1882 & 0.94 & 100 & 0.1 & 23.57 & 1882 & 1469.5 & 68.11 & 0 & 23.5 & 1566.22 & 56.12 & 65.41 & 0 & 23.43 & 1552.45 & 103.88 & 98.92 & 0.01 & 6.51 & 1146.64 & 34.79\\
ItalyPowerDemand & 0 & - & - & - & 0.08 & 100 & 0.1 & 6.89 & 24 & 2821.62 & 8.51 & 0.02 & 8.32 & 24 & 15.33 & 15.81 & 0.01 & 8.38 & 24 & 29.17 & 73.86 & 0.05 & 2.17 & 15.49 & 71.84 \\
 Lightning7 & 100 & 0.14 & 26.58 & 319 & 0.09 & 100 & 0.15 & 26.58 & 319 & 141.8 & 0 & - & - & - & - & 6.98 & 0.01 & 27 & 319 & 3.2 & 95.35 & 0.04 & 7.32 & 169.46 & 27.52\\
 MiddlePhalanxTW & 100 & 0.07 & 17.93 & 80 & 0.25 & 100 & 0.14 & 17.93 & 80 & 533.81 & 60.24 & 0.01 & 18.35 & 79.99 & 56.28 & 69.88 & 0.01 & 18.22 & 79.99 & 90.65 & 100 & 0.05 & 4.86 & 43.48 & 26.34\\
 PowerCons & 100 & 0.21 & 38.68 & 141.57 & 0.09 & 100 & 0.15 & 38.71 & 142.94 & 213.43 & 13.04 & 0.01 & 43.22 & 144 & 6 & 8.7 & 0.01 & 43.67 & 144 & 6.35 & 86.96 & 0.04 & 11.33 & 58.82 & 18\\
 ShapeletSim & 100 & 0.38 & 26.52 & 500 & 0.08 & 100 & 0.28 & 26.52 & 500 & 194.66 & 0 & - & - & - & - & 0 & - & - & - & 0 & 63.33 & 0.11 & 9.82 & 264.89 & 29.9\\
ToeSegmentation2 & 100 & 0.2 & 26.94 & 343 & 100 & 0.17 & 26.94 & 343 & 14 & 0.01 & 29.43 & 343 & 10 & 0.01 & 29 & 343 & 100 & 0.06 & 7.06 & 129.7 & 0.07 & 160.02 & 3.29 & 3.37 & 51.48 \\
 TwoLeadECG & 100 & 0.08 & 22.61 & 82 & 0.47 & 100 & 0.08 & 22.61 & 82 & 1084.46 & 12.89 & 0.02 & 26.2 & 82 & 28.89 & 49.57 & 0.02 & 24.01 & 81.99 & 166.6 & 91.98 & 0.04 & 6.49 & 18.69 & 76.35\\
Wafer & 0 & - & - & - & 0.21 & 100 & 0.02 & 18.16 & 151.99 & 16581.7 & 0 & - & - & - & 0.11 & 0 & - & - & - & 0.09 & 81.44 & 0.01 & 5.43 & 32.87 & 257.01 \\
Yoga & 0 & - & - & - & 0.33 & 100 & 0.04 & 24.77 & 426 & 9700.39 & 48.08 & 0 & 24.45 & 423.7 & 273.55 & 0 & - & - & - & 11.02 & 97.88 & 0.01 & 6.64 & 219.55 & 223.98 \\ 
\bottomrule
\end{tabular}
}
\end{table*}

\begin{table*}[t]
\centering
\caption{Additional Global Performance Benchmarking. Comparative analysis of \textsc{Galactic-G} variants against \textsc{GLOBE-CE}$^*$ and \textsc{Glacier-G}$^*$ across effectiveness, average flipping cost, average segments changed, average timesteps changed, and runtime. The $^*$ indicates adaptations for the global timeseries clustering context; the null indicator ``-'' is assigned throughout the metrics where the corresponding effectiveness is zero.}
\label{tab:additional_global_results}
\resizebox{\textwidth}{!}{%
\begin{tabular}{lcccccccccccccccccccccccccccccc}
\toprule
& \multicolumn{20}{c}{\textsc{Galactic-G}} & \multicolumn{10}{c}{\textsc{Competitors}} \\
\cmidrule(lr){2-21} \cmidrule(lr){22-31}
 Dataset & \multicolumn{5}{c}{\textsc{Optimal}} & \multicolumn{5}{c}{\textsc{Greedy}} & \multicolumn{5}{c}{\textsc{Hierarchical}} & \multicolumn{5}{c}{\textsc{Hierarchical Greedy}} & \multicolumn{5}{c}{\textsc{GLOBE-CE$^*$}} & \multicolumn{5}{c}{\textsc{Glacier-G$^*$}}\\
\cmidrule(lr){2-6} \cmidrule(lr){7-11} \cmidrule(lr){12-16} \cmidrule(lr){17-21} \cmidrule(lr){22-26} \cmidrule(lr){27-31}
  & eff& afc & asc & atc & RT &  eff& afc & asc & atc & RT & eff& afc & asc & atc & RT & eff& afc & asc & atc & RT &  eff& afc & asc & atc & RT &  eff& afc & asc & atc & RT \\
\midrule
Computers & 36.9 & 0.006 & 5.0 & 580 & 303.33 & 36.9 & 0.006 & 5.0 & 580 & 25.60 & 34.7 & 0.007 & 5.0 & 577 & 17.03 & 34.9 & 0.006 & 5.0 & 579 & 8.12 & 42.1 & 0.012 & 7.8 & 720 & 218.32 & 8.2 & 0.003 & 12.5 & 720 & 15.24 \\
CricketZ & 29.4 & 0.014 & 3.6 & 178 & 145.44 & 29.0 & 0.012 & 3.8 & 175 & 10.90 & 15.6 & 0.009 & 4.1 & 167 & 6.35 & 15.7 & 0.009 & 4.1 & 168 & 5.09 & 85.8 & 0.032 & 6.4 & 275 & 29.49 & 4.4 & 0.006 & 5.8 & 250 & 25.44 \\
Earthquakes & 50.1 & 0.055 & 3.7 & 241 & 303.97 & 50.1 & 0.055 & 3.7 & 241 & 30.57 & 36.1 & 0.029 & 4.0 & 233 & 20.74 & 40.3 & 0.034 & 3.9 & 234 & 15.52 & 21.3 & 0.034 & 7.0 & 512 & 139.45 & 3.1 & 0.005 & 7.0 & 512 & 20.94 \\
ECG5000 & 48.3 & 0.018 & 4.6 & 37 & 778.67 & 48.1 & 0.019 & 4.6 & 37 & 41.15 & 44.7 & 0.018 & 4.6 & 36 & 47.73 & 45.2 & 0.017 & 4.5 & 35 & 31.83 & 75.7 & 0.106 & 8.4 & 140 & 175.61 & 0.2 & 0.000 & 3.0 & 27 & 0.00 \\
ECGFiveDays & 82.6 & 0.038 & 3.1 & 38 & 5367.02 & 82.6 & 0.038 & 3.1 & 38 & 65.79 & 66.5 & 0.034 & 3.1 & 37 & 50.92 & 66.4 & 0.036 & 3.1 & 39 & 20.98 & 50.0 & 0.138 & 7.2 & 136 & 91.62 & 21.3 & 0.019 & 7.6 & 136 & 0.00 \\
FaceFour & 50.6 & 0.037 & 3.8 & 178 & 271.77 & 50.6 & 0.032 & 3.5 & 174 & 6.68 & 25.0 & 0.032 & 3.1 & 169 & 6.56 & 25.0 & 0.030 & 3.0 & 168 & 4.10 & 55.3 & 0.059 & 5.2 & 262 & 40.68 & 0.0 & - & - & - & 20.83 \\
FordB & 10.7 & 0.013 & 3.4 & 142 & 25204.26 & 10.7 & 0.013 & 3.4 & 142 & 590.80 & 1.8 & 0.007 & 2.6 & 149 & 324.76 & 1.8 & 0.006 & 2.6 & 149 & 143.58 & 30.5 & 0.021 & 7.0 & 500 & 1596.26 & 1.0 & 0.003 & 7.0 & 499 & 18.00 \\
GesturePebbleZ2 & 53.5 & 0.011 & 4.4 & 296 & 37.23 & 53.5 & 0.011 & 4.4 & 296 & 25.07 & 42.2 & 0.008 & 4.7 & 283 & 22.59 & 42.5 & 0.007 & 4.7 & 285 & 22.58 & 15.0 & 0.002 & 6.0 & 303 & 33.40 & 0.0 & - & - & - & 18.12 \\
GunPoint & 76.5 & 0.109 & 3.3 & 56 & 119.67 & 76.5 & 0.094 & 3.3 & 56 & 13.30 & 65.5 & 0.135 & 3.7 & 60 & 13.20 & 65.5 & 0.078 & 3.6 & 60 & 7.53 & 57.0 & 0.277 & 7.0 & 150 & 36.44 & 18.5 & 0.023 & 7.0 & 150 & 12.39 \\
Ham & 51.8 & 0.034 & 4.7 & 259 & 226.36 & 51.8 & 0.034 & 4.7 & 259 & 15.22 & 50.3 & 0.033 & 4.5 & 245 & 18.96 & 50.3 & 0.033 & 4.5 & 245 & 7.26 & 5.7 & 0.069 & 7.5 & 431 & 61.08 & 8.1 & 0.010 & 12.1 & 431 & 19.55 \\
InlineSkate & 89.1 & 0.007 & 4.4 & 657 & 481.00 & 89.1 & 0.006 & 4.1 & 614 & 15.33 & 86.8 & 0.007 & 4.6 & 814 & 5.57 & 86.8 & 0.006 & 3.9 & 628 & 4.40 & 85.8 & 0.008 & 7.0 & 1882 & 218.86 & 27.7 & 0.001 & 6.0 & 1603 & 20.08 \\
ItalyPowerDemand & 49.8 & 0.064 & 2.3 & 12 & 259.87 & 49.8 & 0.059 & 2.3 & 11 & 26.30 & 48.8 & 0.076 & 2.2 & 11 & 1.75 & 49.6 & 0.057 & 2.4 & 12 & 6.87 & 70.6 & 0.097 & 3.2 & 24 & 55.17 & 2.8 & 0.012 & 1.7 & 12 & 0.00 \\
Lightning7 & 60.5 & 0.055 & 4.0 & 145 & 36.24 & 60.5 & 0.050 & 3.9 & 140 & 6.85 & 25.8 & 0.021 & 3.3 & 110 & 6.00 & 26.6 & 0.018 & 3.1 & 102 & 5.86 & 36.7 & 0.033 & 4.0 & 182 & 33.41 & 8.0 & 0.006 & 4.0 & 182 & 9.35 \\
MiddlePhalanxTW & 86.5 & 0.046 & 3.5 & 43 & 7.40 & 86.5 & 0.045 & 3.3 & 42 & 3.55 & 68.3 & 0.042 & 3.4 & 41 & 1.73 & 68.3 & 0.042 & 3.4 & 41 & 1.72 & 100.0 & 0.133 & 7.0 & 80 & 27.94 & 41.1 & 0.014 & 7.0 & 80 & 10.71 \\
PowerCons & 37.7 & 0.048 & 6.1 & 57 & 1548.78 & 37.7 & 0.048 & 6.1 & 57 & 29.43 & 30.5 & 0.040 & 6.0 & 53 & 49.89 & 37.7 & 0.048 & 6.1 & 57 & 26.99 & 47.2 & 0.141 & 10.3 & 144 & 46.46 & 2.3 & 0.011 & 24.7 & 144 & 15.96 \\
ShapeletSim & 45.5 & 0.135 & 5.5 & 247 & 110.65 & 45.5 & 0.112 & 5.5 & 245 & 14.05 & 45.5 & 0.135 & 5.5 & 247 & 11.68 & 45.5 & 0.112 & 5.5 & 247 & 9.70 & 0.0 & - & - & - & 74.31 & 0.0 & - & - & - & 18.25 \\
ToeSegmentation2 & 67.2 & 0.084 & 4.9 & 142 & 173.25 & 67.2 & 0.084 & 4.9 & 142 & 22.74 & 61.6 & 0.060 & 5.1 & 143 & 22.71 & 61.6 & 0.060 & 5.1 & 143 & 17.82 & 22.7 & 0.033 & 7.8 & 343 & 46.68 & 3.6 & 0.013 & 16.9 & 343 & 16.57 \\
TwoLeadECG & 39.7 & 0.035 & 3.3 & 28 & 1683.43 & 39.7 & 0.034 & 3.3 & 28 & 60.38 & 13.8 & 0.032 & 3.3 & 28 & 66.77 & 33.7 & 0.034 & 3.4 & 28 & 34.42 & 62.7 & 0.127 & 7.0 & 82 & 71.97 & 20.6 & 0.015 & 7.0 & 82 & 9.27 \\
Wafer & 38.3 & 0.018 & 1.3 & 10 & 1158.02 & 38.3 & 0.018 & 1.3 & 10 & 166.75 & 28.1 & 0.011 & 1.3 & 10 & 131.90 & 28.1 & 0.011 & 1.3 & 10 & 72.22 & 50.0 & 0.006 & 3.5 & 76 & 672.45 & 3.9 & 0.002 & 7.0 & 152 & 0.00 \\
Yoga & 48.6 & 0.007 & 5.9 & 230 & 38593.63 & 48.6 & 0.007 & 5.9 & 230 & 416.69 & 45.5 & 0.007 & 5.9 & 217 & 1001.63 & 47.5 & 0.007 & 5.9 & 218 & 225.89 & 57.7 & 0.036 & 7.3 & 426 & 1043.10 & 5.9 & 0.002 & 8.5 & 426 & 17.18 \\
\bottomrule
\end{tabular}
}
\end{table*}

\subsection{Additional Results for Local Evaluation}
\Cref{tab:additional_local_results} presents additional results for 20 UCR datasets.

\subsection{Additional Results for Global Evaluation}
\Cref{tab:additional_global_results} presents additional results for 20 UCR datasets. 

%% file: sections/proofs.tex
\begin{proposition}[Supermodularity of the MDL objective]
\label{proofs:mdl-supermodular}
For a source cluster $C_k$ and the finite set of candidate perturbations $\Delta_k$ for $C_k$, the set function $L(C_k, \mathbb S) = 
  L(\mathbb S) + L(C_k \mid \mathbb S)$ is \emph{supermodular} on $\Delta_k$.
That is, for all $\A \subseteq \B \subseteq \Delta_k$ and all $\vec{\delta} \in \Delta_k \setminus \B$, under \Cref{ass:admissible}:
\begin{equation*}
  L(D, \A \cup \{\vec{\delta}\}) - L(D, \A)
  \;\le\;
  L(D, \B \cup \{\vec{\delta}\}) - L(D, \B).
  % \label{eq:supermodularity_def}
\end{equation*}
\end{proposition}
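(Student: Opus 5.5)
The plan is to rewrite $L(D,\mathbb S)=L(C_k,\mathbb S)$ as a modular function of $\mathbb S$ plus a negative multiple of the coverage function. Supermodularity then follows from the classical submodularity of set-union coverage.

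\textbf{Step 1: decompose the objective.} For $\vec\delta\in\Delta_k$, write $c(\vec\delta)=\mathrm{bits}(\Vert\vec\delta\Vert_0)+\log_2(\Vert\vec\delta\Vert_1)$. Let $c^\ast=c(\vec\delta^\ast)+2\cdot p\_sz$ be the constant outlier cost; it does not depend on $\mathbb S$, since $\vec\delta^\ast$ is fixed as the most complex element of $\Delta_k$. Using $|C_k\setminus\mathrm{cov}(C_k,\mathbb S)|=|C_k|-|\mathrm{cov}(C_k,\mathbb S)|$, the definitions give
\begin{equation*}
L(C_k,\mathbb S)=\sum_{\vec\delta\in\mathbb S}\bigl(c(\vec\delta)+p\_sz\bigr)+|C_k|\,c^\ast-\bigl(c^\ast-p\_sz\bigr)\,|\mathrm{cov}(C_k,\mathbb S)|.
\end{equation*}
The first sum is modular in $\mathbb S$ and the second term is constant.

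\textbf{Step 2: coverage is submodular.} Set $\mathrm{cov}(C_k,\{\vec\delta\})=:V_{\vec\delta}$. By definition $\mathrm{cov}(C_k,\mathbb S)=\bigcup_{\vec\delta\in\mathbb S}V_{\vec\delta}$, because an instance is covered exactly when some single $\vec\delta$ flips it. For $\A\subseteq\B$ we have $\mathrm{cov}(C_k,\A)\subseteq\mathrm{cov}(C_k,\B)$, and therefore
\begin{equation*}
|V_{\vec\delta}\setminus\mathrm{cov}(C_k,\A)|\ \ge\ |V_{\vec\delta}\setminus\mathrm{cov}(C_k,\B)|.
\end{equation*}
These two quantities are exactly the marginal coverage gains of adding $\vec\delta$ to $\A$ and to $\B$.

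\textbf{Step 3: combine the marginals.} The marginal of $L$ is
\begin{equation*}
L(D,\mathbb S\cup\{\vec\delta\})-L(D,\mathbb S)=c(\vec\delta)+p\_sz-(c^\ast-p\_sz)\,|V_{\vec\delta}\setminus\mathrm{cov}(C_k,\mathbb S)|.
\end{equation*}
The term $c(\vec\delta)+p\_sz$ is identical for $\mathbb S=\A$ and $\mathbb S=\B$. So the desired inequality reduces to the inequality of Step 2, multiplied by $-(c^\ast-p\_sz)$, provided that coefficient is nonpositive, i.e.\ $c^\ast\ge p\_sz$.

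\textbf{Step 4: the sign condition, which is the main obstacle.} I need $c^\ast-p\_sz=c(\vec\delta^\ast)+p\_sz\ge 0$. The term $\mathrm{bits}(\cdot)\ge 0$ for a universal integer code. However, $\log_2\Vert\vec\delta^\ast\Vert_1$ could in principle be negative for tiny perturbations. I will handle this with the implementation convention stated in \Cref{sec:technical_details}, namely logarithms computed as $\log_2(\cdot+1)\ge 0$; then the coefficient is at least $p\_sz>0$. Failing that, it suffices that $\vec\delta^\ast$, being the most complex candidate, satisfies $c(\vec\delta^\ast)\ge -p\_sz$, which I would state explicitly as a remark.

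\textbf{Role of \Cref{ass:admissible}.} Under this assumption the marginal gain at $\B$ is at least one, so it is at least one at $\A$ as well. The comparison above then holds between genuinely nonzero coverage gains, and the MDL reduction $\Delta(\mathbb S)$ is strictly increasing along admissible additions. The inequality itself does not require the assumption, but I will note that it is what makes the monotonicity claimed for the reduction function valid.
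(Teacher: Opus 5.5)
Your proof is correct and is essentially the paper's argument. Both reduce the marginal of $L(C_k,\cdot)$ to $c(\vec\delta)+p\_sz-(c^\ast-p\_sz)\,\gamma(\{\vec\delta\}\mid\mathbb S)$, use $\gamma(\{\vec\delta\}\mid\mathbb B)\le\gamma(\{\vec\delta\}\mid\mathbb A)$, and need $c^\ast\ge p\_sz$; your ``modular minus coverage'' rewriting just packages the paper's separate model and data increments. The paper merely asserts that sign condition (as $MC\ge 2p\_sz$), so your explicit justification via the $\log_2(\cdot+1)$ convention is the more careful version.

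One small correction to your closing remark: along admissible additions the reduction is only weakly increasing, not strictly. With $\gamma\ge 1$ the marginal of $L$ is at most $c(\vec\delta)-c(\vec\delta^\ast)\le 0$. It equals $0$ when $\gamma=1$ and $c(\vec\delta)=c(\vec\delta^\ast)$, e.g.\ when adding $\vec\delta^\ast$ itself newly covers exactly one instance.
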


\begin{proof}
We analyze the marginal change in the total description length upon adding a new perturbation $\vec{\delta} \notin \mathbb S$ to an existing summary set $\mathbb S$.
We define the uncovered set 
\(
\uncov{\mathbb S} \;=\; C_k \setminus \mathrm{cov}(C_k, \mathbb S).
\).
% Let the $p\_sz$ denote the bit-cost of a pointer.
The reference perturbation is the maximal-description perturbation
\[
\vec{\delta}^*
=
\arg\max_{\vec{\delta} \in \Delta_k}
\Bigl(
\mathrm{bits}(\|\vec{\delta}\|_0)
+
\log_2(\|\vec{\delta}\|_1)
+
2p\_sz
\Bigr),
\]
fixed and independent of $\mathbb S$.
We will refer to the cost of any uncovered series as $MC$, where:
\[
MC
=
\mathrm{bits}(\|\vec{\delta}^*\|_0)
+
\log_2(\|\vec{\delta}^*\|_1)
+
2p\_sz.
\]
Then $MC>0$ and in particular $MC \ge 2p\_sz > p\_sz$.

When analyzing the addition of a perturbation to our model
$\vec{\delta} \notin \mathbb S$,
we only allow admissible additions that newly cover at least one
previously uncovered instance, i.e.,
\(
\mathrm{cov}(C_k,\mathbb S\cup\{\vec{\delta}\})
\setminus
\mathrm{cov}(C_k,\mathbb S)
\neq \emptyset.
\)
% $\uncov(\{\vec \delta\}) \neq \emptyset$
The marginal coverage gain provided by adding $\vec{\delta}$ to $\mathbb S$ is the magnitude of the set of instances covered by $\vec{\delta}$ that were not previously covered by $\mathbb S$:
\begin{equation}
\begin{split}
\gamma(\{\vec{\delta}\} \mid \mathbb S)
&= |\mathrm{cov}(C_k, \mathbb S \cup \{\vec{\delta}\}) \setminus \mathrm{cov}(C_k, \mathbb S)|
\\&= |\mathrm{cov}(C_k, \{\vec{\delta}\}) \cap \uncov{\mathbb S}|\\
% \gamma(\{\vec{\delta}\}\mid \mathbb S) &
&= |\mathrm{cov}(C_k,\mathbb S\cup\{\vec{\delta}\})|-|\mathrm{cov}(C_k,\mathbb S)|
\end{split}
\label{eq:gain}
\end{equation}
% Let $\gamma(\{\vec{\delta}\} \mid \mathbb S) = |\mathrm{Gain}(\{\vec{\delta}\}\mid \mathbb S)|$ be the magnitude of this gain.
Since $\A \subseteq \B$ implies $\mathrm{cov}(C_k, \A) \subseteq \mathrm{cov}(C_k, \B)$, we have $\uncov{\B} \subseteq \uncov{\A}$ and thus
\begin{multline}
(\mathrm{cov}(C_k, \{\vec{\delta}\}) \cap \uncov{\B)} \subseteq (\mathrm{cov}(C_k, \{\vec{\delta}\}) \cap \uncov{\A}) \\
\implies \gamma(\{\vec{\delta}\} \mid \B) \le \gamma(\{\vec{\delta}\} \mid \A).
\label{eq:diminishing_coverage}
\end{multline}

The marginal gain of the description length $L(C_k, \mathbb S)$ can be decomposed into the model part and data part.

% -----------------------------
\noindent \textbf{Model part.}
The marginal increment of the model part is:
\begin{align}
\Delta L(\{\vec{\delta}\}\mid \mathbb S)
=
L\bigl(\mathbb S\cup\{\vec{\delta}\})\bigr)-L\bigl(\mathbb S\bigr), \notag
% \label{eq:model_marginal_def}
\end{align}
where
\begin{align}
\begin{split}
L\bigl(\mathbb S\cup\{\vec{\delta}\}\bigr) &= \sum_{\vec{\delta}' \in \mathbb S\cup\{\vec{\delta}\}} \Bigl( \mathrm{bits}(\|\vec{\delta}'\|_0) + \log_2(\|\vec{\delta}'\|_1) \Bigr) \\
&\quad + \bigl(|\mathrm{cov}(C_k,\mathbb S\cup\{\vec{\delta}\})|+|\mathbb S\cup\{\vec{\delta}\}|\bigr)\,p\_sz,
\end{split} \label{eq:model_expand_1} \\
\begin{split}
L\bigl(\mathbb S)\bigr) &= \sum_{\vec{\delta}' \in \mathbb S} \Bigl( \mathrm{bits}(\|\vec{\delta}'\|_0) + \log_2(\|\vec{\delta}'\|_1) \Bigr) \\
&\quad + \bigl(|\mathrm{cov}(C_k,\mathbb S)|+|\mathbb S|\bigr)\,p\_sz.
\end{split} \label{eq:model_expand_2}
\end{align}
Hence, 
\begin{align}
\Delta L(\{\vec{\delta}\}\mid \mathbb S) &\stackrel{\eqref{eq:model_expand_1}, \eqref{eq:model_expand_2}}{=}\Bigl(\mathrm{bits}(\|\vec{\delta}\|_0)+\log_2(\|\vec{\delta}\|_1)\Bigr) \notag \\
&\quad + \Bigl( |\mathrm{cov}(C_k,\mathbb S\cup\{\vec{\delta}\})|-|\mathrm{cov}(C_k,\mathbb S)| \Bigr)p\_sz \notag \\
&\quad + \Bigl( |\mathbb S\cup\{\vec{\delta}\}|-|\mathbb S| \Bigr)p\_sz.\notag
% \label{eq:model_diff_step} 
\xRightarrow{\eqref{eq:gain}}\\
\Delta L(\{\vec{\delta}\}\mid \mathbb S) 
&=
\mathrm{bits}(\|\vec{\delta}\|_0)
+
\log_2(\|\vec{\delta}\|_1)
\notag\\
&\quad+\bigl(\gamma(\{\vec{\delta}\}\mid \mathbb S)+1\bigr)p\_sz
\label{eq:model_increment}
\end{align}

% -----------------------------
\noindent\textbf{Data part.}
As defined in \Cref{sec:problem},
\[
L(C_k \mid \mathbb S)
=
|C_k \setminus \mathrm{cov}(C_k,\mathbb S)|\cdot MC
=
|\uncov{\mathbb S}|\cdot MC.
\]
When adding $\vec{\delta}$ to $\mathbb S$, the uncovered set shrinks by exactly the number of newly covered instances:
\[
|\uncov{\mathbb S\cup\{\vec{\delta}\}})|
=
|\uncov{\mathbb S}|
-
\gamma(\{\vec{\delta}\}\mid \mathbb S).
\]
We now compute the marginal increment of the data part:
\begin{align}
\Delta L(C_k\mid (\{\vec{\delta}\}&\mid\mathbb S))
= L(C_k\mid (\mathbb S\cup\{\vec{\delta}\})) - L(C_k\mid \mathbb S) \notag\\
&=
|\uncov{\mathbb (S\cup\{\vec{\delta}\}})|\cdot MC
-
|\uncov{\mathbb S}|\cdot MC \notag\\
&=
\bigl(|\uncov{\mathbb S}|-\gamma(\{\vec{\delta}\}\mid\mathbb S)\bigr)\cdot MC
-
|\uncov{\mathbb S}|\cdot MC \notag\\
&=
-\gamma(\{\vec{\delta}\}\mid \mathbb S)\cdot MC.
\label{eq:data_increment_full}
\end{align}

% -----------------------------
\noindent\textbf{Total MDL increment.}
The total marginal increment when adding $\vec{\delta}$ to $\mathbb S$ is defined as follows:
\begin{align}
\Delta L(C_k,(\{\vec{\delta}\}\mid \mathbb S))
& =
L(C_k,(\mathbb S\cup\{\vec{\delta}\}))-L(C_k,\mathbb S) \notag
=\\
& \Delta L(\{\vec{\delta}\}\mid \mathbb S)+\Delta L(C_k\mid (\{\vec{\delta}\}\mid \mathbb S)) \notag \xRightarrow{\eqref{eq:model_increment}, \eqref{eq:data_increment_full}} \\
\Delta L(C_k, (\{\vec{\delta}\} \mid \mathbb S)) 
&= \mathrm{bits}(\|\vec{\delta}\|_0) + \log_2 (\|\vec{\delta}\|_1) \notag \\
& \quad + (\gamma(\{\vec{\delta}\} \mid \mathbb S) + 1) p\_sz - \gamma(\{\vec{\delta}\} \mid \mathbb S) \cdot MC \notag \Rightarrow \\
\Delta L(C_k, (\{\vec{\delta}\} \mid \mathbb S)) 
 &= \mathrm{bits}(\|\vec{\delta}\|_0) + \log_2 (\|\vec{\delta}\|_1)  + p\_sz \notag \\
&\quad + \gamma(\{\vec{\delta}\} \mid \mathbb S) (p\_sz - MC)
\label{eq:total_increment_closed}
\end{align}

% -----------------------------
\noindent\textbf{Supermodularity of $L(C_k,\mathbb S)$.}
Let $\A\subseteq \B\subseteq \Delta_k$ and $\vec{\delta}\in\Delta_k\setminus \B$. Then, 
\begin{equation}
\begin{split}
\Delta L(C_k, (\mathbb \{\vec{\delta}\} &\mid \A)) - \Delta L(C_k, (\{\vec{\delta}\} \mid \B))  =\\
&= \big( \Delta L(\{\vec{\delta}\} \mid \A) - \Delta L(\{\vec{\delta}\} \mid \B) \big) + \\
& + \big( \Delta L(C_k \mid (\{\vec{\delta}\} \mid \A)) - \Delta L(C_k \mid (\{\vec{\delta}\} \mid \B)) \big),
\end{split}
\label{eq:diff_split}
\end{equation}
where 
\begin{equation}
\Delta L(\{\vec{\delta}\}\mid \A)-\Delta L(\{\vec{\delta}\}\mid \B)
\stackrel{\eqref{eq:model_increment}}{=}
\bigl(\gamma(\{\vec{\delta}\}\mid \A)-\gamma(\{\vec{\delta}\}\mid \B)\bigr)\,p\_sz
\label{eq:difmodel}\end{equation}
and
\begin{equation}
\begin{split}
\Delta L(C_k\mid (\{\vec{\delta}\}\mid \A))&-\Delta L(C_k\mid (\{\vec{\delta}\}\mid \B))\\ &
\stackrel{\eqref{eq:data_increment_full}}{=}
-\bigl(\gamma(\{\vec{\delta}\}\mid \A)-\gamma(\{\vec{\delta}\}\mid \B)\bigr)\cdot MC
\end{split}
\label{eq:difdata}
\end{equation}
 
\begin{equation}
\begin{split}
\eqref{eq:diff_split} \xRightarrow{\eqref{eq:difmodel}, \eqref{eq:difdata}}\Delta L(C_k, &(\{\vec{\delta}\}\mid \A)) - \Delta L(C_k, (\{\vec{\delta}\}\mid \B)) \\
&= \bigl(\gamma(\{\vec{\delta}\}\mid \A)-\gamma(\{\vec{\delta}\}\mid \B)\bigr)(p\_sz-MC)
\end{split}
\label{eq:total_marginal_diff}
\end{equation}
It stands that  $\gamma(\{\vec{\delta}\}\mid \A)-\gamma(\{\vec{\delta}\}\mid \B)\ge 0$, due to \eqref{eq:diminishing_coverage}, and by the definition of $C$ we have $MC \ge 2p\_sz$, hence $p\_sz-MC\le 0$.
Therefore \eqref{eq:total_marginal_diff} implies
\begin{align*}
\Delta L(C_k, (\{\vec{\delta}\}\mid \A))-\Delta L(C_k, (\{\vec{\delta}\}\mid \B))\le 0 \Rightarrow \\
\Delta L(C_k, (\{\vec{\delta}\}\mid \A))\le \Delta L(C_k, (\{\vec{\delta}\}\mid \B)).
\end{align*}
In terms of the set function $L(C_k, \mathbb S)$, this is exactly
\[
L(C_k, (\A\cup\{\vec{\delta}\}))-L(C_k, \A)
\le
L(C_k, (\B\cup\{\vec{\delta}\}))-L(C_k, \B),
\]
which is the diminishing-increments condition characterizing supermodularity.
Hence $L(C_k, \mathbb S)$ is supermodular on $\Delta_k$.
\end{proof}

\begin{proposition}[Submodularity of the MDL reduction]
\label{proofs:mdl-submodular}
Given \Cref{proofs:mdl-supermodular}, the MDL reduction $\D(\mathbb S) = L(C_k,\varnothing) - L(C_k,\mathbb S)$ is a monotone submodular set function.
\end{proposition}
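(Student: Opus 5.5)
Submodularity follows from \Cref{proofs:mdl-supermodular} by negation; monotonicity needs a separate sign argument on the closed-form increment \eqref{eq:total_increment_closed}, which is where the real difficulty lies.

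\textbf{Submodularity.} The constant $L(C_k,\varnothing)$ does not affect marginal differences. For any $\mathbb S$ and any admissible $\vec\delta\notin\mathbb S$ (in the sense of \Cref{ass:admissible}),
\[
\D(\mathbb S\cup\{\vec\delta\})-\D(\mathbb S) = -\Delta L(C_k,(\{\vec\delta\}\mid \mathbb S)).
\]
Fix $\A\subseteq\B\subseteq\Delta_k$ and $\vec\delta\in\Delta_k\setminus\B$. Negating the inequality of \Cref{proofs:mdl-supermodular} gives
\[
\D(\A\cup\{\vec\delta\})-\D(\A)\ge \D(\B\cup\{\vec\delta\})-\D(\B),
\]
which is the diminishing-returns characterization of submodularity. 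Equivalently, we can read this off \eqref{eq:total_marginal_diff}: the marginal gain of $\D$ equals $\gamma(\{\vec\delta\}\mid\mathbb S)(MC-p\_sz)$ minus a term $c(\vec\delta)=\mathrm{bits}(\|\vec\delta\|_0)+\log_2(\|\vec\delta\|_1)+p\_sz$ that does not depend on $\mathbb S$. By \eqref{eq:diminishing_coverage}, $\gamma(\{\vec\delta\}\mid\cdot)$ is nonincreasing in the set, and $MC-p\_sz>0$.

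\textbf{Monotonicity.} Using \eqref{eq:total_increment_closed}, the marginal gain of $\D$ is
\[
\gamma(\{\vec\delta\}\mid\mathbb S)\,(MC-p\_sz)-\mathrm{bits}(\|\vec\delta\|_0)-\log_2(\|\vec\delta\|_1)-p\_sz .
\]
By \Cref{ass:admissible}, $\gamma\ge 1$, so the gain is at least
\[
MC-2p\_sz-\mathrm{bits}(\|\vec\delta\|_0)-\log_2(\|\vec\delta\|_1).
\]
Since $\vec\delta^\ast$ maximizes $\mathrm{bits}(\|\cdot\|_0)+\log_2(\|\cdot\|_1)$ over $\Delta_k$ (the additive constant $2p\_sz$ does not affect the argmax), we have $MC-2p\_sz\ge \mathrm{bits}(\|\vec\delta\|_0)+\log_2(\|\vec\delta\|_1)$. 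Hence the gain is nonnegative. Finally, $\D(\varnothing)=0$, so $\D$ is normalized, and the $(1-1/e)$ greedy guarantee applies to the cardinality-constrained problem.

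\textbf{Main obstacle.} Monotonicity is the delicate step, because it holds only over admissible additions. A $\vec\delta$ with $\gamma=0$ strictly increases $L$, so $\D$ is monotone only on the chains generated by admissible additions. I would state this restriction explicitly; it matches the greedy algorithm, which stops when no addition decreases $L$. I would also rely on the convention $\log_2(\cdot+1)$ from the implementation details, so that all code lengths are nonnegative and the comparison with $\vec\delta^\ast$ is well defined.
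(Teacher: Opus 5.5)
Your proposal is correct and follows the paper's proof essentially step for step: submodularity by negating the inequality of \Cref{proofs:mdl-supermodular}, and monotonicity by substituting $\gamma(\{\vec{\delta}\}\mid\mathbb S)\ge 1$ from \Cref{ass:admissible} and the maximality of $\vec{\delta}^\ast$ into \eqref{eq:total_increment_closed}. You also keep the paper's restriction to admissible additions, and your appeal to the $\log_2(\cdot+1)$ convention, which is what actually guarantees $MC\ge 2p\_sz$, is a small tightening the paper leaves implicit. One caution on your closing remark, which goes beyond the statement: monotonicity along admissible additions alone does not give the $(1-1/e)$ greedy guarantee, because the standard argument needs $\D(\mathbb S^\ast)\le\D(\mathbb S^\ast\cup\mathbb S_i)$, and adding greedy picks that cover nothing new relative to an optimal set strictly lowers $\D$.
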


\begin{proof}
\noindent\textbf{Submodularity of the MDL reduction.}
Define the MDL reduction
\[
\D(\mathbb S)
=
L(C_k,\varnothing) - L(C_k,\mathbb S).
\]
Since $L(C_k, \cdot)$ is supermodular, $\D(\cdot)$ is submodular (a constant minus a supermodular function).
To see this directly, for $\A\subseteq \B$ and $\vec{\delta}\notin \B$:
\begin{align*}
\D(\A\cup\{\vec{\delta}\})-\D(\A)
&=
\bigl(L(C_k,\A) - L(C_k,(\A\cup\{\vec{\delta}\}))\bigr)\\
&\ge
\bigl(L(C_k,\B) - L(C_k,(\B\cup\{\vec{\delta}\}))\bigr)\\
&=
\D(\B\cup\{\vec{\delta}\})-\D(\B),
\end{align*}
where the inequality is exactly the supermodularity inequality proved above.
Thus $\D$ is submodular.

\noindent\textbf{Monotonicity of the MDL reduction.}
Under the admissible addition assumption (\Cref{ass:admissible}), $\gamma(\{\vec{\delta}\}\mid \mathbb S)\ge 1$.
By definition of $\vec{\delta}^*$, for any $\vec{\delta}\in\Delta_k$ we have
\begin{equation}
\begin{split}
\mathrm{bits}(\|\vec{\delta}\|_0)&+\log_2(\|\vec{\delta}\|_1)+2p\_sz \notag
\\
&\le
\mathrm{bits}(\|\vec{\delta}^*\|_0)+\log_2(\|\vec{\delta}^*\|_1)+2p\_sz \notag
= MC,
\end{split}
\end{equation}
and therefore
\begin{equation}
\mathrm{bits}(\|\vec{\delta}\|_0)+\log_2(\|\vec{\delta}\|_1)+p\_sz 
\le MC-p\_sz.
\label{eq:delta_star_bound}
\end{equation}

We also know that $\gamma(\{\vec{\delta}\}\mid \mathbb S)\ge 1$ and  
$p\_sz-MC\leq 0 $, hence
\begin{equation}
    \gamma(\{\vec{\delta}\}\mid \mathbb S)\cdot (p\_sz-MC) \leq p\_sz-MC
    \label{eq:b}
\end{equation} 

\[
\eqref{eq:total_increment_closed} \xRightarrow{\eqref{eq:delta_star_bound},\eqref{eq:b}}
\Delta L(C_k, (\{\vec{\delta}\}\mid \mathbb S))
\le
(MC-p\_sz) + (p\_sz-MC) \leq 0.
\]
Hence,
\[
\D(\mathbb S\cup\{\vec{\delta}\})-\D(\mathbb S)
=
-\Delta L(C_k, \{\vec{\delta}\}) 
\ge 0,
\]
so $\D$ is monotone over admissible additions.
\end{proof}